\documentclass{article}

% if you need to pass options to natbib, use, e.g.:
%     \PassOptionsToPackage{numbers, compress}{natbib}
% before loading neurips_2019

% ready for submission
% \usepackage{neurips_2019}

% to compile a preprint version, e.g., for submission to arXiv, add add the
% [preprint] option:
    % \usepackage[preprint]{neurips_2019}

% to compile a camera-ready version, add the [final] option, e.g.:
\usepackage[final]{neurips_2019}
%\usepackage{neurips_2019}
% to avoid loading the natbib package, add option nonatbib:
%     \usepackage[nonatbib]{neurips_2019}

\usepackage[utf8]{inputenc} % allow utf-8 input
\usepackage[T1]{fontenc}    % use 8-bit T1 fonts
\usepackage{hyperref}       % hyperlinks
\usepackage{url}            % simple URL typesetting
\usepackage{booktabs}       % professional-quality tables
\usepackage{amsfonts}       % blackboard math symbols
\usepackage{nicefrac}       % compact symbols for 1/2, etc.
\usepackage{microtype}      % microtypography

\usepackage{graphicx}
\usepackage{subfigure}
\usepackage{tabularx}
\usepackage{algorithm}
\usepackage{algpseudocode}
\usepackage{caption}
% \usepackage{subcaption}
% \usepackage{multirow}
% % \usepackage{cleveref}
% \usepackage[inline]{enumitem}

% \usepackage[usenames, dvipsnames]{color}

\usepackage{amssymb,amsmath,amsthm}
\usepackage{bm}
% \setlength{\dbltextfloatsep}{0.5pt}
% \setlength{\textfloatsep}{0.5pt}
% \setlength{\floatsep}{0.5pt}
% \makeatletter
% \g@addto@macro\normalsize{%
% \setlength\abovedisplayskip{1pt}
% \setlength\belowdisplayskip{1pt}
% \setlength\abovedisplayshortskip{1pt}
% \setlength\belowdisplayshortskip{1pt}
% }
% \makeatother
%%%%% NEW MATH DEFINITIONS %%%%%

\usepackage{amsmath,amsfonts,bm}

% Mark sections of captions for referring to divisions of figures

% Highlight a newly defined term

% Figure reference, lower-case.

% Figure reference, capital. For start of sentence

% Section reference, lower-case.

% Section reference, capital.

% Reference to two sections.

% Reference to three sections.

% Reference to an equation, lower-case.
\def\eqref#1{equation~\ref{#1}}
% Reference to an equation, upper case

% A raw reference to an equation---avoid using if possible

% Reference to a chapter, lower-case.

% Reference to an equation, upper case.

% Reference to a range of chapters

% Reference to an algorithm, lower-case.

% Reference to an algorithm, upper case.

% Reference to a part, lower case

% Reference to a part, upper case

\def\1{\bm{1}}

% Random variables

% rm is already a command, just don't name any random variables m

% Random vectors

% Elements of random vectors

% Random matrices

% Elements of random matrices

% Vectors

% Elements of vectors

% Matrix

% Tensor
\DeclareMathAlphabet{\mathsfit}{\encodingdefault}{\sfdefault}{m}{sl}
\SetMathAlphabet{\mathsfit}{bold}{\encodingdefault}{\sfdefault}{bx}{n}

% Graph

% Sets

% Don't use a set called E, because this would be the same as our symbol
% for expectation.

% Entries of a matrix

% entries of a tensor
% Same font as tensor, without \bm wrapper

% The true underlying data generating distribution

% The empirical distribution defined by the training set

% The model distribution

% Stochastic autoencoder distributions

 % Laplace distribution

\newcommand{\E}{\mathbb{E}}

% Wolfram Mathworld says $L^2$ is for function spaces and $\ell^2$ is for vectors
% But then they seem to use $L^2$ for vectors throughout the site, and so does
% wikipedia.

 % See usage in notation.tex. Chosen to match Daphne's book.

\DeclareMathOperator*{\argmax}{arg\,max}

\DeclareMathOperator{\sign}{sign}

\DeclareMathOperator*{\where}{where}

\DeclareMathOperator*{\y}{y}

\DeclareMathOperator*{\diag}{diag}

\newtheorem{theorem}{Theorem}
\newtheorem{corollary}{Corollary}
\newtheorem{lemma}{Lemma}
\newtheorem{proposition}{Proposition}
\newtheorem{assumption}{Assumption}

\edef\oldassumption{\the\numexpr\value{assumption}+1}

\title{An Adaptive Empirical  Bayesian Method for Sparse Deep Learning}

% The \author macro works with any number of authors. There are two commands
% used to separate the names and addresses of multiple authors: \And and \AND.
%
% Using \And between authors leaves it to LaTeX to determine where to break the
% lines. Using \AND forces a line break at that point. So, if LaTeX puts 3 of 4
% authors names on the first line, and the last on the second line, try using
% \AND instead of \And before the third author name.

% \author{Wei Deng\thanks{some info} \and Xiao Zhang \thanks{more info} \and Faming Liang \and Guang Lin}

\author{%
  Wei Deng \\
  Department of Mathematics\\
  Purdue University\\
  West Lafayette, IN 47907 \\
  \texttt{deng106@purdue.edu} \\
  \And
  Xiao Zhang \\
  Department of Computer Science \\
  Purdue University \\
  West Lafayette, IN 47907 \\
  \texttt{zhang923@purdue.edu} \\
  \And
  Faming Liang \\
  Department of Statistics \\
  Purdue University \\
  West Lafayette, IN 47907 \\
  \texttt{fmliang@purdue.edu} \\
  \And
  Guang Lin \\
  Departments of Mathematics, Statistics \\
  and School of Mechanical Engineering \\
  Purdue University \\
  West Lafayette, IN 47907 \\
  \texttt{guanglin@purdue.edu} \\
}

\begin{document}

\maketitle

\begin{abstract}
We propose a novel adaptive empirical Bayesian (AEB) method for sparse deep learning, where the sparsity is ensured via a class of self-adaptive spike-and-slab priors. The proposed method works by alternatively sampling from an adaptive hierarchical posterior distribution using stochastic gradient Markov Chain Monte Carlo (MCMC) and smoothly optimizing the hyperparameters using stochastic approximation (SA). We further prove the convergence of the proposed method to the asymptotically correct distribution under mild conditions. Empirical applications of the proposed method lead to the state-of-the-art performance on MNIST and Fashion MNIST with shallow convolutional neural networks (CNN) and the state-of-the-art compression performance on CIFAR10 with Residual Networks. The proposed method also improves resistance to adversarial attacks.
\end{abstract}

\section{Introduction}
%Deep learning has achieved many successes in various areas. However, researchers have largely focused on increasing its learning ability by merely applying point estimation without considering uncertainty quantification. As discussed by \citet{Kendall17, Gal17,Amodei16}, the AI safety problems can never be ignored: Reported by US News, an image classification system confused humans with gorillas; A blog article by Keras developers showed a well-trained ImageNet model predicted adversarial sea snake images as magpies with 99.99\% confidence; Several serious traffic fatalities caused by autonomous cars were reported recently, resulting from the intelligent systems for assisted driving failing to recognize objects. These issues, which weakened the reliability of point estimation in deep learning, can be considerably alleviated given a better uncertainty quantification of the decisions.

%%%
% A brief intro of bayesian deep learning and SG-MCMC
%%%
% Bayesian deep learning, which evolved from Bayesian neural networks \citep{Neal96,Denker90}, provides an alternative to point estimation due to its close connection to both Bayesian probability theory and cutting-edge deep learning models. Its merit of quantifing uncertainty has been studied \citep{Gal16b}, which not only increases the predictive power of deep neural networks (DNNs) but also further provides a more robust estimation to enhance AI safety. 

MCMC, known for its asymptotic properties, has not been fully investigated in deep neural networks (DNNs) due to its unscalability in dealing with big data. Stochastic gradient Langevin dynamics (SGLD) \citep{Welling11}, the first stochastic gradient MCMC (SG-MCMC) algorithm, tackled this issue by adding noise to the stochastic gradient, smoothing the transition between optimization and sampling and making MCMC scalable. \citet{Chen14} proposed using stochastic gradient Hamiltonian Monte Carlo (SGHMC), the second-order SG-MCMC, which was shown to converge faster. In addition to modeling uncertainty, SG-MCMC also has remarkable non-convex optimization abilities. \citet{Maxim17,Xu18} proved that SGLD, the first-order SG-MCMC, is guaranteed to converge to an approximate global minimum of the empirical risk in finite time. \citet{Yuchen17} showed that SGLD hits the approximate local minimum of the population risk in polynomial time. \citet{Mangoubi18} further demonstrated SGLD with simulated annealing has a higher chance to obtain the global minima on a wider class of non-convex functions. However, all the analyses fail when DNN has too many parameters, and %compared to the amount of data. %One main reason that causes that over-fitting problem is the over-specified models. 
% When this happens, 
the over-specified model tends to have a large prediction variance, resulting in poor generalization and causing over-fitting. Therefore, a proper model selection is on demand at this situation.% \citep{Chen15}. % \citet{Saatci17} used SGHMC with GANs \citep{Goodfellow14b} to achieve a fully probabilistic inference and showed the Bayesian GAN model with only 100 labeled images was able to achieve 99.3\% testing accuracy in MNIST dataset. 
A standard method to deal with model selection is variable selection. Notably, the best variable selection based on the $L_0$ penalty is conceptually ideal for sparsity detection but is computationally slow. Two alternatives emerged to approximate it. On the one hand, penalized likelihood approaches, such as Lasso \citep{Tibshirani94}, induce sparsity due to the geometry that underlies the $L_1$ penalty. To better handle highly correlated variables, Elastic Net was proposed \citep{Zou05} and makes a compromise between $L_1$ penalty and $L_2$ penalty. On the other hand, spike-and-slab approaches to Bayesian variable selection originates from probabilistic considerations. \citet{George93} proposed to build a continuous approximation of the spike-and-slab prior to sample from a hierarchical Bayesian model using Gibbs sampling. This continuous relaxation inspired the efficient EM variable selection (EMVS) algorithm in linear models \citep{Rockova14, Rockova18}. %A similar algorithm \citep{Rockova18} using Spike-and-slab LASSO (SSL) priors is presented later to bridge the gap between variable selection and penalized likelihood estimation. 

%%%
% Advances of model selection in DNN
%%%
Despite the advances of model selection in linear systems, model selection in DNNs has received less attention. %A few attempts worked well on shallow Bayesian neural networks (BNNs): \citet{liang_vs_2018} % proposed to use parallel MCMC \citep{song14} to sample from BNNs with Gaussian priors on the weights and
%analyzed the consistency properties of variable selection using Gaussian priors on BNNs. 
\citet{Ghosh18} proposed to use variational inference (VI) based on regularized horseshoe priors to obtain a compact model. \citet{liang_vs_2018} presented the theory of posterior consistency for Bayesian neural networks (BNNs) with Gaussian priors, and \citet{ye18} applied a greedy elimination algorithm to conduct group model selection with the group Lasso penalty. Although these works only show the performance of shallow BNNs, the experimental methodologies imply the potential of model selection in DNNs. 
\citet{Louizos17} studied scale mixtures of Gaussian priors and half-Cauchy scale priors for the hidden units of VGG models \citep{Karen14} 
and achieved good model compression performance on CIFAR10 \citep{Alex09} using VI. However, due to the limitation of VI in non-convex optimization, the compression is still not sparse enough and can be further optimized.  %Variable selection in much more complex DNNs is still an open question.%\citep{Aidan18} proposed the targeted dropout method to rank weights or units according to the magnitude and prune the elements with low importance. %They achieved xx\% on CIFAR10 using Resnet32. 
% the proven success of $L_1$ penalty in variable selection, and the wide application of $L_2$ penalty in DNNs

%%%
% Our method, to apply model selection in DNN
%%%
%which is impractical in real applications for smart devices.
Over-parameterized DNNs often demand for tremendous memory use and heavy computational resources, which is impractical for smart devices. More critically, over-parametrization frequently overfits the data and results in worse performance \citep{lin17}. To ensure the efficiency of the sparse sampling algorithm without over-shrinkage in DNN models, we propose an AEB method to adaptively sample from a hierarchical Bayesian DNN model with spike-and-slab Gaussian-Laplace (SSGL) priors and the priors are learned through optimization instead of sampling. The AEB method differs from the full Bayesian method in that the priors are inferred from the empirical data and the uncertainty of the priors is no longer considered to speed up the inference. In order to optimize the latent variables without affecting the convergence to the asymptotically correct distribution, stochastic approximation (SA) \citep{Albert90}, a standard method for adaptive sampling \citep{andrieu05, Liang10}, naturally fits to train the adaptive hierarchical Bayesian model. %As a result, the asymptotic property allows us to combine simulated annealing to obtain a better point estimate in non-convex optimization.
%EM is an efficient algorithm to deal with this type of priors in linear models with closed-form update. Unfortunately, EM is known to be sensitive to hyperparameters, vulnerable to local traps \citet{ma18}, and unable to .  
%showed theoretically EM can be quite slow in searching global minima in Gaussian mixture models given bad initializations. 
%To tackle this issue, our solution is to use 
%\citet{Rockova17} alleviates the local entrapment problem by deploying an ensemble of interactive repulsive particles. 

% Motivated by the computational efficiency of the EM algorithm \citep{Dempster77}, the proven success of $L_1$ penalty in variable selection, and the wide application of $L_2$ penalty (known as weight decay in DNNs) in parameter estimation, 

%%%
% List our contributions
%%%
In this paper, we propose a sparse Bayesian deep learning algorithm, SG-MCMC-SA, to adaptively learn the hierarchical Bayes mixture models in DNNs. This algorithm has four main contributions: 
\begin{itemize}
\item We propose a novel AEB method to efficiently train hierarchical Bayesian mixture DNN models, where the parameters are learned through sampling while the priors are learned through optimization.

\item We prove the convergence of this approach to the asymptotically correct distribution, and it can be further generalized to a class of adaptive sampling algorithms for estimating state-space models in deep learning.

\item We apply this adaptive sampling algorithm in the DNN compression problems firstly, with potential extension to a variety of model compression problems.
% \item By automatically searching the over-fitted parameter space to add more penalties, the proposed method quantifies the posterior distributions of the decisive variables more accurately;
% \item Empirically, the algorithm demonstrates more resistance to over-fitting in simulations, leads to the state-of-the-art performance on MNIST and Fashion-MNIST and shows more robustness over SG-MCMC.
\item It achieves the state of the art in terms of compression rates, which is 91.68\% accuracy on CIFAR10 using only 27K parameters (90\% sparsity) with Resnet20  \citep{kaiming15}.%, which outperforms the existing methods by a large margin.
\end{itemize}

\section{Stochastic Gradient MCMC}
%%%%%%%%%%%%%%%%%%% Point Estimates of Deep Neural Network %%%%%%%%%%%%%%%%%%
% \subsection{Bayesian Point Estimates of Deep Neural Network}
% Viewing model $P(y|\bm{x}, W)$ probabilistically, where $W$ are the parameters, inputs $\bm{x}$ of a neural network are mapped onto a high-dimensional space by various layers of affine transformation interleaved with nonlinearities. Maximum likelihood estimation (MLE) with priors is equivalent to maximum a posteriori probability (MAP) estimate in Bayesian statistics:
% \begin{equation}
% \begin{split}
% W^{\MAP} & = \argmax_{W} \log P(W|\mathcal{D}) = \argmax_{W} \log P(\mathcal{D}|W) + \log P(W), \nonumber
% \end{split}
% \end{equation}
% where usually $P(W)$ can be given a Gaussian prior (L2 regularization) or a Laplace prior (L1 regularization).

% \subsection{Stochastic Gradient Langevin Dynamics}

We denote the set of model parameters by $\bm{\beta}$, the learning rate at time $k$ by $\epsilon^{(k)}$, the entire data by $\mathcal{D}=\{\bm{d}_i\}_{i=1}^N$, where $\bm{d}_i=(\bm{x}_i, y_i)$, the log of posterior by $L(\bm{\beta})$. The mini-batch of data $\mathcal{B}$ is of size $n$ with indices $\mathcal{S}=\{s_1, s_2, ..., s_n\}$, where $s_i\in \{1, 2, ..., N\}$. Stochastic gradient $\nabla_{\bm{\beta}} \tilde L(\bm{\beta})$ from a mini-batch of data $\mathcal{B}$ randomly sampled from $\mathcal{D}$ is used to approximate $\nabla_{\bm{\beta}} L(\bm{\beta})$:
\begin{equation}
\begin{split}
\label{factor}
\nabla_{\bm{\beta}} \tilde{L}(\bm{\beta})=\nabla_{\bm{\beta}} \log \mathrm{P}(\bm{\beta})+\dfrac{N}{n}\sum_{i\in \mathcal{S}} \nabla_{\bm{\beta}} \log \mathrm{P}(\bm{d}_i|\bm{\beta}).
\end{split}
\end{equation}
SGLD (no momentum) is formulated as follows:
\begin{equation}
\begin{split}
&\bm{\beta}^{(k+1)}=\bm{\beta}^{(k)} + \epsilon^{(k)} \nabla_{\bm{\beta}} \tilde{L}(\bm{\beta}^{(k)})+\mathcal{N}({0, 2\epsilon^{(k)}\tau^{-1}}),\\
\end{split}
\end{equation}
where $\tau>0$ denotes the inverse temperature. It has been shown that SGLD asymptotically converges to a stationary distribution $\pi(\bm{\beta}|\mathcal{D})\propto e^{\tau L(\bm{\beta})}$ \citep{Teh16,Yuchen17}. As $\tau$ increases and $\epsilon$ decreases gradually, the solution tends towards the global optima with a higher probability. Another variant of SG-MCMC, SGHMC \citep{Chen14, yian2015}, proposes to generate samples as follows:
\begin{equation}  
\label{undampedsgld}
\left\{  
             \begin{array}{lr}  
             d \bm{\beta}=\bm{r}dt,  \\  
              & \\
%              \prod_{i=1}^{m} \psi(\bm{x_i};\bm{\beta})^{y_i} \left(1-\psi(x_i;\bm{\beta})\right)^{1-y_i}, &  \text{classification}  
             d\bm{r}=\nabla_{\bm{\beta}} \tilde{L}(\bm{\beta})dt-\bm{C}\bm{r}dt+\mathcal{N}(0, 2\bm{B}\tau^{-1}dt)+\mathcal{N}(0, 2(\bm{C}-\bm{\hat{B}})\tau^{-1}dt),   
             \end{array}  
\right.  
\end{equation} 
where $\bm{r}$ is the momentum item, $\bm{\hat{B}}$ is an estimate of the stochastic gradient variance, $\bm{C}$ is a user-specified friction term. Regarding the discretization of (\ref{undampedsgld}), we follow the numerical method proposed by \citet{Saatci17} due to its convenience to import parameter settings from SGD.

\section{Empirical Bayesian via Stochastic Approximation}
\subsection{A hierarchical formulation with deep SSGL priors}

% \begin{figure*}[t!]
%     \centering
%     \begin{subfigure}[t]{0.67\textwidth}
%         \centering
%         \includegraphics[scale=0.3]{figures/dropout_weight_uncertainty_v9.pdf}
%         \caption{Model Structure}
%     \end{subfigure}%
%     ~ 
%     \begin{subfigure}[t]{0.5\textwidth}
%         \centering
%         \includegraphics[scale=0.08]{figures/dropout.pdf}
%         \caption{Dropout}
%     \end{subfigure}
%     \caption{Left: each weight is a fixed value, neurons are randomly dropped, providing an approximation of $2^N$ possible models; Right: weights are assigned to specific distributions, which end up with several promising models}
% \end{figure*}

% \begin{figure*}[t!]
%     \centering 
%     \begin{subfigure}[t]{0.67\textwidth}
%         \centering
%         \includegraphics[scale=0.3]{figures/dropout_weight_uncertainty_v9.pdf}
%         \caption{Model Structure}
%     \end{subfigure}%
%     ~ 
%     \begin{subfigure}[t]{0.5\textwidth}
%         \centering
%         \includegraphics[scale=0.08]{figures/dropout.pdf}
%         \caption{Dropout}
%     \end{subfigure}
%     \caption{Left: each weight is a fixed value, neurons are randomly dropped, providing an approximation of $2^N$ possible models; Right: weights are assigned to specific distributions, which end up with several promising models}
% \end{figure*}

Inspired by the hierarchical Bayesian formulation for sparse inference \citep{George93}, we assume the weight $\bm{\beta}_{lj}$ in sparse layer $l$ with index $j$ follows the SSGL prior
\begin{equation}
\beta_{lj} | \sigma^2, \gamma_{lj} \sim (1-\gamma_{lj}) \mathcal{L}(0, \sigma v_0)+\gamma_{lj} \mathcal{N}(0, \sigma^2 v_1).
\end{equation}
% \begin{equation}
% \beta_{lj} | \sigma^2, \gamma_{lj} \propto \frac{1}{\sigma}\exp\left[{\frac{(1-\gamma_{lj})|\beta_{lj}|}{\sigma v_0} +\frac{\gamma_{lj}\beta_{lj}^2}{\sigma^2 v_1}}\right],
% \end{equation}
where $\gamma_{lj}\in \{0, 1\}$, $\bm{\beta}_l\in\mathbb{R}^{p_l}$,  $\sigma^2\in\mathbb{R}$, $\mathcal{L}(0, \sigma v_0)$ denotes a Laplace distribution with mean $0$ and scale $\sigma v_0$, and $\mathcal{N}(0, \sigma^2 v_1)$ denotes a Normal distribution with mean $0$ and variance $\sigma^2 v_1$. The sparse layer can be the fully connected layers (FC) in a shallow CNN or Convolutional layers in ResNet. If we have $\gamma_{lj}=0$, the prior behaves like Lasso, which leads to a shrinkage effect; when $\gamma_{lj}=1$, the $L_2$ penalty dominates. The likelihood follows
% http://web.engr.oregonstate.edu/~xfern/classes/cs534/notes/logistic-regression-note.pdf
\begin{equation}  
\label{eq:4}
\pi(\mathcal{B}|\bm{\beta},\sigma^2)=\left\{  
             \begin{array}{lr}  
             \dfrac{\exp\left\{-\dfrac{\sum_{i\in \mathcal{S}} (y_{i}-\psi(\bm{x}_{i};\bm{\beta}))^2}{2\sigma^2}\right\}}{(2\pi\sigma^2)^{n/2}} & \text{(regression)}, \\  
              & \\
%              \prod_{i=1}^{m} \psi(\bm{x_i};\bm{\beta})^{y_i} \left(1-\psi(x_i;\bm{\beta})\right)^{1-y_i}, &  \text{classification}  
             \prod\limits_{i\in \mathcal{S}} \dfrac{\exp\{\psi_{y_i}(\bm{x}_{i};\bm{\beta})\}}{\sum_{t=1}^K \exp\{\psi_{t}(\bm{x}_i;\bm{\beta})\}} & \text{(classification)}, 
             \end{array}
\right.  
\end{equation} 
where $\psi(\bm{x}_{i};\bm{\beta})$ is a linear or non-linear mapping, and $y_i \in \{1,2,\dots,K\}$ is the response value of the $i$-th example. In addition, the variance $\sigma^2$ follows an inverse gamma prior $\pi(\sigma^2)=IG(\nu/2, \nu\lambda/2)$.
The i.i.d. Bernoulli prior is used for $\bm{\gamma}$, namely $\pi(\bm{\gamma}_l|\delta_l) = \delta_l^{|\bm{\gamma}_l|} (1-\delta_l)^{p_l-|\bm{\gamma}_l|}$
where $\delta_l\in\mathbb{R}$ follows Beta distribution $\pi(\delta_l) \propto \delta_l^{a-1}(1-\delta_l)^{b-1}$. The use of self-adaptive penalty enables the model to learn the level of sparsity automatically. 
Finally, our posterior follows
\begin{equation}
\pi(\bm{\beta}, \sigma^2, \delta, \bm{\gamma}|\mathcal{B}) \propto \pi(\mathcal{B}|\bm{\beta}, \sigma^2)^{\frac{N}{n}} \pi(\bm{\beta}|\sigma^2,\bm{\gamma}) \pi(\sigma^2| \bm{\gamma}) \pi(\bm{\gamma}|\delta) \pi(\delta).
\end{equation}

% \begin{equation}  
% \label{eq:4}
% \pi(\mathcal{B}|\bm{\beta}, \sigma^2) = \prod\limits_{i\in \mathcal{S}} \dfrac{\exp\{\psi_{y_i}(\bm{x}_{i};\bm{\beta})\}}{\sum_{t=1}^K \exp\{\psi_{t}(\bm{x}_i;\bm{\beta})\}}
% \end{equation}

% \begin{equation}  
% \label{eq:4}
% \pi(\mathcal{B}|\bm{\beta}, \sigma^2) =\left\{  
%              \begin{array}{lr}  
%              (2\pi\sigma^2)^{-n/2}\exp\left\{-\dfrac{1}{2\sigma^2}\sum_{i\in \mathcal{S}} (y_{i}-\psi(\bm{x}_{i};\bm{\beta}))^2\right\} & \text{(regression)}   \\  
%               & \\
% %              \prod_{i=1}^{m} \psi(\bm{x_i};\bm{\beta})^{y_i} \left(1-\psi(x_i;\bm{\beta})\right)^{1-y_i}, &  \text{classification}  
%              \prod\limits_{i\in \mathcal{S}} \dfrac{\exp\{\psi_{y_i}(\bm{x}_{i};\bm{\beta})\}}{\sum_{t=1}^K \exp\{\psi_{t}(\bm{x}_i;\bm{\beta})\}} &  \text{(classification),}  
%              \end{array}  
% \right.  
% \end{equation} 

%, note that we use $\pi(\mathcal{B}|\bm{\beta}, \sigma^2)$ instead of $\pi(\mathcal{B}|\bm{\beta})$ for unification without effecting the result.

%The EMVS approach is efficient in identifying potential sparse high posterior probability submodels on high-dimensional regression \citep{Rockova14} and classification problem \citep{McDermott16}. These characteristics are helpful for large neural network computation, thus we refer to the stochastic version of the EMVS algorithm as Expectation Stochastic-Maximization (ESM).

\subsection{Empirical Bayesian with approximate priors}

To speed up the inference, we propose the AEB method by sampling $\bm{\beta}$ and optimizing $\sigma^2, \delta, \bm{\gamma}$, where uncertainty of the hyperparameters is not considered. Because the binary variable $\bm{\gamma}$ is harder to optimize directly, we consider optimizing the adaptive posterior $\E_{\bm{\gamma}|\cdot, \mathcal{D}}\left[\pi(\bm{\beta}, \sigma^2, \delta, \bm{\gamma}|\mathcal{D})\right]$ \footnote{$\E_{\bm{\gamma}|\cdot, \mathcal{D}}[\cdot]$ is short for $\E_{\bm{\gamma}|\beta^{(k)}, \sigma^{(k)},\delta^{(k)}, \mathcal{D}}[\cdot]$.} instead. Due to the limited memory, which restricts us from sampling directly from $\mathcal{D}$, we choose to sample $\bm{\beta}$ from $\E_{\bm{\gamma}|\cdot,\mathcal{D}}\left[\E_{\mathcal{B}}\left[\pi(\bm{\beta}, \sigma^2, \delta, \bm{\gamma}|\mathcal{B})\right]\right]$ \footnote{ $\E_{\mathcal{B}}[\pi(\bm{\beta}, \sigma^2, \delta, \bm{\gamma}|\mathcal{B})]$ denotes $\int_{\mathcal{D}}\pi(\bm{\beta}, \sigma^2, \delta, \bm{\gamma}|\mathcal{B})d\mathcal{B}$}. By Fubini's theorem and Jensen's inequality, we have
\begin{equation}
\begin{split}
\label{js_}
&\log \E_{\bm{\gamma}|\cdot,\mathcal{D}}\left[\E_{\mathcal{B}}\left[\pi(\bm{\beta}, \sigma^2, \delta, \bm{\gamma}|\mathcal{B})\right]\right]=\log \E_{\mathcal{B}}\left[\E_{\bm{\gamma}|\cdot,\mathcal{D}}\left[\pi(\bm{\beta}, \sigma^2, \delta, \bm{\gamma}|\mathcal{B})\right]\right]\\
\geq &\E_{\mathcal{B}}\left[\log  \E_{\bm{\gamma}|\cdot,\mathcal{D}}\left[\pi(\bm{\beta}, \sigma^2, \delta, \bm{\gamma}|\mathcal{B})\right]\right]\geq \E_{\mathcal{B}}\left[\E_{\bm{\gamma}|\cdot,\mathcal{D}}\left[\log\pi(\bm{\beta}, \sigma^2, \delta, \bm{\gamma}|\mathcal{B})\right]\right].\\
\end{split}
\end{equation}
Instead of tackling $\pi(\bm{\beta}, \sigma^2, \delta, \bm{\gamma}|\mathcal{D})$ directly, we propose to iteratively update the lower bound $Q$
\begin{equation}
\begin{split}
\label{obj_expect}
&Q(\bm{\beta}, \sigma, \delta|\bm{\beta}^{(k)}, \sigma^{(k)}, \delta^{(k)})=\E_{\mathcal{B}}\left[\E_{\bm{\gamma}|\mathcal{D}}\left[\log\pi(\bm{\beta}, \sigma^2, \delta, \bm{\gamma}|\mathcal{B})\right]\right].
\end{split}
\end{equation}
Given $(\bm{\beta}^{(k)}, \sigma^{(k)},\delta^{(k)})$ at the k-th iteration, we first sample $\bm{\beta}^{(k+1)}$ from $Q$, then optimize $Q$ with respect to $\sigma, \delta$ and $\E_{\bm{\gamma}_l|\cdot,\mathcal{D}}$ via SA, where $\E_{\bm{\gamma}_l|\cdot,\mathcal{D}}$ is used since $\bm{\gamma}$ is treated as unobserved variable.
% A simplified approach was used for point estimation in a neural-graphical model in a structured prediction task and proved to be useful \citep{Zhang17}.
To make the computation easier, we decompose our $Q$ as follows:
\begin{equation}
\begin{split}
&Q(\bm{\beta},\sigma, \delta|\bm{\beta}^{(k)}, \sigma^{(k)}, \delta^{(k)}) = Q_1(\beta, \sigma|\bm{\beta}^{(k)}, \sigma^{(k)}, \delta^{(k)}) + Q_2(\delta|\bm{\beta}^{(k)}, \sigma^{(k)}, \delta^{(k)})+C,
\end{split}
\end{equation}
Denote $\mathcal{X}$ and  $\mathcal{C}$ as the sets of the indices of sparse and non-sparse layers, respectively. We have:
\begin{equation}
\begin{split}
\label{lab: q1}
&Q_1(\bm{\beta}|\bm{\beta}^{(k)}, \sigma^{(k)},\delta^{(k)})=\underbrace{\frac{N}{n}\log \pi(\mathcal{B}|\bm{\beta})}_\text{log likelihood}-\underbrace{\sum_{l\in \mathcal{C}}\sum_{j\in p_l} \dfrac{\beta_{lj}^2}{2\sigma_0^2}}_\text{non-sparse layers $\mathcal{C}$} -\dfrac{p+\nu+2}{2} \log(\sigma^2)\\
&-\sum_{l\in \mathcal{X}}\sum_{j\in p_l} \underbrace{[\dfrac{|\beta_{lj}|\overbrace{\E_{\bm{\gamma}_l|\cdot,\mathcal{D}} \left[\dfrac{1}{v_0(1-\gamma_{lj})}\right]}^{ \kappa_{lj0}}}{\sigma}+\dfrac{\beta_{lj}^2\overbrace{\E_{\bm{\gamma}_l|\cdot,\mathcal{D}} \left[\dfrac{1}{v_1 \gamma_{lj}}\right]}^{ \kappa_{lj1}}}{2\sigma^2}]}_\text{deep SSGL priors in sparse layers $\mathcal{X}$}- \dfrac{\nu\lambda}{2\sigma^2}
\end{split}
\end{equation}
\begin{equation}
\begin{split}
\label{eq:q2}
&Q_2(\delta_l|\bm{\beta}^{(k)}_l, \delta^{(k)}_l) =\sum_{l\in \mathcal{X}} \sum_{j\in p_l}\log\left(\dfrac{\delta_l}{1-\delta_l}\right) \overbrace{\E_{\gamma_l|\cdot,\mathcal{D}}\gamma_{lj}}^{ \rho_{lj}} + (a-1) \log(\delta_l) + (p_l+b-1) \log(1-\delta_l),
\end{split}
\end{equation}
where $\bm{ \rho}, \bm{ \kappa}$, $\sigma$ and $\delta$ are to be estimated in the next section.
% where $\kappa_{0lj}=\E_{\bm{\gamma}_l|\cdot} \left[\dfrac{1-\gamma_{lj}}{v_0}\right]$, $\kappa_{1lj}=\E_{\bm{\gamma}_l|\cdot} \left[\dfrac{\gamma_{lj}}{v_1}\right]$.

\subsection{Empirical Bayesian via stochastic approximation}

% In linear regression, EMVS \citep{Rockova14} can be used to obtain the point estimates of $\bm{\beta}$ and all the priors. However, when the log likelihood function is non-convex, the radical update of EMVS no longer yields robust hyperparameters. By contrast, a sampling algorithm is more efficient in escaping local traps. 
To simplify the notation, we denote the vector $(\bm{\rho}, \bm{\kappa}, \sigma, \delta)$ by $\bm{\theta}$. Our interest is to obtain the optimal $\bm{\theta}_*$ based on the asymptotically correct distribution $\pi(\bm{\beta}, \bm{\theta}_*)$. This implies that we need to obtain an estimate $\bm{\theta}_*$ that solves a fixed-point formulation $\int g_{\bm{\theta_*}}(\bm{\beta})\pi(\bm{\beta}, \bm{\theta_*})d\bm{\beta}=\bm{\theta}_*$, where $g_{\bm{\theta}}(\bm{\beta})$ is inspired by EMVS to obtain the optimal $\bm{\theta}$ based on the current $\bm{\beta}$. Define the random output $g_{\bm{\theta}}(\bm{\beta})-\bm{\theta}$ as $H(\bm{\beta}, \bm{\theta})$ and the mean field function $h(\bm{\theta}):=\E[H(\bm{\beta}, \bm{\theta})]$. The stochastic approximation algorithm can be used to solve the fixed-point iterations:

\begin{itemize}
\item[(1)] Sample $\bm{\beta}^{(k+1)}$ from a transition kernel $\Pi_{\bm{\theta}^{(k)}}(\bm{\beta})$, which yields the distribution $\pi(\bm{\beta}, \bm{\theta}^{(k)})$,
\item[(2)] Update $\bm{\theta}^{(k+1)}=\bm{\theta}^{(k)}+\omega^{(k+1)} H(\bm{\theta}^{(k)}, \bm{\beta}^{(k+1)})=\bm{\theta}^{(k)}+\omega^{(k+1)} (h(\bm{\theta}^{(k)})+\Omega^{(k)}).$
\end{itemize}
where $\omega^{(k+1)}$ is the step size. The equilibrium point $\bm{\theta}_*$ is obtained when the distribution of $\bm{\beta}$ converges to the invariant distribution $\pi(\bm{\beta}, \bm{\theta}_*)$. The stochastic approximation \citep{Albert90} differs from the Robbins–Monro algorithm in that sampling $\bm{\beta}$ from a transition kernel instead of a distribution introduces a Markov state-dependent noise $\Omega^{(k)}$ \citep{andrieu05}. In addition, since variational technique is only used to approximate the priors, and the exact likelihood doesn't change, the algorithm falls into a class of adaptive SG-MCMC instead of variational inference.

% where $\bm{\theta}$ can be viewed as a vector $(\bm{\rho}, \bm{\tau}, \sigma, \delta)$, the random output $H(\bm{\theta}^{(k)},\bm{\beta}^{(k+1)})$ is equal to 

% To obtain a tight lower bound with respect to the Jensen inequality in (\ref{js_}), we need to obtain an accurate estimation of $\E_{\bm{\gamma}|\cdot, \mathcal{D}}$ (denoted as $\bm{\rho}$). In addition, the lower bound needs to be strictly increasing to converge. However, neither of two conditions are satisfied due to the use of random mini-batch and the non-linear objective functions, which have no close-form update. Instead, we can only estimate $\bm{\gamma}$, $\sigma$ and $\delta$ based on $\mathcal{B}$, this suggests us to approximate $\E_{\gamma|\mathcal{D}}$, $\E[\sigma|\mathcal{D}]$ and $\E[\delta|\mathcal{D}]$ via SA (see details in Appendix A.2).

% The routine for SA to optimize expectation is as follows:

% is defined by is the unbiased estimate of the mean field mapping $h(\hat \theta):=\lim_{k\rightarrow \infty} \E_{\hat \theta}[H(\hat \theta, x_k)]$. When we can sample from $\mu_{{\hat \theta_k}}$ directly and $\mu_{{\hat \theta_k}}$ is in an exponential family, $H(\hat \theta, x)$ is equivalent to the unbiased estimate of the natural gradient of the KL divergence $KL(f_{ \bar \theta}(x)|| f_{\hat \theta_k}(x))$

Regarding the updates of $g_{\bm{\theta}}(\bm{\beta})$ with respect to $\bm{\rho}$, we denote the optimal $\bm{\rho}$ based on the current $\bm{\beta}$ and $\delta$ by $\bm{\tilde\rho}$. We have that $\tilde \rho_{lj}^{(k+1)}$, the probability of $\beta_{lj}$ being dominated by the $L_2$ penalty is
\begin{equation}
\label{eq:10}
\tilde \rho_{lj}^{(k+1)}=\E_{\bm{\gamma}_l|\cdot, \mathcal{B}} \gamma_{lj} = \mathrm{P}(\gamma_{lj}=1|\bm{\beta}_l^{(k)}, \delta_l^{(k)}) =\dfrac{a_{lj}}{a_{lj}+b_{lj}},
\end{equation}
where $a_{lj}=\pi(\beta_{lj}^{(k)}|\gamma_{lj}=1)\mathrm{P}(\gamma_{lj}=1|\delta^{(k)}_l)$ and $b_{lj}=\pi(\beta_{lj}^{(k)}|\gamma_{lj}=0)\mathrm{P}(\gamma_{lj}=0|\delta^{(k)}_l)$. The choice of Bernoulli prior enables us to use $\mathrm{P}(\gamma_{lj}=1|\delta^{(k)}_l)=\delta^{(k)}_l$.

% The physical meaning of $\E_{\bm{\gamma}|\cdot}\gamma_j$ in $Q_2$ is the probability $\rho_j$ that has a large effect on $\beta_j$ in the model, where $\bm{\rho}\in\mathbb{R}^p$.

% When $p$ is large, numerical problems occur in some extremely cases, e.g. $\sigma^2$ decreases sharply. To overcome that issue, we can rewrite
% \begin{equation}
% E_{\gamma|\cdot} \gamma_j =\dfrac{1}{1+b_j/a_j}=\dfrac{1}{1+\dfrac{(1-\delta_{(t)})v_1}{\delta_{(t)} v_0} \exp\{\dfrac{\beta_j^2}{2\sigma^2}(\dfrac{1}{v_1}-\dfrac{1}{v_0})\}}.
% \end{equation}

Similarly, as to $g_{\bm{\theta}}(\bm{\beta})$ w.r.t. $\bm{\kappa}$, the optimal $\tilde \kappa_{lj0}$ and $\tilde \kappa_{lj1}$ based on the current ${\rho_{lj}}$ are given by:
\begin{equation}
\begin{split}
\label{eq:11}
\tilde \kappa_{lj0}&=\E_{\bm{\gamma}_l|\cdot, \mathcal{B}} \left[\dfrac{1}{v_0 (1-\gamma_{lj})}\right]=\dfrac{1- \rho_{lj}}{v_0}; \text{ }\tilde \kappa_{lj1}=\E_{\bm{\gamma}_l|\cdot, \mathcal{B}} \left[\dfrac{1}{v_1 \gamma_{lj}}\right]=\dfrac{ \rho_{lj}}{v_1}.
\end{split}
\end{equation}

% The general strategy of recursion follows ${\beta} \leftarrow \bm{\beta} + \epsilon \nabla_{\bm{\beta}} {\pi}(\bm{\beta}, \sigma^2, \delta|\mathcal{B})$ where $\epsilon$ is the learning rate. 

% , approximation of Eq.(\ref{lab: q1}) is often required, e.g. normal approximation \citep{Gelman13}, VI \citep{Drugowitsch13} or EP \citep{Gelman13}. In deep learning framework, due to the success of

% In order to optimize $Q_1$ with respect to $\sigma$, by denoting $\diag\{\kappa_i\}_{i=1}^p$ as $\bm{\mathcal{V}}$, following the formulation in \citet{McDermott16} and \citet{Rockova14} we have:

% \begin{equation}  
% \label{eq:13}
% \sigma^{(k+1)}=\sqrt{\dfrac{\left\Vert\left(\bm{\mathcal{V}}^{1/2}\bm{\beta}^{(k+1)}\rm\right) \right\Vert^2+ \nu\lambda}{p+\nu+2}}
% \end{equation}

% \begin{equation}  
% \label{eq:13}
% \sigma^{(k+1)}=\left\{  
%              \begin{array}{lr}  
%              \sqrt{\dfrac{\frac{N}{n}\sum_{i\in\mathcal{S}} \left(y_i-\psi(\bm{x}_i;\bm{\beta}^{(k+1)})\right)^2 +||\bm{\mathcal{V}}^{1/2}\bm{\beta}^{(k+1)}||^2+ \nu\lambda}{N+p+\nu}} & \text{(regression)},   \\  
%              \sqrt{\dfrac{\left\Vert\left(\bm{\mathcal{V}}^{1/2}\bm{\beta}^{(k+1)}\rm\right) \right\Vert^2+ \nu\lambda}{p+\nu+2}} &  \text{(classification).}  
%              \end{array}  
% \right.  
% \end{equation}

To optimize $Q_1$ with respect to $\sigma$, by denoting $\diag\{\kappa_{0li}\}_{i=1}^{p_l}$ as $\bm{\mathcal{V}_{0l}}$, $\diag\{\kappa_{1li}\}_{i=1}^{p_l}$ as $\bm{\mathcal{V}_{1l}}$ we have:
\begin{equation}  
\label{eq:13}
\tilde \sigma^{(k+1)}=\left\{  
             \begin{array}{lr}  
             \dfrac{R_b+\sqrt{R_b^2+4R_a R_c}}{2R_a} & \text{(regression)},   \\  
             \dfrac{C_b+\sqrt{C_b^2+4C_a C_c}}{2C_a} &  \text{(classification),}  
             \end{array}  
\right.  
\end{equation}
% \begin{equation}  
% \label{eq:13}
% \sigma^{(k+1)}=\left\{  
%              \begin{array}{lr}  
%              \sqrt{\dfrac{I +J+ \nu\lambda}{N+\sum_{l\in\mathcal{X}}p_l+\nu}} & \text{(regression)},   \\  
%              \sqrt{\dfrac{J+ \nu\lambda}{\sum_{l\in\mathcal{X}}p_l+\nu+2}} &  \text{(classification),}  
%              \end{array}  
% \right.  
% \end{equation}
where $R_a=N+\sum_{l\in\mathcal{X}}p_l+\nu$, $C_a=\sum_{l\in\mathcal{X}}p_l+\nu+2$, $R_b=C_b=\sum_{l\in\mathcal{X}}||\bm{\mathcal{V}}_{0l}\bm{\beta}_l^{(k+1)}||_1$, $R_c=I+J+\nu\lambda$, $C_c=J+\nu\lambda$, $I=\frac{N}{n}\sum_{i\in\mathcal{S}} \left(y_i-\psi(\bm{x}_i;\bm{\beta}^{(k+1)})\right)^2$, $J=\sum_{l\in\mathcal{X}}||\bm{\mathcal{V}}_{1l}^{1/2}\bm{\beta}_l^{(k+1)}||^2$.\footnote[2]{The quadratic equation has only one unique positive root. $\|\cdot\|$ refers to $L_2$ norm, $\|\cdot\|_1$ represents $L_1$ norm.}

To optimize $Q_2$, a closed-form update can be derived from Eq.(\ref{eq:q2}) and Eq.(\ref{eq:10}) given batch data $\mathcal{B}$:
\begin{equation}
\label{eq:14}
\begin{aligned}
\tilde \delta_l^{(k+1)} &= \argmax_{\delta_l \in \mathbb{R}} Q_2(\delta_l|\bm{\beta}_l^{(k)}, \delta_l^{(k)})= \dfrac{\sum_{j=1}^{p_l} \rho_{lj} + a - 1}{a + b + p_l - 2}.
\end{aligned}
\end{equation}

% \subsection{Posterior Approximation and Optimization}
% The posterior average can be approximated through the weighted sample average $\E[\psi(\bm{{\beta}})]=\tfrac{\sum_{k=1}^{{k_{\max}}} \epsilon^{(k)} \psi(\bm{\beta}^{(k)})}{\sum_{k=1}^{{k_{\max}}}\epsilon^{(k)}}$ \citep{Welling11} to reduce the variance of the estimator. \citet{Teh16} showed a theoretical optimal learning rate $\epsilon^{(k)}\propto k^{-1/3}$ for SGLD to achieve faster convergence for posterior average, which is applied in Sec. 6.1.

% To speed up the computation and alleviate the entrapment of latent variables in poor local optima, we can tune $\tau$ to balance between local trap escape and stationary point convergence. This strategy is applied in Sec. 6.2-6.4.

\subsection{Pruning strategy}

There are quite a few methods for pruning neural networks including the oracle pruning and the easy-to-use magnitude-based pruning \citep{Molchanov2016PruningCN}. Although the magnitude-based unit pruning shows more computational savings \citep{Aidan18}, it doesn't demonstrate robustness under coarser pruning \citep{Song16, Aidan18}. Pruning based on the probability $\bm{\rho}$ is also popular in the Bayesian community, but achieving the target sparsity in sophisticated networks requires extra fine-tuning. We instead apply the magnitude-based weight-pruning to our Resnet compression experiments and refer to it as SGLD-SA, which is detailed in Algorithm \ref{alg: SGLD-EM}. The corresponding variant of SGHMC with SA is referred to as SGHMC-SA.

\section{Convergence Analysis}
The key to guaranteeing the convergence of the adaptive SGLD algorithm is to use Poisson's equation to analyze additive functionals. By decomposing the Markov state-dependent noise $\Omega$ into martingale difference sequences and perturbations, where the latter can be controlled by the regularity of the solution of Poisson's equation, we can guarantee the consistency of the latent variable estimators. 
\begin{theorem}[$L_2$ convergence rate]
\label{th:1}
For any $\alpha\in (0, 1]$, under assumptions in Appendix B.1, the algorithm satisfies: there exists a large enough constant $\lambda$ and an equilibrium $\bm{\theta}^*$ such that
\begin{equation*}
\begin{split}
\E\left[\|\bm{\theta}^{(k)}-\bm{\theta}^*\|^2\right]\leq \lambda k^{-\alpha}.
\end{split}
\end{equation*}
\end{theorem}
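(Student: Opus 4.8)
I would read the update $\bm{\theta}^{(k+1)}=\bm{\theta}^{(k)}+\omega^{(k+1)}\big(h(\bm{\theta}^{(k)})+\Omega^{(k)}\big)$ as a Robbins--Monro recursion with additive Markov state--dependent noise and track the Lyapunov functional $T^{(k)}:=\|\bm{\theta}^{(k)}-\bm{\theta}^*\|^2$, along the classical route of Benveniste--M\'etivier--Priouret and Andrieu--Moulines--Priouret: isolate from $\Omega^{(k)}$ a martingale-difference part and remainder parts that are tamed via the solution of Poisson's equation for the family $\{\Pi_{\bm{\theta}}\}$. Expanding the square gives
\begin{equation*}
T^{(k+1)}=T^{(k)}+2\omega^{(k+1)}\langle \bm{\theta}^{(k)}-\bm{\theta}^*,\,h(\bm{\theta}^{(k)})\rangle+2\omega^{(k+1)}\langle \bm{\theta}^{(k)}-\bm{\theta}^*,\,\Omega^{(k)}\rangle+(\omega^{(k+1)})^2\big\|h(\bm{\theta}^{(k)})+\Omega^{(k)}\big\|^2 .
\end{equation*}
Under the stability assumption of Appendix B.1, $\langle \bm{\theta}-\bm{\theta}^*,\,h(\bm{\theta})\rangle\le -\phi\|\bm{\theta}-\bm{\theta}^*\|^2$ for some $\phi>0$, the first inner product supplies the contraction factor $1-2\phi\omega^{(k+1)}$, and the quadratic term is $O\big((\omega^{(k+1)})^2\big)$ once uniform $L_2$ bounds on $H$ (hence on $h$ and $\Omega$) are available.

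\textbf{Controlling the Markov noise.} The substantive term is $2\omega^{(k+1)}\langle \bm{\theta}^{(k)}-\bm{\theta}^*,\Omega^{(k)}\rangle$ with $\Omega^{(k)}=H(\bm{\theta}^{(k)},\bm{\beta}^{(k+1)})-h(\bm{\theta}^{(k)})$, which fails to be a martingale increment because $\bm{\beta}^{(k+1)}\sim \Pi_{\bm{\theta}^{(k)}}(\bm{\beta}^{(k)},\cdot)$ has a $\bm{\theta}^{(k)}$-dependent law. Let $\mu_{\bm{\theta}}$ solve the Poisson equation $\mu_{\bm{\theta}}-\Pi_{\bm{\theta}}\mu_{\bm{\theta}}=H(\bm{\theta},\cdot)-h(\bm{\theta})$ and set $z^{(k)}:=(\Pi_{\bm{\theta}^{(k)}}\mu_{\bm{\theta}^{(k)}})(\bm{\beta}^{(k)})$; since $\E[\mu_{\bm{\theta}^{(k)}}(\bm{\beta}^{(k+1)})\mid\mathcal{F}^{(k)}]=z^{(k)}$, I would split
\begin{equation*}
\Omega^{(k)}=\underbrace{\big(\mu_{\bm{\theta}^{(k)}}(\bm{\beta}^{(k+1)})-z^{(k)}\big)}_{\text{martingale difference}}+\underbrace{\big(z^{(k)}-z^{(k+1)}\big)}_{\text{telescoping}}+\underbrace{\big((\Pi_{\bm{\theta}^{(k+1)}}\mu_{\bm{\theta}^{(k+1)}})(\bm{\beta}^{(k+1)})-(\Pi_{\bm{\theta}^{(k)}}\mu_{\bm{\theta}^{(k)}})(\bm{\beta}^{(k+1)})\big)}_{\text{kernel drift}} .
\end{equation*}
After multiplication by $\omega^{(k+1)}$ and conditioning, the martingale part has zero mean and is absorbed into the contraction by Young's inequality (spending a fraction of $\phi$), using the uniform moment bound on $\mu_{\bm{\theta}}$; the kernel-drift part is $\le L\|\bm{\theta}^{(k+1)}-\bm{\theta}^{(k)}\|=O(\omega^{(k+1)})$ by the assumed Lipschitz dependence of $\bm{\theta}\mapsto\Pi_{\bm{\theta}}\mu_{\bm{\theta}}$, so it contributes $O\big((\omega^{(k+1)})^2\big)$; and the telescoping part is summed by parts once the recursion for $T^{(k)}$ is unrolled, leaving a boundary contribution plus $\sum_k|\omega^{(k+1)}-\omega^{(k)}|\,\E\|z^{(k)}\|$, both finite under the step-size assumptions ($\omega^{(k)}\downarrow0$, $\sum_k\omega^{(k)}=\infty$, $|\omega^{(k+1)}-\omega^{(k)}|=O((\omega^{(k+1)})^2)$, with $\E\|z^{(k)}\|$ bounded via the drift condition for $\Pi_{\bm{\theta}}$). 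Equivalently one may pre-absorb the telescoping term by working with the corrected functional $T^{(k)}+2\omega^{(k)}\langle\bm{\theta}^{(k)}-\bm{\theta}^*,z^{(k)}\rangle$.

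\textbf{Closing the recursion.} Assembling the estimates yields, for constants $C_1,C_2$ and all large $k$,
\begin{equation*}
\E\big[T^{(k+1)}\big]\le\big(1-2\phi\,\omega^{(k+1)}+C_1(\omega^{(k+1)})^2\big)\,\E\big[T^{(k)}\big]+C_2(\omega^{(k+1)})^2 .
\end{equation*}
With the polynomial schedule of Appendix B.1 (so that $\omega^{(k)}$ decays like $k^{-\alpha}$), the standard induction lemma for such recursions applies: a bound $u_{k+1}\le(1-c_1\omega^{(k+1)})u_k+c_2(\omega^{(k+1)})^2$ with a polynomially decaying step forces $u_k=O(\omega^{(k)})=O(k^{-\alpha})$ provided the contraction constant $2\phi$ dominates $\alpha$; the free constant $\lambda$ then swallows the finitely many early iterates and all implicit constants, giving $\E[\|\bm{\theta}^{(k)}-\bm{\theta}^*\|^2]\le\lambda k^{-\alpha}$.

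\textbf{Where the difficulty sits.} The load-bearing step is the regularity theory for the Poisson equation of the adaptive SG-MCMC kernel: one must exhibit a solution $\mu_{\bm{\theta}}$ with uniform-in-$\bm{\theta}$ moment bounds that is Lipschitz in $\bm{\theta}$, which in turn reduces to a uniform drift-and-minorization (hence uniform geometric ergodicity) for $\{\Pi_{\bm{\theta}}\}$ together with smoothness of $\bm{\theta}\mapsto\Pi_{\bm{\theta}}$. Verifying these for the SSGL posterior, and separately confirming that the EMVS-type mean field $h$ meets the stability inequality and that $H$ --- the maps $\tilde{\bm{\rho}},\tilde{\bm{\kappa}},\tilde\sigma,\tilde\delta$ of Eqs.~(\ref{eq:10})--(\ref{eq:14}) minus $\bm{\theta}$, which are bounded rational functions of $\bm{\beta}$ --- carries the needed moment bounds, is where the real effort goes; the recursion above is then routine.
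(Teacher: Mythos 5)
Your proposal follows essentially the same route as the paper's proof: expand $\|\bm{\theta}^{(k+1)}-\bm{\theta}^*\|^2$, extract the contraction $1-2\delta\omega^{(k+1)}$ from the stability inequality, decompose the Markov noise via the Poisson equation into a martingale difference, a telescoping term handled by Abel summation, and $O\big((\omega^{(k+1)})^2\big)$ kernel-drift remainders controlled by the Lipschitz dependence of $\bm{\theta}\mapsto\Pi_{\bm{\theta}}\mu_{\bm{\theta}}$, then close the recursion with the comparison sequence $u_k=\lambda\omega_k$ exactly as in the paper's Lemmas 2--4 (Benveniste et al.). The only differences are bookkeeping --- the paper folds the inner product with $\bm{\theta}^{(k)}-\bm{\theta}^*$ into its $z_k$ and splits off the resulting $\langle\bm{T}_{k+1}-\bm{T}_k,\cdot\rangle$ mismatch as a separate $O\big((\omega^{(k+1)})^2\big)$ term, which your corrected-functional remark covers --- and, like the paper, you correctly identify that the regularity of the Poisson solution is assumed rather than verified.
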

SGLD with adaptive latent variables forms a sequence of inhomogenous Markov chains and the weak convergence of $\bm{\beta}$ to the target posterior is equivalent to proving the weak convergence of SGLD with biased estimations of gradients. Inspired by \citet{Chen15}, we have:

\begin{corollary}
\label{cor:1}
Under assumptions in Appendix B.2, the random vector $\bm{\beta}^{(k)}$ from the adaptive transition kernel $\Pi_{\bm{\theta}^{(k-1)}}$ converges weakly to the invariant distribution $e^{\tau L(\bm{\beta}, \bm{\theta^*)}}$ as $\epsilon\rightarrow 0$ and $k\rightarrow \infty$.
\end{corollary}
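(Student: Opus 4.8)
The plan is to reduce the statement to a perturbation result for SG-MCMC with a slowly drifting target and then invoke the weak-convergence analysis of \citet{Chen15}. First I would rewrite the update that produces $\bm\beta^{(k+1)}$ from $\Pi_{\bm\theta^{(k)}}$ as an Euler discretization of the Langevin diffusion whose invariant law is $\pi(\bm\beta,\bm\theta^*)\propto e^{\tau L(\bm\beta,\bm\theta^*)}$, carrying three perturbations: the finite step size $\epsilon^{(k)}$, the mini-batch gradient noise coming from the stochastic gradient~(\ref{factor}), and---the genuinely new term---the drift discrepancy $\nabla_{\bm\beta}L(\bm\beta,\bm\theta^{(k)})-\nabla_{\bm\beta}L(\bm\beta,\bm\theta^*)$ caused by indexing the kernel with the running estimate $\bm\theta^{(k)}$ rather than the equilibrium $\bm\theta^*$. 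Under the regularity assumed in Appendix B.2 (Lipschitz dependence of $\nabla_{\bm\beta}L$ on $\bm\theta$), this last term is bounded by $C\|\bm\theta^{(k)}-\bm\theta^*\|$, hence its mean square is $O(k^{-\alpha})$ by Theorem~\ref{th:1}.

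Next, fixing a test function $\phi$ in the admissible class, I would bring in the solution $\psi$ of the Poisson equation $\mathcal{L}_{\bm\theta^*}\psi=\phi-\int\phi\,d\pi(\cdot,\bm\theta^*)$ associated with the generator $\mathcal{L}_{\bm\theta^*}$ of the \emph{limit} diffusion, and use the boundedness of its derivatives (an assumption of B.2) to telescope $\E[\phi(\bm\beta^{(k)})]-\int\phi\,d\pi(\cdot,\bm\theta^*)$ along the chain, exactly as in \citet{Chen15}. Each step then contributes a local integrator error of order $(\epsilon^{(k)})^2$, a martingale-difference term from the injected Gaussian noise and the mini-batch noise that vanishes in expectation, and a term bounded by $\|\nabla_{\bm\beta}L(\cdot,\bm\theta^{(k)})-\nabla_{\bm\beta}L(\cdot,\bm\theta^*)\|$ times a derivative of $\psi$. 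Summing and normalizing by $\sum_j\epsilon^{(j)}$, the total bias is controlled by a weighted average of $\epsilon^{(k)}$ and $\E\|\bm\theta^{(k)}-\bm\theta^*\|$, both of which tend to $0$; hence $\E[\phi(\bm\beta^{(k)})]\to\int\phi\,d\pi(\cdot,\bm\theta^*)$ as $\epsilon\to 0$ and $k\to\infty$, which is weak convergence to $e^{\tau L(\bm\beta,\bm\theta^*)}$.

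The hard part is handling the inhomogeneity cleanly. Because $\bm\theta^{(k)}$ moves at every iteration there is no single fixed generator or invariant measure to expand around, so the Poisson-equation argument must be anchored at the limiting kernel $\Pi_{\bm\theta^*}$ and the gap between $\Pi_{\bm\theta^{(k)}}$ and $\Pi_{\bm\theta^*}$ must be quantified uniformly enough---over the range of $\bm\beta$ allowed by the drift/Lyapunov condition of B.2---to be absorbed into the error budget. This is precisely where the two time scales must be coupled: one needs the $L_2$ rate of Theorem~\ref{th:1} together with the step-size schedule to guarantee that $\sum_k\epsilon^{(k)}\E\|\bm\theta^{(k)}-\bm\theta^*\| = o\!\left(\sum_k\epsilon^{(k)}\right)$. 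A secondary technical point is establishing the required regularity of $\psi$ for the drift $\nabla_{\bm\beta}L(\bm\beta,\bm\theta^*)$, which is not globally Lipschitz because of the Laplace ($L_1$) component of the SSGL prior; the assumptions in Appendix B.2 should provide this through a dissipativity / one-sided-Lipschitz condition or a smoothed surrogate for the $|\beta_{lj}|$ terms.
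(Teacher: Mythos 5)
Your proposal takes essentially the same route as the paper: decompose the drift into the exact gradient at $\bm{\theta}^*$ plus zero-mean noise plus a bias term controlled via Theorem~\ref{th:1} and the $M$-smoothness of $L$ in $\bm{\theta}$, then invoke the biased-gradient SG-MCMC analysis of \citet{Chen15}, whose key requirement is that the time-averaged bias vanishes. The paper's own proof is far terser---it simply writes the update as SGLD with an $\mathcal{O}(k^{-\alpha})$ perturbation and cites \citet{Chen15} for the Poisson-equation/telescoping machinery you spell out---so your write-up is a faithful (and somewhat more careful, e.g.\ on the $k^{-\alpha/2}$ versus $k^{-\alpha}$ rate of the bias in norm) expansion of the same argument.
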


% \begin{corollary}
% \label{cor:1}
% Given fixed stepsize $\epsilon$, the distribution of $\bm{\beta}^{(k)}$ converges weakly to the invariant distribution $e^{\tau L(\bm{\beta}, \bm{\theta}^*)}$ with rate $\alpha\in (0, 1]$ as $\epsilon\rightarrow 0$.
% \end{corollary}

The smooth optimization of the priors
makes the algorithm robust to bad initialization and avoids entrapment in poor local optima. In addition, the convergence to the asymptotically correct distribution enables us to combine simulated annealing \citep{Kirkpatrick83optimizationby}, simulated tempering \citep{ST}, parallel tempering \citep{PhysRevLett86} or (and) dynamical weighting \citep{wong97} to obtain better point estimates in non-convex optimization and more robust posterior averages in multi-modal sampling. %Because SGHMC proposed by \citet{Chen14} is more attractive in large scale problems, the convergence of SGLD-SA also applies to SGHMC-SA.

%If we update $\bm{\beta}$ k iterations, then update the latent variables and then update $\bm{\beta}$ $k$ iterations,..., we call it ESM-k.

% Extension to the $t$-prior is provided in the appendix, where the same framework can be applied.

% \subsubsection{Deterministic annealing to mitigate multimodality}

% To reduce the chances of getting stuck in a local mode, one might use the deterministic annealing variant of the EM algorithm (DAEM) \citep{Ueda98}, which aims to find a minimum of a tempered version of the free energy function, i.e. 

% $H^{(k)}(w, \delta)=\dfrac{1}{t} \log \sum_{\gamma} \pi(\beta, \delta, \gamma, \sigma|y)^t$ \ with $0<t \leq 1$.

% In the above formula, the temperature parameter $1/t$ controls the degree of separation between the multiple modes of $H^{(k)}$. Multiple modes begin to appear as the temperature decreases and $H^{(k)}$ gradually resembles the actual incomplete posterior. Thus the strategy to set a high temperature in the beginning and decrease it later keeps us from the poorly chosen starting values.

% To extend the ESM algorithm to the deterministic annealing ESM iterations, we simply need to adapt the E-step to a modified posterior distribution, which is to replace $p_i^*$ with 
% \begin{equation}
% p_i^t=\dfrac{a_i^t}{a_i^t+b_i^t}=\dfrac{1}{1+[\dfrac{(1-\delta_{(t)})v_1}{\delta_{(t)} v_0} \exp\{\dfrac{\beta^2}{2\sigma^2}(\dfrac{1}{v_1}-\dfrac{1}{v_0})\}]^t},
% \end{equation}
% where $a_j=\pi(\beta_j^{(k)}|\gamma_j=1)P(\gamma_j=1|\delta^{(k)})$ and $b_j=\pi(\beta_j^{(k)}|\gamma_j=0)P(\gamma_j=0|\delta^{(k)})$.

\begin{algorithm}[tb]
   \caption{SGLD-SA with SSGL priors}
   \label{alg: SGLD-EM}
\begin{algorithmic}
   \State {\textbf{Initialize:} $\bm{\beta}^{(1)}$, $\bm{\rho}^{(1)}$, $\bm{\kappa}^{(1)}$, $\sigma^{(1)}$ and $\delta^{(1)}$ from scratch, set target sparse rates $\mathbb{D}$, $\mathbb{f}$ and $\mathbb{S}$}
   \For {$k \leftarrow 1:{k_{\max}}$}
%   \State {Sample mini-batch $\mathcal{B}^{(k)}$}
   \State {\textbf{Sampling}}
%    \State{$\bm{\beta}^{(k+1)}\leftarrow\bm{\beta}^{(k)} + \epsilon^{(k)} \nabla_{\bm{\beta}} {\pi}(\bm{\beta}^{(k)}, \sigma^{(k)}, \delta^{(k)}|\mathcal{B}^{(k)})+\sqrt{2\epsilon^{(k)} T^{-1}}\bm{\eta}^{(k)},\ \where\ \bm{\eta}^{(k)} \sim \mathcal{N}(\bm{0}, \bm{I})$}
   \State {$\bm{\beta}^{(k+1)}\leftarrow\bm{\beta}^{(k)} + \epsilon^{(k)} \nabla_{\bm{\beta}}  Q(\cdot|\mathcal{B}^{(k)})+\mathcal{N}({0, 2\epsilon^{(k)} \tau^{-1}})$}
   
   \State {\textbf{Stochastic Approximation for Latent Variables}}
%   \State {Compute $\bm{\tilde \rho}^{(k+1)}$, $ \bm{\tilde \kappa}^{(k+1)}$, $\tilde \sigma^{(k+1)}$, and $\tilde \delta^{(k+1)}$ according to Eq.(\ref{eq:10}),Eq.(\ref{eq:11}),Eq.(\ref{eq:13}) and Eq.(\ref{eq:14})}.
    % \State {Update the variables following Eq.(\ref{eq:10}),Eq.(\ref{eq:11}),Eq.(\ref{eq:13}) and Eq.(\ref{eq:14})}.
   \State {\textbf{SA}: $\bm{\rho}^{(k+1)}\leftarrow (1-\omega^{(k+1)}) \bm{\rho}^{(k)}+\omega^{(k+1)} \bm{\tilde \rho}^{(k+1)}$ following Eq.(\ref{eq:10})}
   \State {\textbf{SA}: $\bm{\kappa}^{(k+1)}\leftarrow(1-\omega^{(k+1)}) \bm{\kappa}^{(k)}+\omega^{(k+1)} \bm{\tilde \kappa}^{(k+1)}$ following Eq.(\ref{eq:11})}
   \State {\textbf{SA}: ${\sigma}^{(k+1)}\leftarrow (1-\omega^{(k+1)}) {\sigma}^{(k)}+\omega^{(k+1)} {\tilde \sigma}^{(k+1)}$ following Eq.(\ref{eq:13})}
   \State {\textbf{SA}: $\delta^{(k+1)}\leftarrow(1-\omega^{(k+1)}) \delta^{(k)}+\omega^{(k+1)} \tilde \delta^{(k+1)}$ following Eq.(\ref{eq:14})}

   \If {\textbf{Pruning}}
   \State {Prune the bottom-$s\%$ lowest magnitude weights}
    \State {Increase the sparse rate $s\leftarrow \mathbb{S}(1-\mathbb{D}^{k/{\mathbb{f}}})$}
   \EndIf
   \EndFor
\end{algorithmic}
\vspace{0em}
\end{algorithm}

% \section{The Convergence Property}
% \input{convergence.tex}

% \section{Related Works}
% \input{relworks.tex}

\section{Experiments}
\subsection{Simulation of Large-p-Small-n Regression}
We conduct the linear regression experiments with a dataset containing $n=100$ observations and $p=1000$ predictors. $\mathcal{N}_p(0, \bm{\Sigma})$ is chosen to simulate the predictor values $\bm{X}$  (training set) where $\bm{\Sigma}=(\Sigma)_{i,j=1}^p$ with $\Sigma_{i,j}=0.6^{|i-j|}$. Response values $\bm{y}$ are generated from $\bm{X}\bm{\beta}+\bm{\eta}$, where $\bm{\beta}=(\beta_1,\beta_2,\beta_3,0,0,...,0)'$ and $\bm{\eta}\sim \mathcal{N}_n(\bm{0}, 3 \bm{I}_n)$. We assume $\beta_1\sim \mathcal{N}(3, \sigma_c^2)$, $\beta_2\sim \mathcal{N}(2, \sigma_c^2)$, $\beta_3\sim \mathcal{N}(1, \sigma_c^2)$,  $\sigma_c=0.2$. We introduce some hyperparameters, but most of them are uninformative. We fix $\tau=1, \lambda=1, \nu=1, v_1=10, \delta=0.5, b=p$ and set $a=1$.  The learning rate follows $\epsilon^{(k)}= 0.001\times k^{-\frac{1}{3}}$, and the step size is given by $\omega^{(k)}=10\times (k+1000)^{-0.7}$.  We vary $v_0$ and $\sigma$ to show the robustness of SGLD-SA to different initializations. In addition, to show the superiority of the adaptive update, we compare SGLD-SA with the intuitive implementation of the EMVS to SGLD and refer to this algorithm as SGLD-EM, which is equivalent to setting $\omega^{(k)}:=1$ in SGLD-SA. To obtain the stochastic gradient, we randomly select 50 observations and calculate the numerical gradient. SGLD is sampled from the same hierarchical model without updating the latent variables.

%Sensitivity analysis shows that three hyperparameters are important: $v_0$, $a$ and $\sigma$, which are used to identify and regularize the over-fitted space. 

We simulate $500,000$ samples from the posterior distribution, and also simulate a test set with 50 observations to evaluate the prediction.  As shown in Fig.\ref{fig:lr} (d), all three algorithms are fitted very well in the training set, however, SGLD fails completely in the test set (Fig.\ref{fig:lr} (e)), {indicating the over-fitting problem of SGLD without proper regularization when the latent variables are not updated}. Fig.\ref{fig:lr} (f) shows that although SGLD-EM successfully identifies the right variables, the estimations are lower biased. The reason is that SGLD-EM fails to regulate the right variables with $L_2$ penalty, and $L_1$ leads to a greater amount of shrinkage for $\bm{\beta}_1$, $\bm{\beta}_2$ and $\bm{\beta}_3$ (Fig. \ref{fig:lr} (a-c)), {implying the importance of the adaptive update via SA in the stochastic optimization of the latent variables}. In addition, from Fig.~\ref{fig:lr}(a), Fig. \ref{fig:lr}(b) and Fig.\ref{fig:lr}(c), we see that SGLD-SA is the only algorithm among the three that quantifies the uncertainties of $\beta_1$, $\beta_2$ and $\beta_3$ and always gives the best prediction as shown in Table.\ref{Linear_regression_UQ_test}. We notice that {SGLD-SA is fairly robust to various hyperparameters}.

For the simulation of SGLD-SA in logistic regression and the evaluation of SGLD-SA on UCI datasets, we leave the results in Appendix C and D.

\begin{table*}[t]  % Linear regression with UQ pars
\caption{Predictive errors in linear regression based on a test set considering different $v_0$ and $\sigma$}
\label{Linear_regression_UQ_test}
\begin{center}
\begin{small}
% \scriptsize
\begin{sc}
\begin{tabular}{lccccr}
\toprule
MAE / MSE & $v_0$=$0.01,\sigma$=2 & $v_0$=$0.1, \sigma$=2 & $v_0$=$0.01,\sigma$=1 &  $v_0$=$0.1, \sigma$=1 &\\
\midrule %% testing
SGLD-SA     & \textbf{1.89} / \textbf{5.56}  & \textbf{1.72} / \textbf{5.64}  & \textbf{1.48} / \textbf{3.51} & \textbf{1.54} / $\ $\textbf{4.42} &\\
SGLD-EM        & 3.49 / 19.31  & 2.23 / 8.22 & 2.23 / 19.28 & 2.07 / $\ $6.94 &\\
SGLD        & 15.85 / 416.39 & 15.85 / 416.39 & 11.86 / 229.38 & 7.72 / 88.90 &\\
\bottomrule
\end{tabular}
\end{sc}
\end{small}
\end{center}
\vspace{-1em}
\end{table*}

\begin{figure*}[!ht]
  \centering
  \subfigure[Posterior estimation of $\beta_{1}$.]{\includegraphics[scale=0.25]{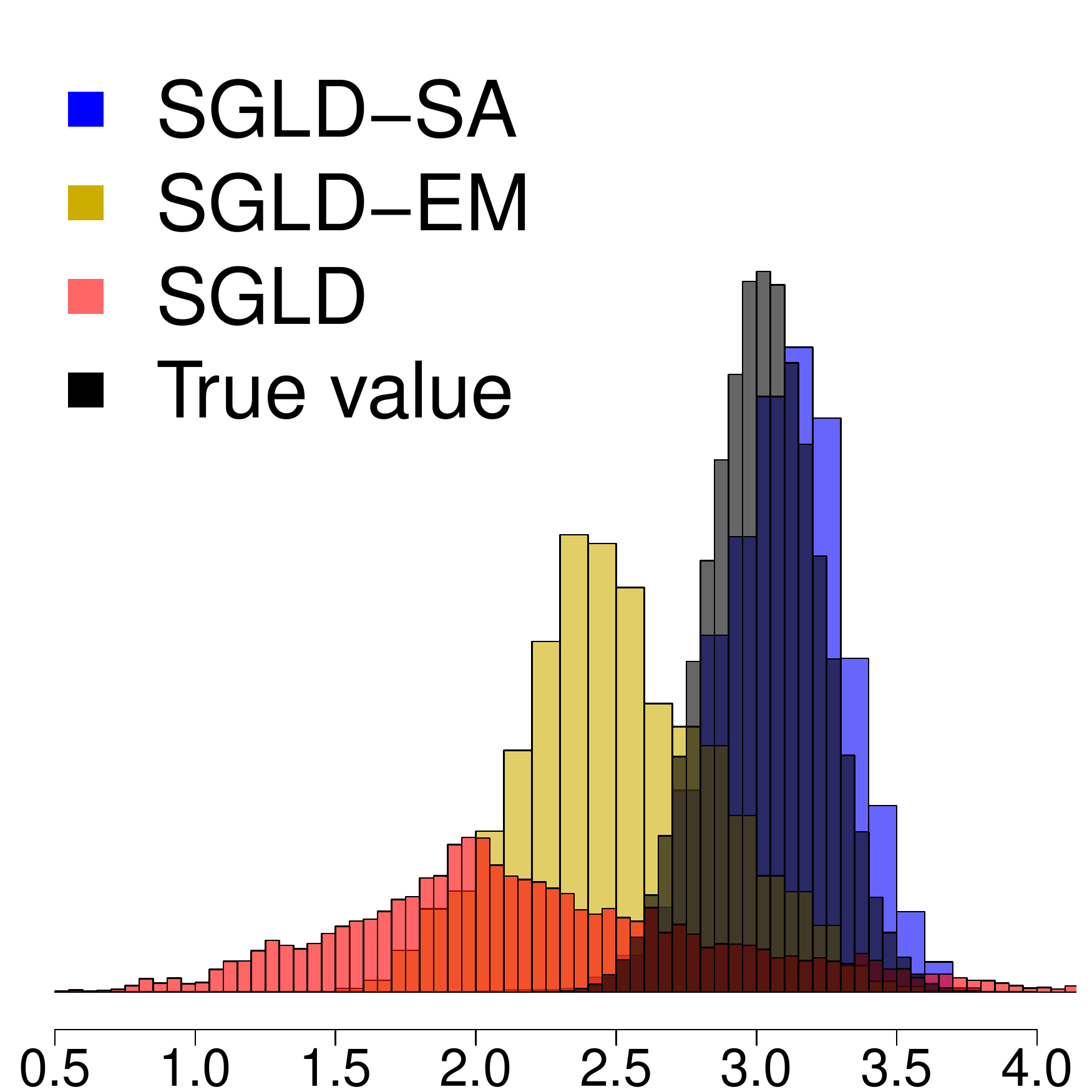}}\label{fig: 2a}
  \subfigure[Posterior estimation of $\beta_{2}$.]{\includegraphics[scale=0.25]{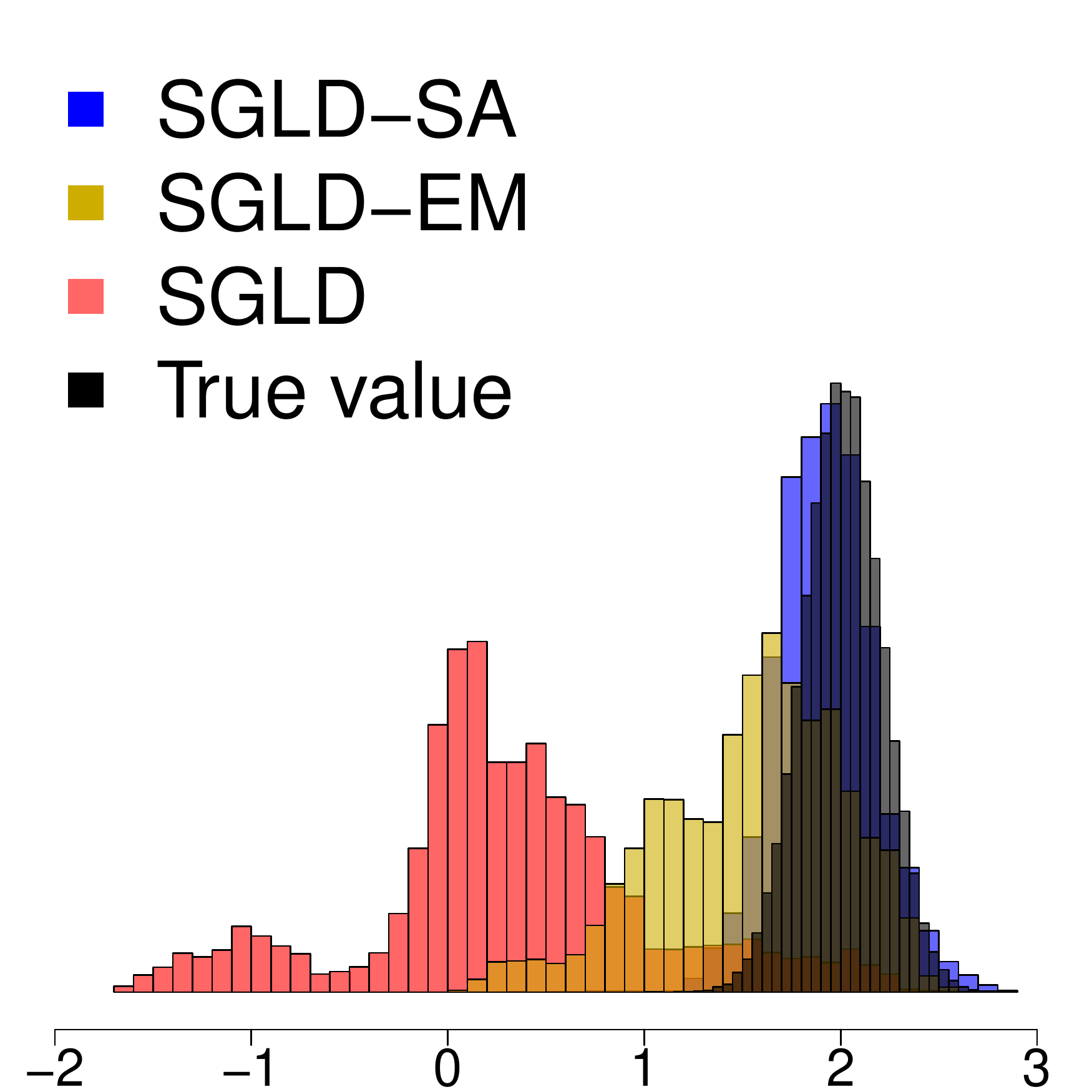}}\label{fig: 2b}
  \subfigure[Posterior estimation of $\beta_{3}$.]{\includegraphics[scale=0.25]{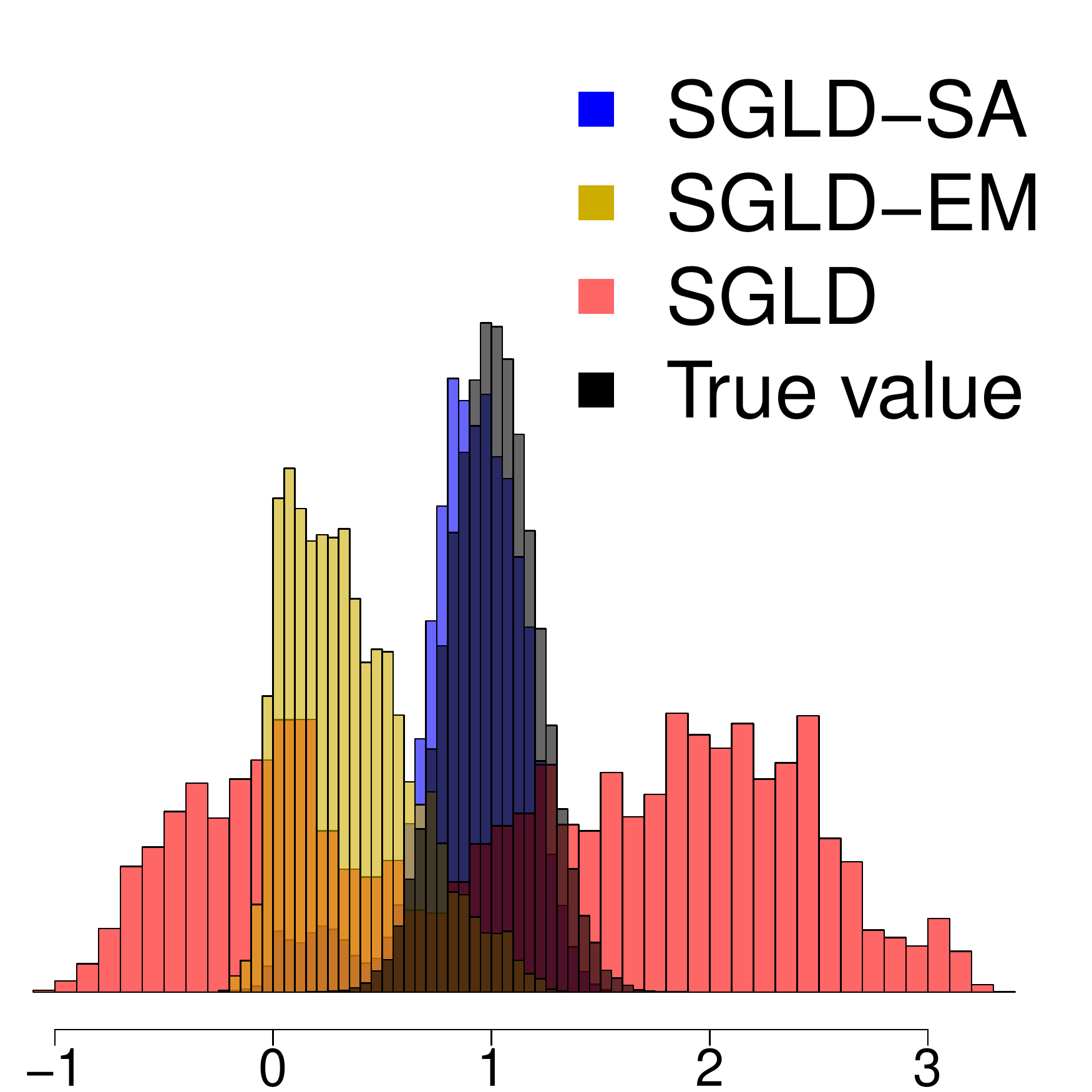}}\label{fig: 2c}
  \subfigure[Training performance.]{\includegraphics[scale=0.25]{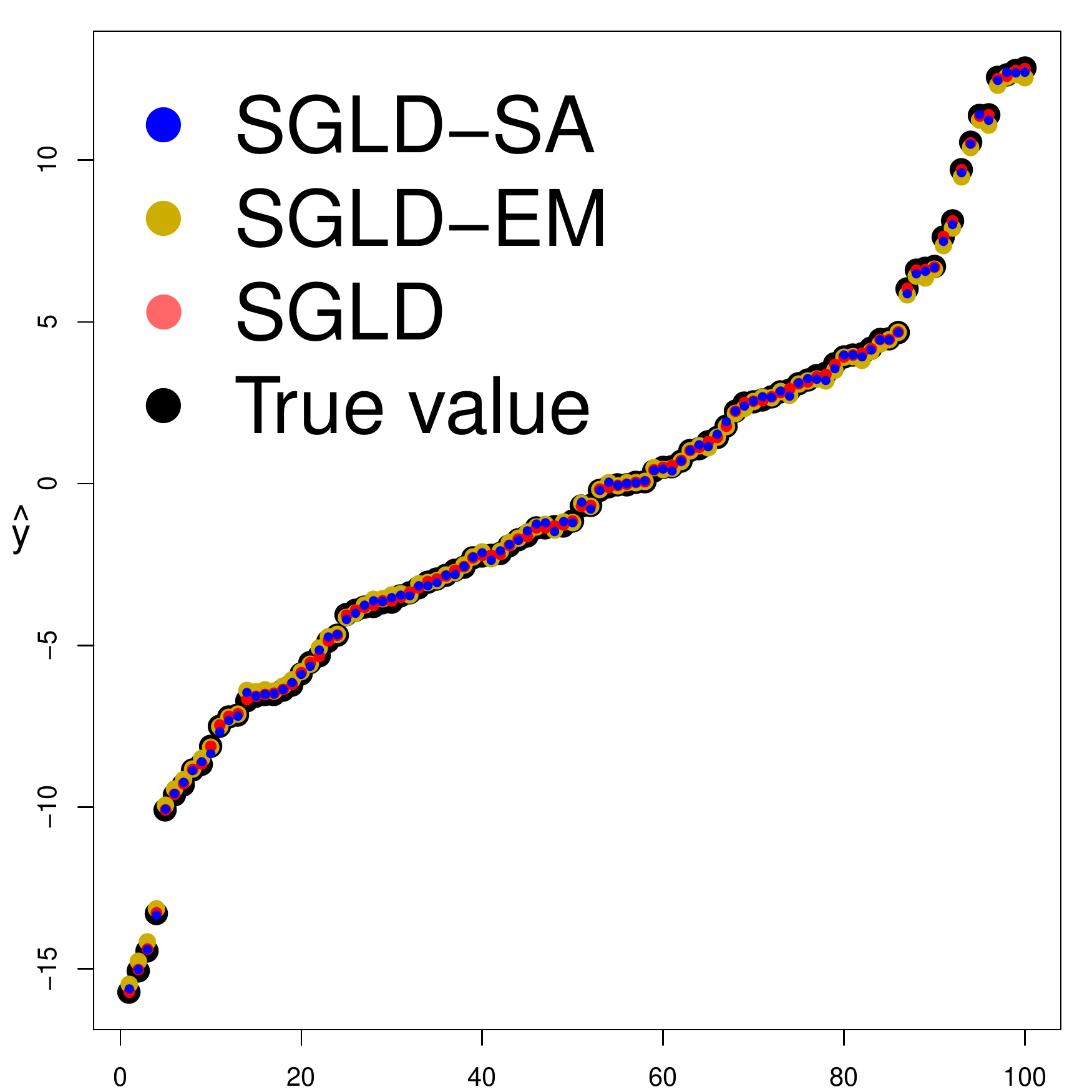}}\label{fig: 2d}
  \subfigure[Testing performance.]{\includegraphics[scale=0.25]{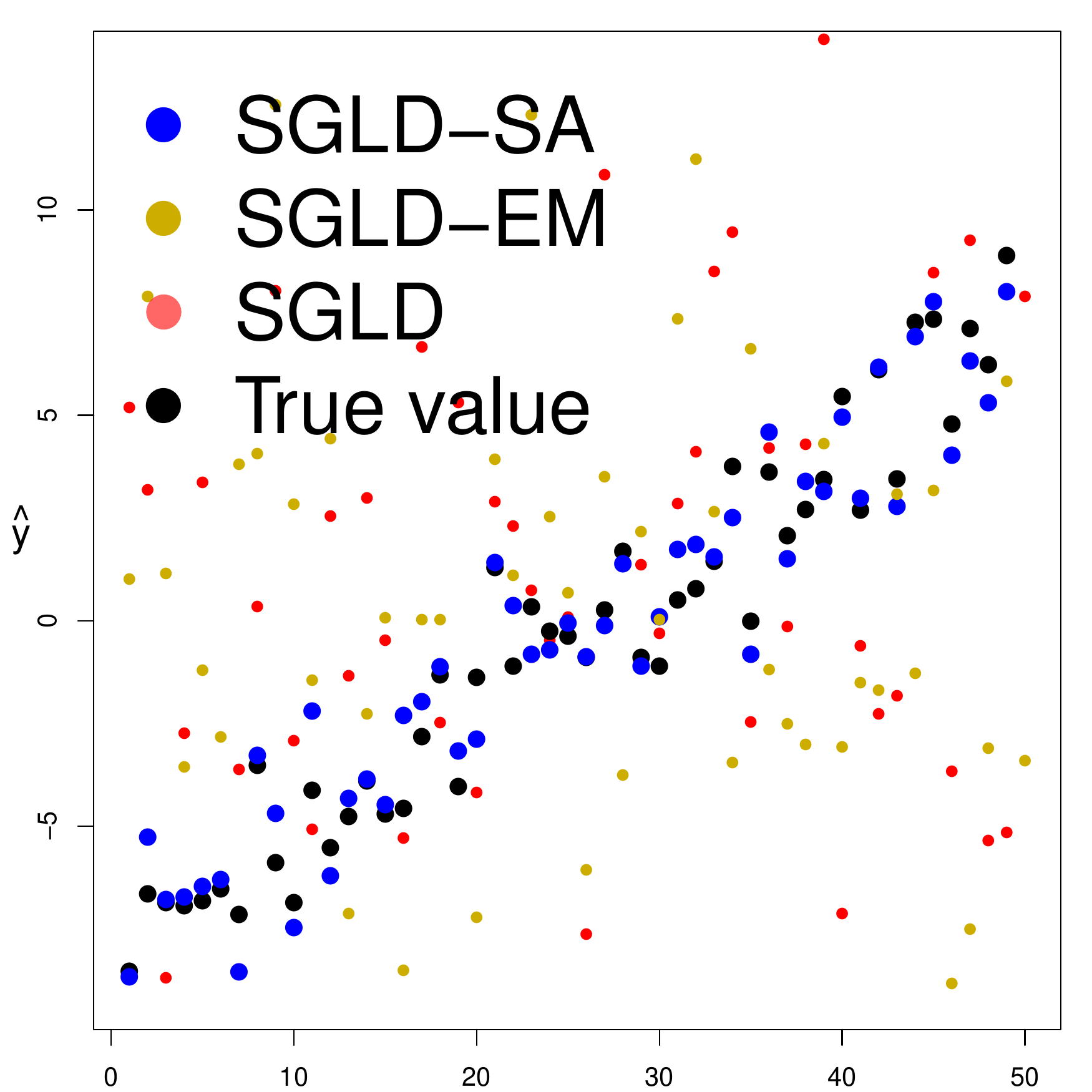}}\label{fig: 2e}
  \subfigure[Posterior mean vs truth.]{\includegraphics[scale=0.25]{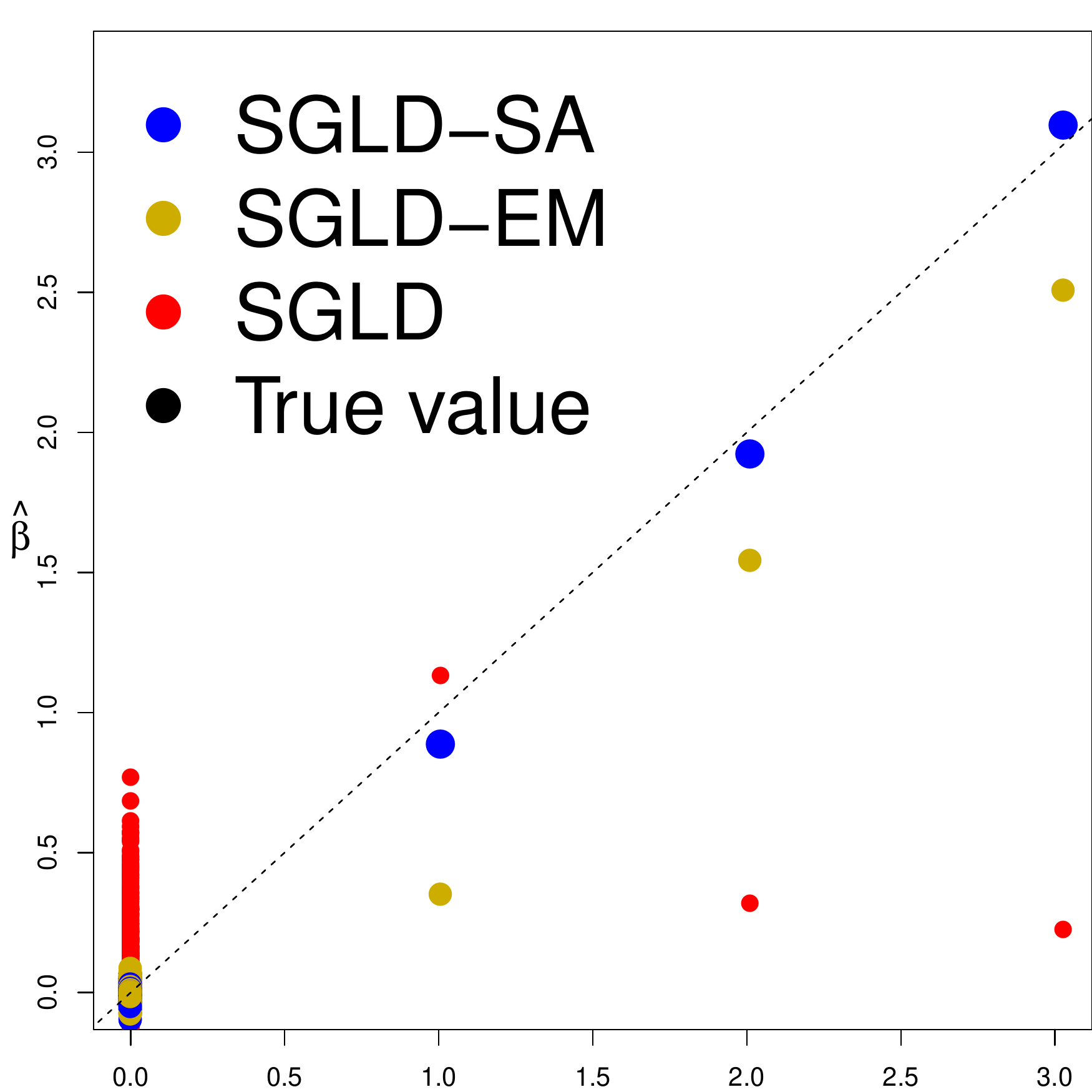}}\label{fig: 2f}
  \caption{Linear regression simulation when $v_0=0.1$ and $\sigma=1$.}
  \label{fig:lr}
  \vspace{-1em}
\end{figure*}

% Furthermore, as a byproduct of uncertainty test in Section 4.3, we also applied our framework to classify cats and dogs from Kaggle in Section 4.1. The first 20000 images of dog and cat images are used for training, the following 4000 for testing and the rest are used in Section 4.3. \textcolor{red}{OJO}The second DNN we used is the first mentioned CNN architecture in Keras Blog. We didn't tune our hyperparameters and just applied the same settings as in our first DNN. We also explored the possibility of adapting our framework to the pretrained VGG16 \citep{Simonyan16} network and do fine-tuning as suggested, a good performance can be achieved after 100 epochs to give accuracy $\sim$97\% within 3 hours using GPU 1080 Ti.

\subsection{Classification with Auto-tuning Hyperparameters}
\label{sec: mnist}

% It has 2 convolutional layers with a 2 $\times$ 2 max pooling after each layer and a 2-layer fully-connected layer \citep{Jarrett09}. The filter size in the convolutional layers is 5 $\times$ 5 and the feature maps are set to be 32 and 64, respectively. The fully-connected layers (FC) have 200 hidden nodes and 10 outputs. We use the rectified linear unit (ReLU) as the activation function between layers and employ a cross-entropy loss. 

The following experiments are based on non-pruning SG-MCMC-SA, the goal is to show that auto-tuning sparse priors are useful to avoid over-fitting. The posterior average is applied to each Bayesian model. We implement all the algorithms in Pytorch \citep{paszke2017}. The first DNN is a standard 2-Conv-2-FC CNN model of 670K parameters (see details in Appendix D.1).

%: It has two convolutional layers with a 2 $\times$ 2 max pooling after each layer and two fully-connected layers. The filter size in the convolutional layers is 5 $\times$ 5 and the feature maps are set to be 32 and 64, respectively  \citep{Jarrett09}. The fully-connected layers (FC) have 200 hidden nodes and 10 outputs. We use the rectified linear unit (ReLU) as the activation function between layers and employ a cross-entropy loss. 
The first set of experiments is to compare methods on the same model without using data augmentation (DA) and batch normalization (BN) \citep{Ioffe15}. We refer to the general CNN without dropout as {Vanilla}, with 50\% dropout rate applied to the hidden units next to FC1 as {Dropout}. %(DropConnect is not tested due to its similarity to Dropout). 

Vanilla and Dropout models are trained with Adam \citep{Kingma14} and Pytorch default parameters (with learning rate 0.001). We use SGHMC as a benchmark method as it is also sampling-based and has a close relationship with the popular momentum based optimization approaches in DNNs. {SGHMC-SA} differs from {SGHMC} in that {SGHMC-SA} keeps updating SSGL priors for the first FC layer while they are fixed in {SGHMC}. We set the training batch size $n=1000$, $a,b=p$ and $\nu,\lambda=1000$. The hyperparameters for SGHMC-SA are set to $v_0=1, v_1=0.1$ and $\sigma=1$ to regularize the over-fitted space. The learning rate is set to $5\times 10^{-7}$, and the step size is $\omega^{(k)}= 1\times (k+1000)^{-\frac{3}{4}}$. We use a thinning factor 500 to avoid a cumbersome system. Fixed temperature can also be powerful in escaping ``shallow" local traps \citep{Yuchen17}, our temperatures are set to $\tau=1000$ for MNIST and $\tau=2500$ for FMNIST.

%  The total sample size $N$ in Eq.(\ref{factor}) is set to $600,000$ instead of $60,000$.\footnote{Due to the use of DA in some experiments, we unify this number as $N=600,000$} 

% The results are averaged over 5 repeated experiments for a comprehensive evaluation. 
The four CNN models are tested on MNIST and Fashion MNIST (FMNIST) \citep{Xiao17} dataset. Performance of these models is shown in Tab.\ref{tab:classi}. Compared with SGHMC, our SGHMC-SA outperforms SGHMC on both datasets. We notice the posterior averages from SGHMC-SA and SGHMC obtain much better performance than Vanilla and Dropout. {Without using either DA or BN, SGHMC-SA achieves {99.59}\% which outperforms some state-of-the-art models}, such as Maxout Network (99.55\%) \citep{Goodfellow13} and pSGLD (99.55\%) \citep{Li16} . In F-MNIST, SGHMC-SA obtains 93.01\% accuracy, outperforming all other competing models. %such as stochastic pooling (99.53\%) \citep{Zeiler13}, Maxout Network (99.55\%) \citep{Goodfellow13} and pSGLD (99.55\%) \citep{Li16} . In F-MNIST, SGHMC-SA obtains 93.01\% accuracy, outperforming all other competing models. 

To further test the performance, we apply DA %\footnote[1]{For FMNIST and CIFAR10, random erasing {\citep{Zhong17}} is also used in DA} 
and BN to the following experiments (see details in Appendix D.2) and refer to the datasets as $\text{DA-MNIST}$ and $\text{DA-FMNIST}$. All the experiments are conducted using a 2-Conv-BN-3-FC CNN of 490K parameters. %:  It has two convolutional layers with a $2\times2$ max pooling after each layer and three fully-connected layers with batch normalization applied to the first FC layer. The filter size in the convolutional layers is $4\times4$ and the feature maps are both set to 64. We use $256\times64\times10$ fully-connected layers. 
Using this model, we obtain the {state-of-the-art 99.75\%} on $\text{DA-MNIST}$ (200 epochs) and {94.38}\% on $\text{DA-FMNIST}$ (1000 epochs) as shown in Tab. \ref{tab:classi}. The results are noticeable, because posterior average is only conducted on a single shallow CNN. 

\begin{table}[t]
\caption{Classification accuracy using shallow networks}
\label{data-table}
\begin{center}
% \scriptsize
\small
\begin{sc}
\label{tab:classi}
\begin{tabularx}{0.7\textwidth}{lcccc}
\toprule
Dataset & $\text{MNIST}$ & $\text{DA-MNIST}$ &  $\text{FMNIST}$ & $\text{DA-FMNIST}$\\
\midrule
Vanilla     & 99.31 & 99.54  & 92.73 & 93.14  \\
Dropout     & 99.38 & 99.56  & 92.81 & 93.35  \\
\midrule
SGHMC       & 99.47 & 99.63  & 92.88 & 94.29   \\
\textbf{SGHMC-SA}    & \textbf{99.59} & \textbf{99.75} & \textbf{93.01} & \textbf{94.38}  \\
\bottomrule
\end{tabularx}
\end{sc}
\end{center}
\vspace{-2em}
\end{table}

% \begin{table}[t]
% \caption{Classification accuracy on MNIST using shallow networks}
% \label{data-table}
% \begin{center}
% \begin{small}
% \begin{sc}
% \label{tab:classi}
% \begin{tabular}{lccc}
% \toprule
% Dataset & $\text{MNIST}$ & $\text{DA-MNIST}$ & $\lowercase{a}\text{MNIST-5}$ \\
% \midrule
% Vanilla     & 99.31 & 99.54 & 99.75  \\
% Dropout     & 99.38 & 99.56 & 99.74   \\
% \midrule
% SGHMC       & 99.55 & 99.71 & 99.77   \\
% \textbf{SGHMC-SA}    & \textbf{99.60} & \textbf{99.75} & \textbf{99.79}  \\
% \bottomrule
% \end{tabular}
% \end{sc}
% \end{small}
% \end{center}
% \vspace{-1em}
% \end{table}

% \begin{table}[t]
% \caption{Classification accuracy on FMNIST}
% \label{data-table2}
% \begin{center}
% \begin{small}
% \begin{sc}
% \label{tab:classi2}
% \begin{tabular}{lccc}
% \toprule
% Dataset & $\text{FMNIST}$ & $\text{DA-FMNIST}$ & $\lowercase{a}\text{FMNIST-5}$ \\
% \midrule
% Vanilla             & 92.73 & 93.14  &  94.48 \\
% Dropout             & 92.81 & 93.35  &  94.53 \\
% \midrule
% SGHMC               & 92.93 & 94.29  & 94.64  \\
% \textbf{SGHMC-SA}   & \textbf{93.01} & \textbf{94.38} & \textbf{94.78} \\
% \bottomrule
% \end{tabular}
% \end{sc}
% \end{small}
% \end{center}
% \vspace{-1em}
% \end{table}

\subsection{Defenses against Adversarial Attacks}
\label{adversarial}
Continuing with the setup in Sec. \ref{sec: mnist}, the third set of experiments focuses on evaluating model robustness. We apply the \textit{Fast Gradient Sign} method \citep{Goodfellow14a} to generate the adversarial examples with one single gradient step: 
\begin{equation}
\bm{x}_{adv} \leftarrow \bm{x}-\zeta \cdot \sign\{\delta_{\bm{x}} \max_{\y} \log \mathrm{P}(\y|\bm{x})\}, \nonumber
\end{equation}
where $\zeta$ ranges from $0.1, 0.2, \ldots, 0.5$ to control the different levels of  adversarial attacks.
% We expect less robust models perform considerably well on a certain dataset due to over-tuning; however, as the degree of adversarial attacks increases, the performance decreases sharply. In contrast, more robust models should be less affected by these adversarial attacks.

Similar to the setup in \citet{Li17}, we normalize the adversarial images by clipping to the range $[0, 1]$. In Fig. \ref{fig: adversarial}(b) and Fig.\ref{fig: adversarial}(d), we see no significant difference among all the four models in the early phase. As the degree of adversarial attacks arises, the images become vaguer as shown in Fig.\ref{fig: adversarial}(a) and Fig.\ref{fig: adversarial}(c). The performance of {Vanilla} decreases rapidly, reflecting its poor defense against adversarial attacks, while {Dropout} performs better than {Vanilla}. But {Dropout} is still significantly worse than the sampling based methods. The advantage of {SGHMC-SA} over {SGHMC} becomes more significant when $\zeta>0.25$. In the case of $\zeta=0.5$ in MNIST where the images are hardly recognizable, both {Vanilla} and {Dropout} models fail to identify the right images and their predictions are as worse as random guesses. However, {SGHMC-SA} model achieves roughly {11\%} higher than these two models and {1\%} higher than {SGHMC}, which demonstrates the robustness of  {SGHMC-SA}.

% Regarding the running time used, 100 epochs of the traditional training (Vanilla and Dropout) costs us 12.3 minutes, 100 epochs of \textbf{EP} training takes 32.3 minutes, as a contrast, 100 epochs of \textbf{sSGHMC} (90 epochs of ESM point estimation and 10 epochs of SGHMC) only spend 13.5 minutes using GPU Tesla P100, which showed our speed advantage over $\alpha$-divergence minimization.

% \begin{figure}
% \begin{subfigure}{0.17\linewidth}
% \includegraphics[scale=0.34]{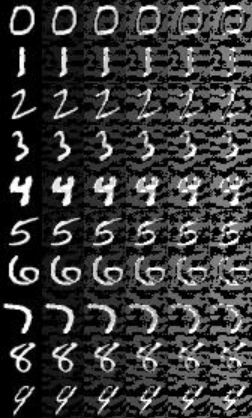}
% \caption{$\zeta=..., 0.3, ...$.}
% \label{fig: 3a}
% \end{subfigure}
% \hfill
% \begin{subfigure}{0.3\linewidth}
% \includegraphics[scale=0.42]{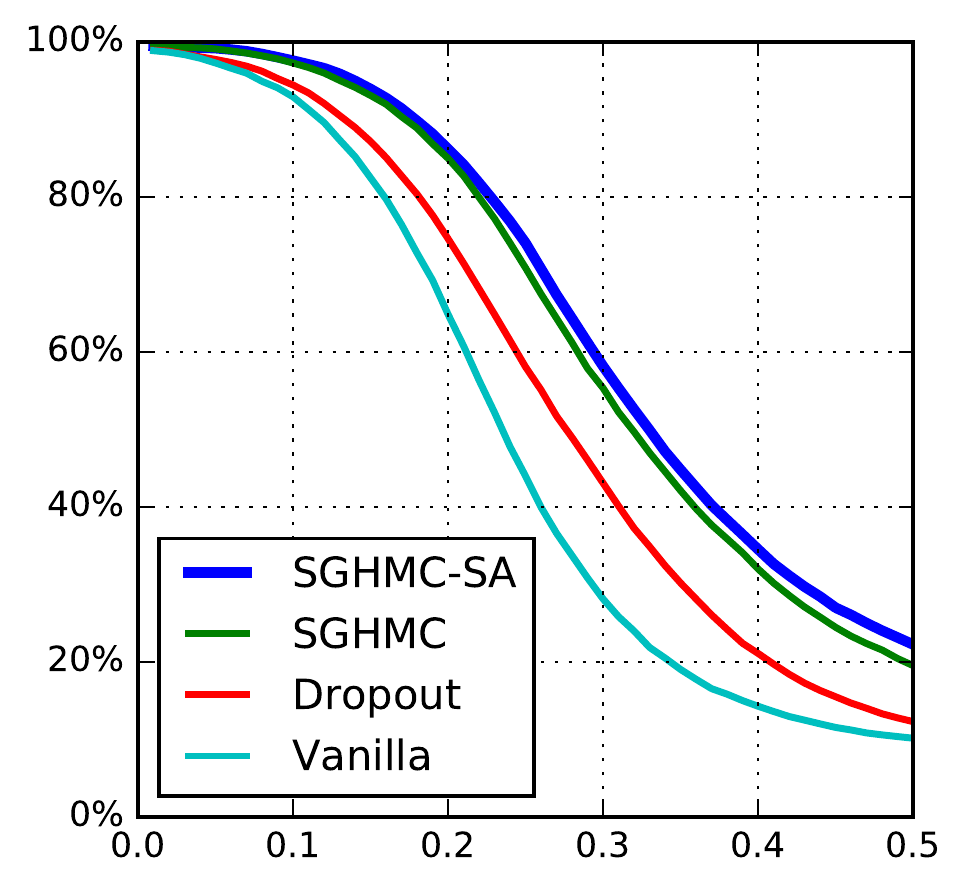}
% \caption{MNIST}
% \label{fig: 3b}
% \end{subfigure}
% \hspace{0.5em}
% \begin{subfigure}{0.17\linewidth}
% \includegraphics[scale=0.34]{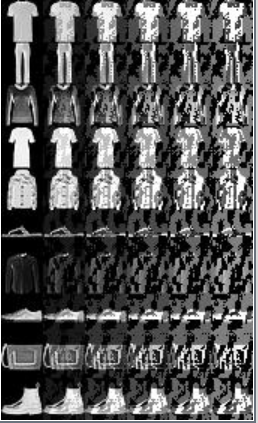}
% \caption{$\zeta=..., 0.3, ...$.}
% \label{fig: 3c}
% \end{subfigure}
% \hfill
% \begin{subfigure}{0.3\linewidth}
% \includegraphics[scale=0.42]{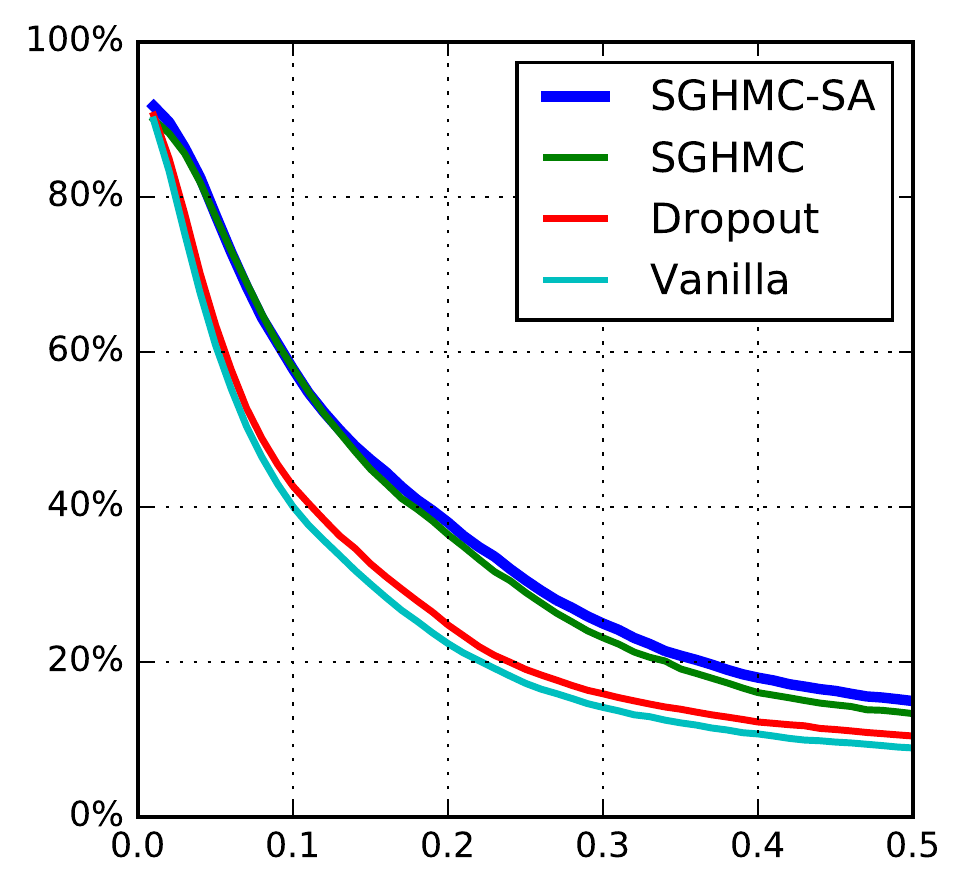}
% \caption{F-MNIST}
% \label{fig: 3d}
% \end{subfigure}
% \caption{Adversarial test accuracies based on adversarial images of different levels}
% \vspace{-1.5em}
% \end{figure}

\begin{figure*}[!ht]
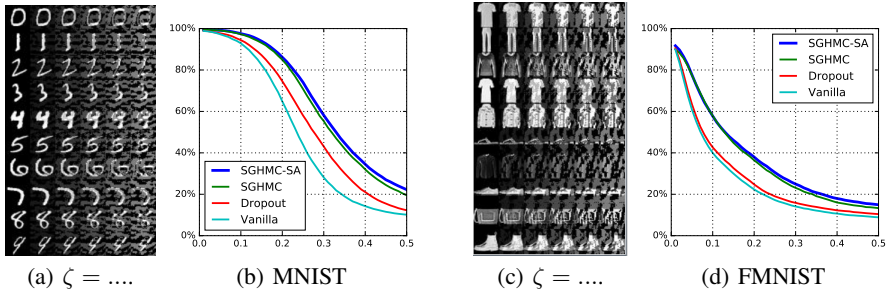

  \centering
  \subfigure[$\zeta=...$.]{\includegraphics[scale=0.3]{figures/MNIST.PNG}}\label{fig: 3a}
  \subfigure[MNIST]{\includegraphics[scale=0.35]{figures/adversarial_comparison.pdf}}\label{fig: 3b}\qquad
  \subfigure[$\zeta=...$.]{\includegraphics[scale=0.3]{figures/fmnist.PNG}}\label{fig: 3c}
  \subfigure[FMNIST]{\includegraphics[scale=0.35]{figures/adversarial_comparison_fmnist.pdf}}\label{fig: 3d}
  \caption{Adversarial test accuracies based on adversarial images of different levels}
  \label{fig: adversarial}
  \vspace{-1.5em}
\end{figure*}

\subsection{Residual Network Compression}

Our compression experiments are conducted on the CIFAR-10 dataset \citep{Alex09} with DA. SGHMC and the non-adaptive SGHMC-EM are chosen as baselines. Simulated annealing is used to enhance the non-convex optimization and the methods with simulated annealing are referred to as A-SGHMC, A-SGHMC-EM and A-SGHMC-SA, respectively. We report the best point estimate.
% \footnote{Since we are using the factor $N/n$ from Eq.(\ref{factor}), 0.002 comes from $0.1\times \frac{50,000}{1,000}$}

We first use SGHMC to train a Resnet20 model and apply the magnitude-based criterion to prune weights to all convolutional layers (except the very first one). All the following methods are evaluated based on the same setup except for different step sizes to learn the latent variables. 
% The step size is $\omega^{(k)}=0.02\times (k+1000)^{-0.75}$ for A-SGHMC-SA, $\omega^{(k)}\equiv 1$ for A-SGHMC-EM and $\omega^{(k)}\equiv 0$ for A-SGHMC. 
The sparse training takes 1000 epochs. The mini-batch size is 1000. The learning rate starts from 2e-9 \footnote[2]{It is equivalent to setting the learning rate to 1e-4 when we don't multiply the likelihood with $\frac{N}{n}$.} and is divided by 10 at the 700th and 900th epoch. We set the inverse temperature $\tau$ to 1000 and multiply $\tau$ by 1.005 every epoch . We fix $\nu=1000$ and $\lambda=1000$ for the inverse gamma prior. $v_0$ and $v_1$ are tuned based on different sparsity to maximize the performance. The smooth increase of the sparse rate follows the pruning rule in Algorithm \ref{alg: SGLD-EM}, and $\mathbb{D}$ and $\mathbb{f}$ are set to $0.99$ and $50$, respectively. The increase in the sparse rate $s$ is faster in the beginning and slower in the later phase to avoid destroying the network structure. Weight decay in the non-sparse layers $\mathcal{C}$ is set as 25. %To demonstrate the advantage of the self-adapting penalty functions over the others, we compare a list of fixed $L_1$ penalties and $L_2$ penalties to the sparse layers $\mathcal{X}$ to show the strength of our proposed method.
%mitigate the potential problem of local traps\footnote{To accommodate for the standard settings in training Resnet with crossentropy loss, we ignore the factor $N/n$ from Eq.(\ref{factor}), it is equivalent to set $\tau=2\times10^4$ in Eq.(\ref{factor})}.
\begin{table}[t]
\caption{Resnet20 Compression on CIFAR10. When $\mathbb{S}=0.9$, we fix $v_0=0.005$, $v_1=$1e-5; When $\mathbb{S}=0.7$, we fix $v_0=0.1$, $v_1=$5e-5; When $\mathbb{S}=0.5$, we fix $v_0=0.1$, $v_1=$5e-4; When $\mathbb{S}=0.3$, we fix $v_0=0.5$, $v_1=$1e-3.}
\label{compression_1}
\begin{center}
\begin{small}
% \scriptsize
\begin{sc}
% \begin{tabular}{lccccc}
\begin{tabularx}{0.55\textwidth}{ccccc}
\toprule
 Methods $\backslash\ $ $\ \mathbb{S}$ & $30\%$ & $50\%$ & $70\%$ & $90\%$ \\
\midrule
% $L_1=1\times10^{-4}$   &  \textbf{94.09}  & \textbf{93.74}   & 92.98   & 89.47 \\
% % $L_1=3\times10^{-4}$   &  94.03  & 93.68   & \textbf{93.01}   & 89.62 \\
% % $L_1=1\times10^{-3}$   &  92.88   &  92.75  &  92.62  & 89.95 \\
% $L_1=3\times10^{-3}$   &  92.48  &  92.49  &  92.10  & \textbf{91.12} \\
% % $L_1=1\times10^{-2}$   &  89.50  &  89.79  &  90.07  & 89.83 \\
% % $L_1=3\times10^{-2}$   &  83.02  &  82.36  &  82.88  & 83.50 \\
% \midrule
% % $L_2=1\times10^{-4}$   & 94.15  &  93.78  &  93.12  & 89.37 \\
% % $L_2=1\times10^{-3}$   & 94.03  &  93.64  &  93.06  & 89.29 \\
% % $L_2=1\times10^{-2}$   & 94.17  & 93.82   &  93.17  & 90.11 \\
% $L_2=3\times10^{-2}$   & \textbf{94.20}  & \textbf{93.99}   &  93.37  & 91.15 \\
% $L_2=1\times10^{-1}$   &  94.02  &  93.96  &  \textbf{93.50}  & 91.20 \\
% $L_2=3\times10^{-1}$   &  93.72  &  93.57  &  93.37  & \textbf{91.27} \\
% $L_2=6\times10^{-1}$   &  93.30  &  93.09  &  93.00  & 91.40 \\
% $L_2=1\times10^{0}$   &  92.94  & 92.90   &  92.56  & 91.16 \\
% \midrule
% \textbf{SGHMC-SSG}    & \textbf{94.27}  &  93.94  &  \textbf{93.63}  &  \textbf{91.51} \\
A-SGHMC  &  94.07  & 94.16   &  93.16  & 90.59 \\
A-SGHMC-EM   &  94.18  & 94.19   &  93.41  & 91.26 \\
SGHMC-SA    & {94.13}  &  {94.11}  &  {93.52}  &  {91.45} \\
\textbf{A-SGHMC-SA}    & \textbf{94.23}  &  \textbf{94.27}  &  \textbf{93.74}  &  \textbf{91.68} \\
\bottomrule
\end{tabularx}
\end{sc}
\end{small}
\end{center}
\vspace{-1em}
\end{table}

As shown in Table \ref{compression_1}, A-SGHMC-SA doesn't distinguish itself from A-SGHMC-EM and A-SGHMC when the sparse rate $\mathbb{S}$ is small, but outperforms the baselines given a large sparse rate.
%all the other methods based on fixed $L_1$ and $L_2$ penalties.
The pretrained model has accuracy 93.90\%, however,the prediction performance can be improved to the state-of-the-art $94.27\%$ with 50\% sparsity. Most notably, we obtain 91.68\% accuracy based on 27K parameters (90\% sparsity) in Resnet20. By contrast, targeted dropout obtained 91.48\% accuracy based on 47K parameters (90\% sparsity) of Resnet32 \citep{Aidan18}, BC-GHS achieves 91.0\% accuracy based on 8M parameters (94.5\% sparsity) of VGG models \citep{Louizos17}. We also notice that when simulated annealing is not used as in SGHMC-SA, the performance will decrease by 0.2\% to 0.3\%. When we use batch size 2000 and inverse temperature schedule $\tau^{(k)}=20\times 1.01^k$, A-SGHMC-SA still achieves roughly the same level, but the prediction of SGHMC-SA can be 1\% lower than A-SGHMC-SA. 

%As the sparse rate increases, a larger regularization is needed to yield a satisfying performance.

\section{Conclusion}
% about running time, from Chen 2014, when n is large, restrict all the matrices to be diagonal, resulting in the same time complexity with SGLD
We propose a novel AEB method to adaptively sample from hierarchical Bayesian DNNs and optimize the spike-and-slab priors, which yields a class of scalable adaptive sampling algorithms in DNNs. We prove the convergence of this approach to the asymptotically correct distribution. By adaptively searching and penalizing the over-fitted parameters, the proposed method achieves higher prediction accuracy over the traditional SG-MCMC methods in both simulated examples and real applications and shows more robustness towards adversarial attacks. Together with the magnitude-based weight pruning strategy and simulated annealing, the AEB-based method,  A-SGHMC-SA, obtains the state-of-the-art performance in model compression. %For larger networks, longer training time might be needed. Existing techniques, such as gradient noise control and temperature tuning, for accelerating SG-MCMC simulations should also be helpful to this proposed method.

%\footnote{Code will be available upon request.} %Simulated examples and real applications show the robustness of our proposed method over the traditional SG-MCMC methods and the resistance to over-fitting.

% Interesting future directions include applying SG-MCMC-SA to active learning to learn from datasets of smaller size and proving posterior consistency and the consistency of variable selection under various shrinkage priors concretely.

% \section{Acknowledgments}

$\newline$
\textbf{Acknowledgments}

We would like to thank Prof. Vinayak Rao, Dr. Yunfan Li and the reviewers for their insightful comments. We acknowledge the support from the National Science Foundation (DMS-1555072, DMS-1736364, DMS-1821233 and DMS-1818674) and the GPU grant program from NVIDIA.

% \input{appendix.tex}

% \subsubsection*{Acknowledgments}

\bibliography{mybib}
\bibliographystyle{plainnat}
\newpage
$\newline$
\appendix
\begin{Large}
\begin{center}
    \textbf{Supplimentary Material for \textit{An Adaptive Empirical  Bayesian Method for Sparse Deep Learning}}
\end{center}
\end{Large}

\begin{center}
    Wei Deng, Xiao Zhang, Faming Liang, Guang Lin \\
    Purdue University, West Lafayette, IN, USA \\
    \{deng106, zhang923, fmliang, guanglin\}@purdue.edu
\end{center}

In this supplementary material, we review the related methodologies in $\S$\ref{review}, prove the convergence in $\S$\ref{converge}, present additional simulation of logistic regression in $\S$\ref{logistic_reg}, illustrate more regression examples on UCI datasets in $\S$\ref{reg_UCI}, and show the experimental setup in $\S$\ref{exp_setup}.

\section{Stochastic Approximation}
\label{review}
% \subsection{Fokker-Planck Equation}

% The Fokker-Planck equation (FPE) can be formulated from the time evolution of the conditional distribution for a stationary random process. Denoting the probability density function of the random process at time t by $q(t, \beta)$, where $\beta$ is the parameter, the stochastic dynamics is given by 
% \begin{equation}
% \nabla_{t} q(t, \beta)=\nabla_{\bm{\beta}}(\nabla_{\bm{\beta}}(-L(\beta)) q(t,\beta))+\nabla_{\bm{\beta}}^2 q(t,\beta).
% \end{equation}
% It is known that as $t\rightarrow \infty$, $\nabla_{\bm{\beta}} (-L(\beta))+\nabla_{\bm{\beta}} \log(q(\beta))=0$, which implies $q(\beta)\propto e^{L(\beta)}$. In other words, $\lim_{t\rightarrow \infty}q(t, \beta)\propto e^{L(\beta)}$, i.e. $q(t, \beta)$ gradually converges to the Bayesian posterior $e^{L(\beta)}$.

% \subsection{Stochastic Approximation}
\subsection{Special Case: Robbins–Monro Algorithm}
Robbins–Monro algorithm is the first stochastic approximation algorithm to deal with the root finding problem which also applies to the stochastic optimization problem. Given the random output of $H(\bm{\theta}, \bm{\beta})$ with respect to $\bm{\beta}$, our goal is to find the equilibrium $\bm{\theta}^*$ such that
\begin{equation}
\begin{split}
\label{sa00}
h(\bm{\theta}^*)&=\E[H(\bm{\theta}^*, \bm{\beta})]=\int H(\bm{\theta}^*, \bm{\beta})f_{\bm{\theta}*}(d\bm{\beta})=0,
\end{split}
\end{equation}
% page 244 eq 1.10.3
where $\E_{\bm{\theta}^*}$ denotes the expectation with respect to the distribution of $\bm{\beta}$ given $\bm{\theta}^*$. To implement the Robbins–Monro Algorithm, we can generate iterates as follows\footnote{We change the notation a little bit, where $\bm{\beta}_k\in \mathbb{R}^d$ and $\bm{\theta}_k$ are the parameters at the $k$-th iteration.}:
\begin{itemize}
\item[(1)] Sample $\bm{\beta}_{k+1}$ from the invariant distribution $f_{\bm{\theta}_{k}}(\bm{\beta})$,

\item[(2)] Update $\bm{\theta}_{k+1}=\bm{\theta}_{k}+\omega_{k+1} H(\bm{\theta}_{k}, \bm{\beta}_{k+1}).$
\end{itemize}

Note that in this algorithm, $H(\bm{\theta}, \bm{\beta})$ is the unbiased estimator of $h(\bm{\theta})$, that is for $k\in \mathrm{N}^+$, we have
\begin{equation}
\label{sa11}
\E_{\bm{\theta}_{k}}[H(\bm{\theta}_{k}, \bm{\beta}_{k+1})-h(\bm{\theta}_{k})|\mathcal{F}_{k}]=0.
\end{equation}
If there exists an antiderivative $Q(\bm{\theta}, \bm{\beta})$ that satisfies $H(\bm{\theta}, \bm{\beta})= \nabla_{\bm{\theta}}Q(\bm{\theta}, \bm{\beta})$ and $E_{\bm{\theta}}[Q(\bm{\theta}, \bm{\beta})]$ is concave, it is equivalent to solving the stochastic optimization problem $\max_{\bm{\theta}\in\bm{\Theta}} E_{\bm{\theta}}[Q(\bm{\theta}, \bm{\beta})]$.

\subsection{General Stochastic Approximation}
The stochastic approximation algorithm is an iterative recursive algorithm consisting of two steps:
\begin{itemize}
\item[(1)] Sample $\bm{\beta}_{k+1}$ from the transition kernel  $\mathrm{\Pi}_{\bm{\theta_{k}}}(\bm{\beta}_{k}, \cdot)$, which admits $f_{\bm{\theta}_{k}}(\bm{\beta})$ as
the invariant distribution,

\item[(2)] Update $\bm{\theta}_{k+1}=\bm{\theta}_{k}+\omega_{k+1} H(\bm{\theta}_{k}, \bm{\beta}_{k+1})$.
\end{itemize}

The general stochastic approximation \citep{Albert90} differs from the Robbins-Monro algorithm in that sampling $x$ from a transition kernel  instead of a distribution introduces a Markov state-dependent noise $H(\theta_k, x_{k+1})-h(\theta_k)$. 

% If we can directly sample $\bm{\beta}_{k+1}$ from the $\mu_{\bm{\theta}}(\bm{\beta})$, then the algorithm is called the Robbins–Monro algorithm to deal with the root finding problem.

% Then we have $\E_{\bm{\theta}_{k}}[H(\bm{\theta}_{k}, \bm{\beta}_{k+1})|\mathcal{F}_{k}]=\int H(\bm{\theta}_{k}, \bm{\beta})\mathrm{\Pi}_{\bm{\theta}}(\bm{\beta}_{k}, d\bm{\beta})$. 

\section{Convergence Analysis}
\label{converge}
\subsection{Convergence of Hidden Variables}
The stochastic gradient Langevin Dynamics with a stochastic approximation adaptation (SGLD-SA) is a mixed half-optimization-half-sampling algorithm to handle complex Bayesian posterior with latent variables, e.g. the conjugate spike-slab hierarchical prior formulation. Each iteration of the algorithm consists of the following steps:
\begin{itemize}
\item[(1)] Sample $\bm{\beta}_{k+1}$ using SGLD based on $\bm{\theta}_{k}$, i.e.  
\begin{equation}
\begin{split}
\label{eqn-sample}
\bm{\beta}_{k+1}=\bm{\beta}_{k}+\epsilon\nabla_{\bm{\beta}}\tilde L(\bm{\beta}_{k}, \bm{\theta}_{k}) + \sqrt{{2\epsilon\tau^{-1}}}\bm{\eta}_{k},
\end{split}
\end{equation}
where $\bm{\eta}_{k}\sim\mathcal{N}(0, \bm{I})$;
\item[(2)] Optimize $\bm{\theta}_{k+1}$ from the following recursion
\end{itemize}
\begin{equation}
\begin{split}
\label{eqn-optimize}
\bm{\theta}_{k+1}&=\bm{\theta}_{k}+\omega_{k+1} \left(g_{\bm{\theta}_{k}}(\bm{\beta}_{k+1})-\bm{\theta}_{k}\right)\\
&=(1-\omega_{k+1})\bm{\theta}_{k}+\omega_{k+1}\bm{g}_{\bm{\theta}_{k}}(\bm{\beta}_{k+1}),\\
% &=\bm{\theta}_{k}+\omega_{k+1} H(\bm{\theta}_{k}, \bm{\beta}_{k+1})+\omega_{k+1}\bm{\bm{\xi}}_{k},\\
\end{split}
\end{equation}
where $\bm{g}_{\bm{\theta}_{k}}(\cdot)$ is some mapping to derive the optimal $\bm{\theta}$ based on the current $\bm{\beta}$. 

\textbf{Remark}: Define ${H(\bm{\theta}_{k}, \bm{\beta}_{k+1})=g_{\bm{\theta}_{k}}(\bm{\beta}_{k+1})-\bm{\theta}_{k}}$. In this formulation, our target is to find $\bm{\theta}$ that solves $h(\bm{\theta})=\E[H(\bm{\theta}, \bm{\beta})]=0$.
% , where $h(\bm{\theta}):=\int H(\bm{\theta}, \bm{\beta})f_{\bm{\theta}}(d\bm{\beta})$. $\E_{\bm{\theta}_{k}}[H(\bm{\theta}_{k}, \bm{\beta}_{k+1})|\mathcal{F}_{k}]$ converges to $h(\bm{\theta}_{k})$ as $k\rightarrow \infty$ , this algorithm falls to the category of the general stochastic approximation.

\subsubsection*{General Assumptions}
To provide the $L_2$ upper bound for SGLD-SA, we first lay out the following assumptions:
\begin{assumption}[Step size and Stability]
\label{ass1}
$\{\omega_{k}\}_{k\in \mathrm{N}}$ is a positive decreasing sequence of real numbers such that
\begin{equation}
\begin{split}
\label{a1}
&\omega_{k}\rightarrow 0, \ \ \sum_{k=1}^{\infty} \omega_{k}=+\infty.
\end{split}
\end{equation}
There exist $\delta>0$ and $\bm{\theta}^*$ such that for $\bm{\theta}\in \bm{\Theta}$: \footnote{$\|\cdot\|$ is short for $\|\cdot\|_2$}
\begin{equation}
\begin{split}
\label{a6}
\langle \bm{\theta}-\bm{\theta}^*, h(\bm{\theta})\rangle&\leq -\delta \|\bm{\theta}-\bm{\theta^*}\|^2,\\
\end{split}
\end{equation}
with additionally
\begin{equation}
\begin{split}
\label{a6-2}
\lim_{k\rightarrow \infty} \inf 2\delta  \dfrac{\omega_{k}}{\omega_{k+1}}+\dfrac{\omega_{k+1}-\omega_{k}}{{\omega_{k+1}^2}}>0.
\end{split}
\end{equation}
Then for any $\alpha \in (0, 1]$ and suitable $A$ and $B$, a practical $\omega_{k}$ can be set as
\begin{equation}
\begin{split}
\label{a6-3}
\omega_{k}=A(k+B)^{-\alpha}
\end{split}
\end{equation}
\end{assumption}

\begin{assumption}[Smoothness]
\label{ass_2_1}
$L(\bm{\beta}, \bm{\theta})$ is $M$-smooth with $M>0$, i.e. for any $\bm{\beta}, \bm{\iota}\in \bm{B}$, $\bm{\theta}, \bm{\upsilon}\in\bm{\Theta}$.
\begin{equation}
\begin{split}
\label{ass_2_1_eq}
&\|\nabla_{\bm{\beta}} L(\bm{\beta}, \bm{\theta})-\nabla_{\bm{\beta}} L(\bm{\iota}, \bm{\upsilon})\|\leq M\|\bm{\beta}-\bm{\iota}\|+M\|\bm{\theta}-\bm{\upsilon}\|. 
\end{split}
\end{equation}
\end{assumption}

\begin{assumption}[Dissipativity] There exist constants $m>0, b\geq 0$, s.t. for all $\bm{\beta} \in \bm{\beta}$ and $\bm{\theta} \in \bm{\Theta}$, we have 
\label{ass_dissipative}
\begin{equation}
\label{eq:01}
\langle \nabla_{\bm{\beta}} L(\bm{\beta}, \bm{\theta}), \bm{\beta}\rangle\leq b-m\|\bm{\beta}\|^2.
\end{equation}
\label{as:2_2}
\end{assumption}
\begin{assumption}[Gradient condition] The stochastic noise $\bm{\chi}_{k}\in \bm{B}$, which comes from $\nabla_{\bm{\beta}}\tilde L(\bm{\beta}_{k}, \bm{\theta}_{k})-\nabla_{\bm{\beta}} L(\bm{\beta}_{k}, \bm{\theta}_{k})$, is a white noise or Martingale difference noise and is independent with each other.
\begin{equation}
\begin{split}
\label{eq:19}
\E[\bm{\chi}_{k}|\mathcal{\bm{F}}_{k}]&=0.\\
\end{split}
\end{equation}
The scale of the noise is bounded by
\begin{equation}
\begin{split}
\label{eq:19_11}
\E\|\bm{\chi}\|^2 \leq M^2 \E\|\bm{\beta}\|^2+M^2 \E\|\bm{\theta}\|^2+B^2 .\\
\end{split}
\end{equation}
for constants $M, B>0$.
\label{as:1}
\end{assumption}
In addition to the assumptions, we also assume the existence of Markov transition kernel, the proof goes beyond the scope of our paper.
\begin{proposition}
There exist constants $M, B>0$ such that 
\begin{equation}
\begin{split}
\label{g_bound}
\|g_{\bm{\theta}}(\bm{\beta})\|^2\leq M^2\|\bm{\beta}\|^2+B^2
\end{split}
\end{equation}
\begin{proof}
As shown in Eq.(12), Eq.(13) and Eq.(15) in the main body, $\bm{\rho}, \delta$ and $\bm{\kappa}$ are clearly bounded. It is also easy to verify that $\sigma$ in Eq.(14) in the main body satisfies (\ref{g_bound}). For convenience, we choose the same $M$ and $B$ (large enough) as in (\ref{eq:19_11}).
\end{proof}
\end{proposition}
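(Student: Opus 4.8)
The plan is to split $g_{\bm\theta}(\bm\beta)$ into the four blocks that return the conditionally optimal values of $\bm\rho$, $\bm\kappa$, $\sigma$ and $\delta$ given the current $\bm\beta$ (and the current $\bm\theta\in\bm\Theta$), and to bound each block separately; since only the $\sigma$-block actually depends on $\bm\beta$, the estimate reduces to a short one-dimensional calculation.

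First I would dispose of the three easy blocks. For $\bm\rho$, every coordinate $\tilde\rho_{lj}=a_{lj}/(a_{lj}+b_{lj})\in[0,1]$, so that block has norm at most $(\sum_{l}p_l)^{1/2}$. For $\bm\kappa$, the closed forms $\tilde\kappa_{lj0}=(1-\rho_{lj})/v_0$ and $\tilde\kappa_{lj1}=\rho_{lj}/v_1$ lie in $[0,v_0^{-1}]$ and $[0,v_1^{-1}]$ because the argument $\bm\theta$ ranges over the bounded set $\bm\Theta$ (so $\rho_{lj}\in[0,1]$), whence that block is bounded by a constant depending only on $p,v_0,v_1$. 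For $\delta$, $\tilde\delta_l=(\sum_j\rho_{lj}+a-1)/(a+b+p_l-2)$ is a bounded ratio. None of these three involve $\bm\beta$, so together they contribute only to the additive constant $B^2$.

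The substance is the $\sigma$-block. In both the regression and classification cases the update reads $\tilde\sigma=(Z_b+\sqrt{Z_b^2+4Z_aZ_c})/(2Z_a)$ with $Z_a$ a fixed strictly positive constant ($Z_a\ge N$ for regression, $Z_a\ge\nu+2$ for classification) and $Z_b,Z_c\ge0$; using $\sqrt{u+w}\le\sqrt u+\sqrt w$ I would bound $\tilde\sigma\le Z_b/Z_a+\sqrt{Z_c/Z_a}$, hence $\tilde\sigma^2\le 2Z_b^2/Z_a^2+2Z_c/Z_a$, so it suffices that $Z_b$ grow at most linearly and $Z_c$ at most quadratically in $\|\bm\beta\|$. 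For $Z_b=\sum_{l\in\mathcal{X}}\|\bm{\mathcal{V}}_{0l}\bm\beta_l\|_1$ this is immediate: the diagonal entries of $\bm{\mathcal{V}}_{0l}$ lie in $[0,v_0^{-1}]$ and $\|\bm\beta_l\|_1\le\sqrt{p_l}\,\|\bm\beta_l\|$, so Cauchy--Schwarz over $l$ gives $Z_b\le(\sqrt p/v_0)\|\bm\beta\|$. For $Z_c$: the term $J=\sum_{l\in\mathcal{X}}\|\bm{\mathcal{V}}_{1l}^{1/2}\bm\beta_l\|^2\le v_1^{-1}\|\bm\beta\|^2$, the term $\nu\lambda$ is constant, and the residual term $I=\frac{N}{n}\sum_{i\in\mathcal{S}}(y_i-\psi(\bm x_i;\bm\beta))^2$ (present only for regression) is at most quadratic in $\|\bm\beta\|$. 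Assembling the $\sigma$-bound with the three bounded blocks yields $\|g_{\bm\theta}(\bm\beta)\|^2\le M^2\|\bm\beta\|^2+B^2$ with explicit constants, and one finally enlarges $M,B$ so that the same pair serves in~\eqref{eq:19_11}, which keeps the later estimates uniform.

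The only step that is not pure algebra --- and the main obstacle --- is the quadratic control of the residual term $I$: for a generic deep non-linear map $\psi(\bm x;\cdot)$ the squared residual might a priori grow faster than quadratically in $\|\bm\beta\|$. The natural fix is to invoke the smoothness assumption (Assumption~\ref{ass_2_1}), which forces the negative log-likelihood, hence $I$, to have at most quadratic growth; alternatively one restricts to architectures for which $\psi(\bm x;\cdot)$ itself grows at most linearly, making $I$ quadratic directly, and in the classification case $I$ does not even appear.
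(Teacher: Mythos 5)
Your proposal is correct and follows essentially the same route as the paper's proof: decompose $g_{\bm{\theta}}(\bm{\beta})$ into the $\bm{\rho}$, $\bm{\kappa}$, $\delta$ and $\sigma$ blocks, observe that the first three are uniformly bounded, and check that the $\sigma$-update grows at most linearly in $\|\bm{\beta}\|$. The one substantive difference is that where the paper simply asserts the $\sigma$-bound is ``easy to verify,'' you actually carry out the verification and correctly isolate the only delicate point --- that the residual term $I$ in the regression case need not be quadratic in $\|\bm{\beta}\|$ for a generic deep nonlinear $\psi$ --- and your proposed repair via the smoothness (hence at-most-quadratic growth) of $L$ in Assumption 2 is the right way to close that gap.
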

\begin{proposition}
\label{lemma:1_1}
For any $\bm{\beta}\in\bm{B}$, it holds that
\begin{equation}
\begin{split}
\label{linearbounded}
\|\nabla_{\bm{\beta}} L(\bm{\beta}, \bm{\theta})\|^2\leq 3M^2\|\bm{\beta}\|^2+3M^2\|\bm{\theta}\|^2+3B^2
\end{split}
\end{equation}
for constants $M$ and $B$.
\begin{proof}
Suppose there is a minimizer $(\bm{\theta}^*, \bm{\beta}^*)$ such that $\nabla_{\bm{\beta}} L(\bm{\beta}^*, \bm{\theta}^*)=0$ and $\bm{\theta}^*$ has reached the stationary point, following Assumption \ref{ass_dissipative} we have,
\begin{equation*}
\begin{split}
\langle \nabla_{\bm{\beta}} L(\bm{\beta}^*, \bm{\theta}^*), \bm{\beta}^*\rangle\leq b-m\|\bm{\beta}^*\|^2.\\
\end{split}
\end{equation*}
Therefore, $\|\bm{\beta}^*\|^2\leq \frac{b}{m}$. Since $\bm{\theta}^*$ is the stationary point, $\bm{\theta}^*=(1-
\omega)\bm{\theta}^*+\omega g_{\bm{\theta}^*}(\bm{\beta}^*)$. By (\ref{g_bound}), we have $\|g_{\bm{\theta}^*}(\bm{\beta}^*)\|^2\leq M^2\|\bm{\beta}^*\|^2+B^2$, which implies that $\|\bm{\theta}^*\|^2 = \|g_{\bm{\theta}^*}(\bm{\beta}^*)\|^2\leq  M^2\|\bm{\beta}^*\|^2+B^2\leq \frac{b}{m} M^2 + B^2$. By the smoothness assumption \ref{ass_2_1}, we have
\begin{equation*}
\begin{split}
&\|\nabla_{\bm{\beta}} L(\bm{\beta}, \bm{\theta})\|\\
\leq &\|\nabla_{\bm{\beta}} L(\bm{\beta}^*, \bm{\theta}^*)\|+M\|\bm{\beta}-\bm{\beta}^*\|+M\|\bm{\theta}-\bm{\theta}^*\|\\
\leq & 0+M(\|\bm{\beta}\|+\sqrt{\frac{b}{m}}+\|\bm{\theta}\|+\|\bm{\theta}^*\|)\\
\leq & M\|\bm{\theta}\|+M\|\bm{\beta}\|+ M(\sqrt{\frac{b}{m}}+\sqrt{\frac{b}{m}M^2+B^2})\\
\leq & M\|\bm{\theta}\|+M\|\bm{\beta}\|+\bar B,\\
\end{split}
\end{equation*}
where $\bar B =M(\sqrt{\frac{b}{m}M^2+B^2}+\sqrt{\frac{b}{m}})$.
Therefore, $$\|\nabla_{\bm{\beta}} L(\bm{\beta}, \bm{\theta})\|^2\leq 3M^2\|\bm{\beta}\|^2+3M^2\|\bm{\theta}\|^2+3\bar B^2.$$

For notation simplicity, we can choose the same $B$ (large enough) to bound (\ref{eq:19_11}), (\ref{g_bound}) and (\ref{linearbounded}).
\end{proof}
\end{proposition}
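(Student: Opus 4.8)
The plan is to control $\|\nabla_{\bm{\beta}} L(\bm{\beta}, \bm{\theta})\|$ by anchoring it at a fixed reference pair and then propagating linear growth through the Lipschitz (smoothness) assumption. First I would invoke the existence of a minimizer $(\bm{\beta}^*, \bm{\theta}^*)$ at which the gradient vanishes, $\nabla_{\bm{\beta}} L(\bm{\beta}^*, \bm{\theta}^*) = 0$, and at which $\bm{\theta}^*$ is a fixed point of the recursion (\ref{eqn-optimize}); this anchor is the whole point, since its norm will turn out to be bounded by absolute constants and can therefore be absorbed into the additive term rather than scaling with $\|\bm{\beta}\|$ or $\|\bm{\theta}\|$.

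Next I would bound the two anchor norms. Evaluating the dissipativity inequality of Assumption \ref{ass_dissipative} at $(\bm{\beta}^*, \bm{\theta}^*)$ and using $\nabla_{\bm{\beta}} L(\bm{\beta}^*, \bm{\theta}^*)=0$ collapses the left-hand side to zero, giving $0 \le b - m\|\bm{\beta}^*\|^2$, hence $\|\bm{\beta}^*\|^2 \le b/m$. For $\bm{\theta}^*$, stationarity in (\ref{eqn-optimize}) forces $\bm{\theta}^* = g_{\bm{\theta}^*}(\bm{\beta}^*)$, so the growth bound (\ref{g_bound}) together with the previous step yields $\|\bm{\theta}^*\|^2 = \|g_{\bm{\theta}^*}(\bm{\beta}^*)\|^2 \le M^2\|\bm{\beta}^*\|^2 + B^2 \le \frac{b}{m}M^2 + B^2$, again a constant.

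I would then propagate these bounds through smoothness. Applying Assumption \ref{ass_2_1} with $(\bm{\iota}, \bm{\upsilon}) = (\bm{\beta}^*, \bm{\theta}^*)$ and the triangle inequality gives $\|\nabla_{\bm{\beta}} L(\bm{\beta}, \bm{\theta})\| \le M\|\bm{\beta}\| + M\|\bm{\theta}\| + M(\|\bm{\beta}^*\| + \|\bm{\theta}^*\|)$, where the last group is an absolute constant $\bar B := M(\sqrt{b/m} + \sqrt{(b/m)M^2 + B^2})$ by the two bounds just obtained. Squaring and applying the elementary inequality $(x+y+z)^2 \le 3(x^2 + y^2 + z^2)$ produces $\|\nabla_{\bm{\beta}} L(\bm{\beta}, \bm{\theta})\|^2 \le 3M^2\|\bm{\beta}\|^2 + 3M^2\|\bm{\theta}\|^2 + 3\bar B^2$, which is exactly the claimed form after relabeling constants.

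The computation is essentially routine, so the real content lies in two bookkeeping points rather than in an analytic obstacle. The first is that the argument silently assumes the existence and stationarity of the anchor $(\bm{\beta}^*, \bm{\theta}^*)$; I would state this as the explicit starting hypothesis, since it is what converts the purely qualitative Lipschitz and dissipativity assumptions into a concrete additive constant. The second, and the only place genuinely needing care, is the uniform choice of constants: the symbols $M$ and $B$ already appear in Assumption \ref{as:1}, in the $g$-growth bound (\ref{g_bound}), and in the present claim, and the derivation actually produces $\bar B$ rather than $B$ in the additive term. I would therefore finish by taking a single $B$ large enough to dominate $\bar B$ simultaneously with the constants in (\ref{eq:19_11}) and (\ref{g_bound}), so that one pair $(M, B)$ validates all three inequalities at once.
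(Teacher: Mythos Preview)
Your proposal is correct and follows essentially the same route as the paper: anchor at a stationary pair $(\bm{\beta}^*,\bm{\theta}^*)$, bound $\|\bm{\beta}^*\|$ via dissipativity and $\|\bm{\theta}^*\|$ via the fixed-point relation and (\ref{g_bound}), then use the smoothness assumption plus $(x+y+z)^2\le 3(x^2+y^2+z^2)$ and finally enlarge $B$ to absorb $\bar B$. The two bookkeeping points you flag (assumed existence of the anchor and the uniform choice of constants) are precisely the ones the paper handles in the same way.
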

\begin{lemma}[Uniform $L_2$ bound]
\label{lemma:1}
For all $\ 0<\epsilon<\operatorname{Re}(\tfrac{m-\sqrt{m^2-4M^2(M^2+1)}}{4M^2(M^2+1)})$, there exist $G, \overline{G}>0$ such that
$\sup \E\|\bm{\beta}_{k}\|^2 \leq G$ and $\sup \E\|\bm{\theta}_{k}\|^2 \leq \overline{G}$, where $G=\|\bm{\beta}_0\|^2 + \tfrac{1}{m}(b+2\epsilon B^2(M^2+1)+\tau d)$ and $\overline{G}=M^2G+B^2$.
\end{lemma}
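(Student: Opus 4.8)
The plan is to establish the uniform $L_2$ bounds by induction on $k$, first for $\E\|\bm{\beta}_k\|^2$ and then deriving the bound for $\E\|\bm{\theta}_k\|^2$ as an immediate consequence of Proposition stating $\|g_{\bm{\theta}}(\bm{\beta})\|^2 \leq M^2\|\bm{\beta}\|^2 + B^2$ together with the convex-combination form of the update \eqref{eqn-optimize}. Since $\bm{\theta}_{k+1} = (1-\omega_{k+1})\bm{\theta}_k + \omega_{k+1}\bm{g}_{\bm{\theta}_k}(\bm{\beta}_{k+1})$ and $\omega_{k+1}\in(0,1)$ for $k$ large, Jensen's inequality (convexity of $\|\cdot\|^2$) gives $\E\|\bm{\theta}_{k+1}\|^2 \leq (1-\omega_{k+1})\E\|\bm{\theta}_k\|^2 + \omega_{k+1}(M^2\E\|\bm{\beta}_{k+1}\|^2 + B^2)$, so once $\sup_k\E\|\bm{\beta}_k\|^2\leq G$ is known, a second (trivial) induction yields $\sup_k\E\|\bm{\theta}_k\|^2 \leq M^2 G + B^2 =: \overline{G}$.

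The core of the argument is the bound on $\E\|\bm{\beta}_k\|^2$. First I would expand the SGLD recursion \eqref{eqn-sample}:
\begin{equation*}
\|\bm{\beta}_{k+1}\|^2 = \|\bm{\beta}_k\|^2 + 2\epsilon\langle \nabla_{\bm{\beta}}\tilde L(\bm{\beta}_k,\bm{\theta}_k), \bm{\beta}_k\rangle + \epsilon^2\|\nabla_{\bm{\beta}}\tilde L(\bm{\beta}_k,\bm{\theta}_k)\|^2 + 2\sqrt{2\epsilon\tau^{-1}}\langle \bm{\beta}_k + \epsilon\nabla_{\bm{\beta}}\tilde L(\bm{\beta}_k,\bm{\theta}_k), \bm{\eta}_k\rangle + 2\epsilon\tau^{-1}\|\bm{\eta}_k\|^2.
\end{equation*}
Taking conditional expectation given $\mathcal{F}_k$, the cross term with $\bm{\eta}_k$ vanishes (zero-mean, independent noise), $\E[\|\bm{\eta}_k\|^2]=d$, and $\E[\langle\nabla_{\bm{\beta}}\tilde L,\bm{\beta}_k\rangle|\mathcal{F}_k] = \langle\nabla_{\bm{\beta}}L(\bm{\beta}_k,\bm{\theta}_k),\bm{\beta}_k\rangle$ by the martingale-difference property \eqref{eq:19} of $\bm{\chi}_k$. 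Then I apply the dissipativity Assumption~\ref{ass_dissipative} to control $\langle\nabla_{\bm{\beta}}L,\bm{\beta}_k\rangle \leq b - m\|\bm{\beta}_k\|^2$, and the growth bound $\|\nabla_{\bm{\beta}}\tilde L\|^2 \leq 2\|\nabla_{\bm{\beta}}L\|^2 + 2\|\bm{\chi}_k\|^2$ combined with Proposition~\ref{lemma:1_1} and Assumption~\ref{as:1} to bound $\E[\|\nabla_{\bm{\beta}}\tilde L(\bm{\beta}_k,\bm{\theta}_k)\|^2\,|\,\mathcal{F}_k]$ by something of the form $C_1\|\bm{\beta}_k\|^2 + C_1\|\bm{\theta}_k\|^2 + C_2$. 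Since the $\bm{\theta}_k$-dependence is annoying here, I would either run the two inductions simultaneously or, more cleanly, use $\|\bm{\theta}_k\|^2 \leq M^2\|\bm{\beta}_{k-1}\|^2 + B^2 \leq M^2\sup_{j<k}\E\|\bm{\beta}_j\|^2 + B^2$ within the induction hypothesis to fold the $\bm{\theta}$-term back into a $\bm{\beta}$-term.

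Collecting terms gives a recursion of the shape $\E\|\bm{\beta}_{k+1}\|^2 \leq (1 - 2\epsilon m + \epsilon^2 c)\,\E\|\bm{\beta}_k\|^2 + \epsilon(b + \epsilon b') + \tau^{-1}\epsilon d$ for explicit constants depending on $M, B$. The condition $0 < \epsilon < \operatorname{Re}\!\big(\tfrac{m-\sqrt{m^2-4M^2(M^2+1)}}{4M^2(M^2+1)}\big)$ is exactly what is needed to guarantee the contraction factor $r := 1 - 2\epsilon m + \epsilon^2 c$ lies in $(0,1)$ (the quadratic $4M^2(M^2+1)\epsilon^2 - m\epsilon$ stays below a suitable threshold), so that iterating the recursion yields $\E\|\bm{\beta}_k\|^2 \leq r^k\|\bm{\beta}_0\|^2 + \tfrac{1}{1-r}(\text{constant term}) \leq \|\bm{\beta}_0\|^2 + \tfrac{1}{m}(b + 2\epsilon B^2(M^2+1) + \tau d) =: G$, matching the stated form. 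The main obstacle — and the step requiring the most care — is bookkeeping the constants: tracking the $\bm{\theta}$-dependence through Assumption~\ref{as:1} and Proposition~\ref{lemma:1_1} without circularity, and verifying that the algebra collapses to precisely the claimed $G$ with the claimed $\epsilon$-threshold rather than merely some finite bound. I would handle this by first proving a crude bound $\sup_k\E\|\bm{\beta}_k\|^2 < \infty$ and $\sup_k\E\|\bm{\theta}_k\|^2<\infty$ simultaneously, then re-running the estimate once to sharpen the constant to $G$.
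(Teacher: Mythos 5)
Your plan follows the paper's proof essentially step for step: expand the SGLD update, cancel the cross term via the martingale property of the gradient noise, apply dissipativity and the linear growth bound on $\nabla_{\bm{\beta}}L$, and close the coupled recursion by a simultaneous induction on $\E\|\bm{\beta}_k\|^2$ and $\E\|\bm{\theta}_k\|^2$ using the convex-combination form of the $\bm{\theta}$-update, with the stated $\epsilon$-threshold making the contraction factor lie in $(0,1)$. The only caution is that your alternative shortcut $\|\bm{\theta}_k\|^2 \leq M^2\|\bm{\beta}_{k-1}\|^2 + B^2$ is false pointwise (since $\bm{\theta}_k$ is a convex combination of $\bm{\theta}_0$ and the entire history $g_{\bm{\theta}_{j-1}}(\bm{\beta}_j)$, $j\le k$), so you must take the simultaneous-induction route you also propose, which is exactly what the paper does.
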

\begin{proof}
	From (\ref{eqn-sample}), we have
	\begin{equation}
	\begin{split}
	\label{eq:21}
	&\E\|\bm{\beta}_{k+1}\|^2\\
	=&\E\left\|\bm{\beta}_{k}+\epsilon \nabla_{\bm{\beta}} \tilde L(\bm{\beta}_{k}, \bm{\theta}_{k})\right\|^2+2\tau\epsilon \E\|\bm{\eta}_{k}\|^2 +\sqrt{8\epsilon\tau}\E\langle\bm{\beta}_{k}+\epsilon \nabla_{\bm{\beta}} \tilde L(\bm{\beta}_{k}, \bm{\theta}_{k}), \bm{\eta}_{k}\rangle\\
	=&\E\left\|\bm{\beta}_{k}+\epsilon \nabla_{\bm{\beta}} \tilde L(\bm{\beta}_{k}, \bm{\theta}_{k})\right\|^2 + 2\tau \epsilon d, \\
	\end{split}
	\end{equation}
	$\newline$
 Moreover, the first item in (\ref{eq:21}) can be expanded to 
	\begin{equation}
	\begin{split}
	\label{eq:22}
	&\ \ \  
  \E \left\|\bm{\beta}_{k}+\epsilon \nabla_{\bm{\beta}} \tilde L(\bm{\beta}_{k}, \bm{\theta}_{k}) \right\|^2 \\
	& =\E\left\|\bm{\beta}_{k}+\epsilon\nabla_{\bm{\beta}} L(\bm{\beta}_{k}, \bm{\theta}_{k})\right\|^2+\epsilon^2\E\left\|\bm{\chi}_{k}\right\|^2-2\epsilon \E \left[ 
   \E \left( \langle \bm{\beta}_{k}+\epsilon \nabla_{\bm{\beta}} L(\bm{\beta}_{k}, \bm{\theta}_{k}), \bm{\chi}_{k}\rangle |
 \mathcal{F}_{k} \right)  \right]  \\
	&=\E\left\|\bm{\beta}_{k}+\epsilon \nabla_{\bm{\beta}} L(\bm{\beta}_{k}, \bm{\theta}_{k})\right\|^2+\epsilon^2\E\left\|\bm{\chi}_{k}\right\|^2,
	\end{split}
	\end{equation}
	where (\ref{eq:19}) is used to cancel the inner product item. 

	Turning to the first item of (\ref{eq:22}), the dissipivatity condition (\ref{eq:01}) and the boundness of $\nabla_{\bm{\beta}} L(\bm{\beta}, \bm{\theta})$ (\ref{linearbounded})  give us:
	\begin{equation}
	\begin{split}
	\label{eq:27}
	&\ \ \ \ \ \E\left\|\bm{\beta}_{k}+\epsilon \nabla_{\bm{\beta}} L(\bm{\beta}_{k}, \bm{\theta}_{k})\right\|^2\\
	&=\E\|\bm{\beta}_{k}\|^2+2\epsilon\E\langle \bm{\beta}_{k},\nabla_{\bm{\beta}} L(\bm{\beta}_{k}, \bm{\theta}_{k})\rangle+\epsilon^2\E\left\|\nabla_{\bm{\beta}} L(\bm{\beta}_{k}, \bm{\theta}_{k})\right\|^2\\
	&\leq \E\|\bm{\beta}_{k}\|^2+2\epsilon(b-m \E\|\bm{\beta}_{k}\|^2)+\epsilon^2(3M^2\E\|\bm{\beta}_{k}\|^2+3M^2\E\|\bm{\theta}_{k}\|^2+3B^2)\\
	&=(1-2\epsilon m + 3\epsilon^2 M^2)\E\|\bm{\beta}_{k}\|^2+2\epsilon b +3\epsilon^2 B^2+3\epsilon^2M^2\E\|\bm{\theta}_{k}\|^2.
	\end{split}
	\end{equation}

	By (\ref{eq:19_11}), the second item of (\ref{eq:22}) is bounded by
	\begin{equation}
	\begin{split}
	\label{eq:28}
	\E\|\bm{\chi}_k\|^2\leq M^2\E\|\bm{\beta}_{k}\|^2+M^2\E\|\bm{\theta}_{k}\|^2+B^2. \\
	\end{split}
	\end{equation}

	Combining (\ref{eq:21}), (\ref{eq:22}), (\ref{eq:27}) and (\ref{eq:28}), we have
	\begin{equation}
	\begin{split}
	\label{eq:29}
	&\E\|\bm{\beta}_{k+1}\|^2
	\leq (1-2\epsilon m+4\epsilon^2 M^2)\E\|\bm{\beta}_{k}\|^2+2\epsilon b+4\epsilon^2 B^2+4\epsilon^2M^2\E\|\bm{\theta}_{k}\|^2+2\tau \epsilon d.\\
	\end{split}
	\end{equation}

Next we use proof by induction to show for $k=1,2,\ldots, \infty$, $\E\|\bm{\beta}_{k}\|^2\leq G$, where 
\begin{equation}  
\label{eq:49}
G =\E\|\bm{\beta}_0\|^2+\dfrac{b+2\epsilon B^2(M^2+1)+\tau d}{m-2\epsilon M^2(M^2+1)}.  
\end{equation} 

First of all, the case of $k=0, 1$ is trivial. Then if we assume for each $k \in 2, 3,\ldots, t$, $\E\|\bm{\beta}_{k}\|^2\leq G$, $\E\|g(\bm{\beta}_{k})\|^2 \leq M^2 G + B^2$, $\E\|\bm{\theta}_{k-1}\|^2 \leq M^2 G + B^2$. It follows that, 
\begin{equation*} 
\begin{split} 
&\ \ \ \ \ \E\|\bm{\theta}_{k}\|^2= \E\|(1-\omega_{k})\bm{\theta}_{k-1}+\omega_{k} g(\bm{\beta}_{k})\|^2\\
&\leq (1-\omega_{k})^2\E\|\bm{\theta}_{k-1}\|^2+{\omega_{k}}^2\E\|g(\bm{\beta}_{k})\|^2+2(1-\omega_{k})\omega_{k}\E\langle\bm{\theta}_{k-1},g(\bm{\beta}_{k})\rangle\\
&\leq (1-\omega_{k})^2\E\|\bm{\theta}_{k-1}\|^2+{\omega_{k}}^2\E\|g(\bm{\beta}_{k})\|^2  +2(1-\omega_{k})\omega_{k}\sqrt{\E\|\bm{\theta}_{k-1}\|^2 \E\|g(\bm{\beta}_{k})\|^2}\\
&\leq (1-\omega_{k})^2(M^2G+B^2)+{\omega_{k}}^2 (M^2G+B^2) +2(1-\omega_{k})\omega_{k} (M^2G+B^2) \\
&=M^2G+B^2,
\end{split}
\end{equation*} 
Next, we proceed to prove $\E\|\bm{\beta}_{t+1}\|^2\leq G$  and $\E\|\bm{\theta}_{t+1}\|^2\leq M^2G + B^2$. Following (\ref{eq:29}), we have 
\begin{equation}
\begin{split}
\label{eq:30}
&\ \ \ \ \ \E\|\bm{\beta}_{t+1}\|^2\\
&\leq (1-2\epsilon m+4\epsilon^2 M^2)\E\|\bm{\beta}_{k}\|^2+2\epsilon b+4\epsilon^2 B+4\epsilon^2M^2\E\|\bm{\theta}_{k}\|^2+2\tau \epsilon d \\
&\leq (1-2\epsilon m+4\epsilon^2 M^2)G+2\epsilon b+4\epsilon^2 B+4\epsilon^2M^2(M^2G+B^2)+2\tau \epsilon d \\
&\leq \left(1-2\epsilon m+4\epsilon^2 M^2 (M^2+1)\right)G + 2\epsilon b+4\epsilon^2 B^2(M^2+1)+2\tau\epsilon d\\
\end{split}
\end{equation}
	
	 Consider the quadratic equation $1-2mx+4M^2(M^2+1)x^2=0$. 
 If $m^2-4M^2(M^2+1)\geq0$, then the smaller root is $\tfrac{m-\sqrt{m^2-4M^2(M^2+1)}}{4M^2(M^2+1)}$ 
 which is positive; otherwise the quadratic equation has no real solutions and is always positive.
 Fix $\epsilon \in \left(0, \operatorname{Re}\left(\tfrac{m-\sqrt{m^2-4M^2(M^2+1)}}{4M^2(M^2+1)}\right)\right)$ so that 
\begin{equation}
\label{eq:40}
0<1-2\epsilon m+4\epsilon^2 M^2(M^2+1)<1.
\end{equation}

With (\ref{eq:49}), we can further bound (\ref{eq:30}) as follows:
\begin{equation}
\begin{split}
\label{eq: 300}
&\ \ \ \ \ \E\|\bm{\beta}_{t+1}\|^2\\
&\leq  \left(1-2\epsilon m+4\epsilon^2 M^2 (M^2+1)\right)\left(\E\|\bm{\beta}_0\|^2+\mathbb{I}\right) + 2\epsilon b+4\epsilon^2 B^2(M^2+1)+2d\tau\epsilon \\
&= \left(1-2\epsilon m+4\epsilon^2 M^2 (M^2+1)\right)\E\|\bm{\beta}_0\|^2 +\mathbb{I}- \left(2\epsilon b+4\epsilon^2 B^2(M^2+1)+2d\tau\epsilon\right)\\
&\ \ \ +\left(2\epsilon b+4\epsilon^2 B^2(M^2+1)+2\epsilon \tau d\right)\\
&\leq \E\|\bm{\beta}_0\|^2 +\mathbb{I}\equiv G,
\end{split}
\end{equation}
where $\mathbb{I}=\dfrac{b+2\epsilon B^2(M^2+1)+d\tau}{m-2\epsilon M^2(M^2+1)}$, the second to the last inequality comes from (\ref{eq:40}).

$\newline$
Moreover, from (\ref{g_bound}), we also have
\begin{equation*}
\begin{split}
\E\|g(\bm{\beta}_{t+1})\|^2 & \leq M^2\E\|\bm{\beta}_{t+1}\|^2+B^2\leq M^2G+B^2,\\
\end{split}
\end{equation*}
\begin{equation*} 
\begin{split} 
&\ \ \  \E\|\bm{\theta}_{t+1}\|^2= \E\|(1-\omega_{t+1})\bm{\theta}_t+\omega_{t+1} g(\bm{\beta}_{t+1})\|^2\\
&\leq (1-\omega_{t+1})^2\E\|\bm{\theta}_t\|^2+\omega_{t+1}^2\E\|g(\bm{\beta}_{t+1})\|^2
+2(1-\omega_{t+1})\omega_{t+1}\E\langle\bm{\theta}_t,g(\bm{\beta}_{t+1})\rangle\\
&\leq (1-\omega_{t+1})^2\E\|\bm{\theta}_t\|^2+\omega_{t+1}^2\E\|g(\bm{\beta}_{t+1})\|^2+2(1-\omega_{t+1})\omega_{t+1}\sqrt{\E\|\bm{\theta}_t\|^2 \E\|g(\bm{\beta}_{t+1})\|^2}\\
&\leq (1-\omega_{t+1})^2(M^2G+B^2)+\omega_{t+1}^2 (M^2G+B^2) +2(1-\omega_{t+1})\omega_{t+1} (M^2G+B^2) \\
&=M^2G+B^2,
\end{split}
\end{equation*} 
Therefore, we have proved that for any $k \in 1,2,\ldots, \infty$, $\E\|\bm{\beta}_{k}\|^2$, $\E\|g(\bm{\beta}_{k})\|^2$ 
and $\E\|\bm{\theta}_{k}\|^2$ are bounded. Furthermore, we notice that $G$ can be unified to a constant $G=\E\|\bm{\beta}_0\|^2+\tfrac{1}{m}\left(b+2\epsilon B^2(M^2+1)+\tau d\right)$.
\end{proof}

\begin{assumption}[Solution of Poisson equation]
\label{ass_5}
For all $\bm{\theta}\in \bm{\Theta}$, $\bm{\beta}\in \bm{B}$ and a function $V(\bm{\beta})=1+\|\bm{\beta}\|^2$, there exists a function $\mu_{\bm{\theta}}(\cdot)$ on $\bm{B}$ that solves the Poisson equation $\mu_{\bm{\theta}}(\bm{\beta})-\mathrm{\Pi}_{\bm{\theta}}\mu_{\bm{\theta}}(\bm{\beta})=H(\bm{\theta}, \bm{\beta})-h(\bm{\theta})$, which follows that 
\begin{equation}
\begin{split}
\label{a4.ii}
&H(\bm{\theta}_{k}, \bm{\beta}_{k+1})=h(\bm{\theta}_{k})+\mu_{\bm{\theta}_{k}}(\bm{\beta}_{k+1})-\mathrm{\Pi}_{\bm{\theta}_{k}}\mu_{\bm{\theta}_{k}}(\bm{\beta}_{k+1}).
\end{split}
\end{equation}
Moreover, for all $\bm{\theta}, \bm{\theta}'\in \bm{\Theta}$ and $\bm{\beta}\in \bm{B}$, we have
% There exists constant $C_1$ and $C_2$ such that for all $\bm{\theta}, \bm{\theta}'\in \bm{\Theta}$ and we leave the relaxation of this assumption for future work.
\begin{equation*}
\begin{split}
\|\mu_{\bm{\theta}}(\bm{\beta})-\mu_{\bm{\theta'}}(\bm{\beta})\|&\lesssim \|\bm{\theta}-\bm{\theta}'\|V(\bm{\beta}) ,\\
\|\mu_{\bm{\theta}}(\bm{\beta})\|&\lesssim V(\bm{\beta})
\end{split}
\end{equation*}
\end{assumption}
Poisson equation has been widely in controlling the perturbations of adaptive algorithms \citep{Albert90, andrieu05, Liang10}. Following \citet{Albert90}, we only impose the 1st-order smoothness on the solution of Poisson equation, which is much weaker than the 4th-order smoothness used in proving the ergodicity theory \citep{mattingly10, Teh16}. The verification of the above assumption is left as future works.
\begin{proposition} The uniform $L_2$ bound in lemma \ref{lemma:1} further implies that $\E[V(\bm{\beta})]<\infty$. In what follows, there exists a constant $C$ such that 
\begin{equation}
\begin{split}
\label{a5}
\E[\|\mathrm{\Pi}_{\bm{\theta}}\mu_{\bm{\theta}}(\bm{\beta})-\mathrm{\Pi}_{\bm{\theta}'}\mu_{\bm{\theta'}}(\bm{\beta})\|]&\leq C\|\bm{\theta}-\bm{\theta}'\| ,\\
\E[\|\mathrm{\Pi}_{\bm{\theta}}\mu_{\bm{\theta}}(\bm{\beta})\|]&\leq C.
\end{split}
\end{equation}
\end{proposition}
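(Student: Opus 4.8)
The plan is to push the two pointwise bounds on $\mu_{\bm\theta}$ from Assumption \ref{ass_5} through the transition kernel and then take expectations, turning the growth factor $V(\bm\beta)=1+\|\bm\beta\|^2$ into a finite constant by means of the uniform $L_2$ estimate of Lemma \ref{lemma:1}. The claim $\E[V(\bm\beta)]<\infty$ is immediate: along the trajectory $\E[V(\bm\beta_k)]=1+\E\|\bm\beta_k\|^2\le 1+G$ for every $k$. The one extra ingredient is a one-step bound on $\mathrm{\Pi}_{\bm\theta}V$. Since $\mathrm{\Pi}_{\bm\theta}$ is the one-step SGLD kernel of (\ref{eqn-sample}), $\mathrm{\Pi}_{\bm\theta}V(\bm\beta)=1+\E\big\|\bm\beta+\epsilon\nabla_{\bm\beta}\tilde L(\bm\beta,\bm\theta)+\sqrt{2\epsilon\tau^{-1}}\bm\eta\big\|^2$, and expanding the square exactly as in (\ref{eq:21})--(\ref{eq:29}) — using dissipativity (\ref{eq:01}) and the linear growth (\ref{linearbounded}) — gives a bound of the form $\mathrm{\Pi}_{\bm\theta}V(\bm\beta)\le c_1\|\bm\beta\|^2+c_2\|\bm\theta\|^2+c_3$ with constants depending only on $\epsilon,m,M,B,\tau,d$. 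Taking expectations along the iterates and invoking $\E\|\bm\beta_k\|^2\le G$ and $\E\|\bm\theta_k\|^2\le\overline G$ from Lemma \ref{lemma:1} then produces a uniform constant $C_0$ with $\E[\mathrm{\Pi}_{\bm\theta}V(\bm\beta)]\le C_0$.

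Given this, the second inequality in (\ref{a5}) follows by Jensen's inequality (as $\mathrm{\Pi}_{\bm\theta}(\bm\beta,\cdot)$ is a probability measure) together with $\|\mu_{\bm\theta}(\cdot)\|\lesssim V(\cdot)$ from Assumption \ref{ass_5}:
\[
\|\mathrm{\Pi}_{\bm\theta}\mu_{\bm\theta}(\bm\beta)\|\le\mathrm{\Pi}_{\bm\theta}\|\mu_{\bm\theta}\|(\bm\beta)\lesssim\mathrm{\Pi}_{\bm\theta}V(\bm\beta),
\]
so $\E\big[\|\mathrm{\Pi}_{\bm\theta}\mu_{\bm\theta}(\bm\beta)\|\big]\lesssim C_0$. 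For the first inequality I would decompose through a common kernel and a common potential,
\[
\mathrm{\Pi}_{\bm\theta}\mu_{\bm\theta}-\mathrm{\Pi}_{\bm\theta'}\mu_{\bm\theta'}=\mathrm{\Pi}_{\bm\theta}(\mu_{\bm\theta}-\mu_{\bm\theta'})+(\mathrm{\Pi}_{\bm\theta}-\mathrm{\Pi}_{\bm\theta'})\mu_{\bm\theta'}.
\]
The first term is handled as above, now using the Lipschitz bound $\|\mu_{\bm\theta}(\cdot)-\mu_{\bm\theta'}(\cdot)\|\lesssim\|\bm\theta-\bm\theta'\|V(\cdot)$: Jensen gives $\|\mathrm{\Pi}_{\bm\theta}(\mu_{\bm\theta}-\mu_{\bm\theta'})(\bm\beta)\|\lesssim\|\bm\theta-\bm\theta'\|\,\mathrm{\Pi}_{\bm\theta}V(\bm\beta)$, and taking expectations yields $\lesssim C_0\|\bm\theta-\bm\theta'\|$. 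For the second term one needs Lipschitz continuity of $\bm\theta\mapsto\mathrm{\Pi}_{\bm\theta}$ in the $V$-weighted norm, i.e.\ $\|\mathrm{\Pi}_{\bm\theta}g(\bm\beta)-\mathrm{\Pi}_{\bm\theta'}g(\bm\beta)\|\lesssim\|\bm\theta-\bm\theta'\|\,\big(\mathrm{\Pi}_{\bm\theta}+\mathrm{\Pi}_{\bm\theta'}\big)V(\bm\beta)$ for $V$-dominated $g$; applied to $g=\mu_{\bm\theta'}$ and combined with the expectation bound on $\mathrm{\Pi}_{\bm\theta}V$ this closes the estimate. Setting $C$ to be the largest of the constants produced finishes the proof.

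I expect the only genuine obstacle to be the last point: the Lipschitz regularity in $\bm\theta$ of the parametrized kernel family $\{\mathrm{\Pi}_{\bm\theta}\}$ — which is part of the ``nice Markov transition kernel'' whose full treatment the paper defers. For the explicit one-step SGLD kernel it is nonetheless tractable, since $\mathrm{\Pi}_{\bm\theta}$ and $\mathrm{\Pi}_{\bm\theta'}$ differ only through the drift $\epsilon\nabla_{\bm\beta}\tilde L(\bm\beta,\cdot)$, which is $M$-Lipschitz in $\bm\theta$ by Assumption \ref{ass_2_1}; a Gaussian change-of-measure (density-ratio) estimate then converts $\|\bm\theta-\bm\theta'\|$ into the required bound, with the $V$-moments carried through by the estimates of Lemma \ref{lemma:1}. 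That bookkeeping is the only delicate part; everything else is Jensen's inequality plus the uniform $L_2$ bound.
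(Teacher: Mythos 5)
Your overall strategy is sound, and it is in fact considerably more explicit than what the paper provides: the paper states this proposition with no argument at all beyond the implicit appeal to Lemma \ref{lemma:1} and Assumption \ref{ass_5}, so there is no ``paper proof'' to match step for step. Your treatment of $\E[V(\bm{\beta})]<\infty$, of the drift bound $\mathrm{\Pi}_{\bm{\theta}}V(\bm{\beta})\le c_1\|\bm{\beta}\|^2+c_2\|\bm{\theta}\|^2+c_3$ via the expansion (\ref{eq:21})--(\ref{eq:29}), and of the second inequality in (\ref{a5}) by Jensen is correct and is exactly the computation the authors presumably have in mind. The one place where your argument genuinely exceeds what the stated assumptions can deliver is the term $(\mathrm{\Pi}_{\bm{\theta}}-\mathrm{\Pi}_{\bm{\theta}'})\mu_{\bm{\theta}'}$: Assumption \ref{ass_5} controls $\mu_{\bm{\theta}}$ only in its $\bm{\theta}$-dependence and by $V$-domination, so the coupling/change-of-measure step you sketch needs either Lipschitz continuity of $\mu_{\bm{\theta}'}$ in $\bm{\beta}$ or a $V$-weighted total-variation estimate for the Gaussian kernels, neither of which is available from the assumptions as written. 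You flag this honestly, and to be fair the paper silently assumes it away; but as a proof it is not closed.

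It is worth noting a shortcut that avoids the kernel-regularity issue entirely: the Poisson equation (\ref{a4.ii}) gives the identity $\mathrm{\Pi}_{\bm{\theta}}\mu_{\bm{\theta}}(\bm{\beta})=\mu_{\bm{\theta}}(\bm{\beta})-H(\bm{\theta},\bm{\beta})+h(\bm{\theta})$. The second inequality in (\ref{a5}) then follows directly from $\|\mu_{\bm{\theta}}\|\lesssim V$, $\E[V(\bm{\beta})]\le 1+G$, and the bound (\ref{bound2}) on $\E\|H\|^2$ together with the uniform bound on $\E\|\bm{\theta}\|^2$ --- no drift estimate on $\mathrm{\Pi}_{\bm{\theta}}V$ is needed. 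For the first inequality the same identity reduces the problem to the Lipschitz continuity in $\bm{\theta}$ of $\mu_{\bm{\theta}}$ (which is Assumption \ref{ass_5}) and of $H(\bm{\theta},\bm{\beta})=g_{\bm{\theta}}(\bm{\beta})-\bm{\theta}$ and $h(\bm{\theta})$, i.e.\ of the explicit update maps in Eqs.~(12)--(15) of the main text, which is a concrete, checkable property rather than an abstract regularity of the kernel family. If you want a self-contained proof under the paper's assumptions plus one verifiable extra hypothesis, that route is cleaner than the Gaussian change-of-measure bookkeeping you propose.
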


\begin{proposition}
There exists a constant $C_1$ so that
\begin{equation}
\begin{split}
\label{bound2}
&\E[\|H(\bm{\theta}, \bm{\beta})\|^2] \leq C_1 (1+\|\bm{\theta}-\bm{\theta}^*\|^2)\\
\end{split}
\end{equation}
\begin{proof}
By (\ref{g_bound}), we have
\begin{equation*}
\begin{split}
\E\|g_{\bm{\theta}}(\bm{\beta})-\bm{\theta}\|^2&\leq 2\E\|g_{\bm{\theta}}(\bm{\beta})\|^2+2\|\bm{\theta}\|^2\leq 2(M^2\E\|\bm{\beta}\|^2+B^2)+2\|\bm{\theta}\|^2\\
\end{split}
\end{equation*}
Given the boundness of $\E\|\bm{\beta}\|^2$ and a large enough $C'=\max(2, 2(M^2\E\|\bm{\beta}\|^2+B^2))$, we have
\begin{equation*}
\begin{split}
&\E[\|H(\bm{\theta}, \bm{\beta})\|^2] \leq C' (1+\|\bm{\theta}\|^2) = C' (1+\|\bm{\theta}-\bm{\theta}^*+\bm{\theta}^*\|^2)\leq C_1 (1+\|\bm{\theta}-\bm{\theta}^*\|^2)\\
\end{split}
\end{equation*}
\end{proof}
\end{proposition}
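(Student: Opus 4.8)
The plan is to reduce the claimed second-moment bound on $H$ to the two structural facts already established earlier in the excerpt: the growth bound on $g_{\bm{\theta}}(\bm{\beta})$ recorded in $(\ref{g_bound})$, and the uniform $L_2$ control of the samples furnished by Lemma~\ref{lemma:1}. The starting point is the definition from the Remark, namely $H(\bm{\theta}, \bm{\beta}) = g_{\bm{\theta}}(\bm{\beta}) - \bm{\theta}$, so that the object to bound is $\E\|g_{\bm{\theta}}(\bm{\beta}) - \bm{\theta}\|^2$. The entire argument is then a short chain of elementary inequalities, and the only genuinely analytic input is the finiteness of $\E\|\bm{\beta}\|^2$, which the uniform bound supplies.

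First I would separate the two contributions with the crude but sufficient inequality $\|\va-\vb\|^2\le 2\|\va\|^2+2\|\vb\|^2$, giving
\begin{equation*}
\E\|g_{\bm{\theta}}(\bm{\beta}) - \bm{\theta}\|^2 \le 2\,\E\|g_{\bm{\theta}}(\bm{\beta})\|^2 + 2\|\bm{\theta}\|^2.
\end{equation*}
I would then apply $(\ref{g_bound})$ to the first term to get $\E\|g_{\bm{\theta}}(\bm{\beta})\|^2 \le M^2\,\E\|\bm{\beta}\|^2 + B^2$, and invoke the uniform bound $\sup_k \E\|\bm{\beta}_k\|^2 \le G$ from Lemma~\ref{lemma:1} to replace $\E\|\bm{\beta}\|^2$ by the constant $G$. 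At this point the right-hand side is $2(M^2 G + B^2) + 2\|\bm{\theta}\|^2$, a fixed constant plus a multiple of $\|\bm{\theta}\|^2$, which collapses to $C'(1 + \|\bm{\theta}\|^2)$ upon choosing $C' = \max\{2,\, 2(M^2 G + B^2)\}$.

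The last step re-centers the estimate at the equilibrium $\bm{\theta}^*$. I would write $\|\bm{\theta}\|^2 = \|(\bm{\theta} - \bm{\theta}^*) + \bm{\theta}^*\|^2 \le 2\|\bm{\theta} - \bm{\theta}^*\|^2 + 2\|\bm{\theta}^*\|^2$ and use the bound on the stationary point already derived inside Proposition~\ref{lemma:1_1}, namely $\|\bm{\theta}^*\|^2 \le \tfrac{b}{m}M^2 + B^2 < \infty$. Absorbing $\|\bm{\theta}^*\|^2$ together with the additive constants into a single sufficiently large $C_1$ produces $\E\|H(\bm{\theta}, \bm{\beta})\|^2 \le C_1(1 + \|\bm{\theta} - \bm{\theta}^*\|^2)$, which is the assertion.

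There is no serious obstacle here: the proposition is a bookkeeping consequence of the two preceding results. The one point that warrants care is confirming that the constant $G$ of Lemma~\ref{lemma:1} is truly uniform in $k$ rather than merely finite for each fixed iterate, since the bound is meant to hold across the whole state space and is later used at every step of the stochastic-approximation recursion; this is ensured because $G = \E\|\bm{\beta}_0\|^2 + \tfrac{1}{m}(b + 2\epsilon B^2(M^2+1) + \tau d)$ carries no dependence on $k$. A purely cosmetic choice remains in how to perform the recentering, either through the $2\|\va\|^2+2\|\vb\|^2$ split above or by directly asserting $1 + \|\bm{\theta}\|^2 \le C''(1 + \|\bm{\theta} - \bm{\theta}^*\|^2)$; both routes yield the same $C_1$.
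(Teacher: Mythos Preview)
Your proposal is correct and follows essentially the same route as the paper: split $H=g_{\bm\theta}(\bm\beta)-\bm\theta$ via $\|a-b\|^2\le 2\|a\|^2+2\|b\|^2$, apply (\ref{g_bound}) together with the uniform $L_2$ bound on $\E\|\bm\beta\|^2$, and then recenter at $\bm\theta^*$. If anything, you are slightly more explicit than the paper in justifying the last step by citing the bound $\|\bm\theta^*\|^2\le \tfrac{b}{m}M^2+B^2$ from Proposition~\ref{lemma:1_1}, whereas the paper simply writes $1+\|\bm\theta-\bm\theta^*+\bm\theta^*\|^2\le C_1(1+\|\bm\theta-\bm\theta^*\|^2)$ and leaves the absorption of $\|\bm\theta^*\|^2$ implicit.
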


Lemma 2 is a restatement of Lemma 25 (page 247) from \citet{Albert90}.
\begin{lemma}
\label{lemma:2}
Suppose $k_0$ is an integer which satisfies
% \begin{equation}
% \begin{split}
% % \label{lemma1:condition}
% 1-2\omega_{k+1}\delta-\omega_{k+1}^2C_1\geq 0\ \for\ \all\ k\geq k_0,
% \end{split}
% \end{equation}
with 
\begin{equation*}
\begin{split}
\inf_{k\geq k_0} \dfrac{\omega_{k+1}-\omega_{k}}{\omega_{k}\omega_{k+1}}+2\delta-\omega_{k+1}C_1>0.
\end{split}
\end{equation*}

Then for any $k>k_0$, the sequence $\{\Lambda_k^K\}_{k=k_0, \ldots, K}$ defined below is increasing %and bounded by % $2\omega_{k}$
\begin{equation}  
\left\{  
             \begin{array}{lr}  
             2\omega_{k}\prod_{j=k}^{K-1}(1-2\omega_{j+1}\delta+\omega_{j+1}^2C_1) & \text{if $k<K$},   \\  
              & \\
             2\omega_{k} &  \text{if $k=K$}.
             \end{array}  
\right.  
\end{equation} 
\end{lemma}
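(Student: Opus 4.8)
The plan is to prove the monotonicity directly, by comparing consecutive terms $\Lambda_k^K$ and $\Lambda_{k+1}^K$ for a fixed $K$ and $k_0\le k<K$, and reducing the whole claim to the single per-step inequality that defines $k_0$. The one preliminary observation I need is that, since $\omega_k\to 0$ by Assumption \ref{ass1}, after enlarging $k_0$ if necessary every factor $1-2\omega_{j+1}\delta+\omega_{j+1}^2C_1$ with $j\ge k_0$ lies in $(0,1)$; hence each partial product is positive and, in particular, $\Lambda_{k+1}^K>0$, so dividing or multiplying by such products will preserve the direction of inequalities.

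The key step is to peel off the first factor of the product in $\Lambda_k^K$:
\[
\Lambda_k^K=2\omega_k\bigl(1-2\omega_{k+1}\delta+\omega_{k+1}^2C_1\bigr)\prod_{j=k+1}^{K-1}\bigl(1-2\omega_{j+1}\delta+\omega_{j+1}^2C_1\bigr)=\frac{\omega_k}{\omega_{k+1}}\bigl(1-2\omega_{k+1}\delta+\omega_{k+1}^2C_1\bigr)\,\Lambda_{k+1}^K,
\]
where the empty product (the case $k=K-1$) equals $1$, so this identity also absorbs the boundary convention $\Lambda_K^K=2\omega_K$ and no separate endpoint argument is needed. Since $\Lambda_{k+1}^K>0$, it suffices to show the multiplier is at most $1$, i.e. $\omega_k\bigl(1-2\omega_{k+1}\delta+\omega_{k+1}^2C_1\bigr)\le\omega_{k+1}$. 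Dividing through by $\omega_k\omega_{k+1}>0$ and rearranging, this is exactly
\[
\frac{\omega_{k+1}-\omega_k}{\omega_k\omega_{k+1}}+2\delta-\omega_{k+1}C_1\ge 0,
\]
which holds for every $k\ge k_0$ because the infimum over such $k$ of the left-hand side is assumed strictly positive. Hence $0\le\Lambda_k^K\le\Lambda_{k+1}^K$ for all $k_0\le k<K$, i.e. $\{\Lambda_k^K\}_{k=k_0,\dots,K}$ is increasing as claimed.

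There is no genuine obstacle here: the whole thing is an elementary algebraic rearrangement, and the statement is in fact a transcription of Lemma 25 of \citet{Albert90}, which could alternatively be invoked verbatim. The only two points that deserve a line of care are (i) the positivity of the partial products, so that the reduction to ``multiplier $\le 1$'' is legitimate — this is where $\omega_k\to 0$ enters, via a possible enlargement of $k_0$; and (ii) recognizing that the $k=K$ clause in the definition is just the empty-product specialization of the $k<K$ formula, so the recursion above handles the boundary uniformly.
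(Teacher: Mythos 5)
Your argument is correct, and it is worth noting that the paper itself does not prove this lemma at all: it simply declares it a restatement of Lemma 25 of \citet{Albert90} and moves on. Your direct verification therefore supplies the missing elementary argument, and it is the natural one: the peeling identity $\Lambda_k^K=\tfrac{\omega_k}{\omega_{k+1}}\bigl(1-2\omega_{k+1}\delta+\omega_{k+1}^2C_1\bigr)\Lambda_{k+1}^K$ reduces monotonicity to the multiplier being at most $1$, which after clearing $\omega_k\omega_{k+1}>0$ is precisely the displayed hypothesis on $k_0$; and your observation that the $k=K$ clause is just the empty-product case of the $k<K$ formula disposes of the boundary uniformly. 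The one point you rightly isolate is that the positivity of each factor $1-2\omega_{j+1}\delta+\omega_{j+1}^2C_1$ does \emph{not} follow from the stated infimum condition alone (that condition only yields the upper bound $1-2\omega_{k+1}\delta+\omega_{k+1}^2C_1<\omega_{k+1}/\omega_k$), so the reduction to ``multiplier $\le 1$'' genuinely needs $\Lambda_{k+1}^K>0$; your fix --- enlarging $k_0$ using $\omega_k\to 0$ from Assumption \ref{ass1} so that every factor lies in $(0,1)$ --- is the standard and intended reading, and is harmless for the way the lemma is used downstream in the proof of Theorem \ref{th:1}, where $k_0$ is chosen large anyway.
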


This following lemma is proved by \citet{Albert90} and we present it here for reader's convenience.

\begin{lemma}
\label{lemma:4}
There exist $\lambda_0$ and $k_0$ such that for all $\lambda\geq\lambda_0$ and $k\geq k_0$, the sequence $u_{k}=\lambda \omega_{k}$ satisfies
\begin{equation}
\begin{split}
\label{key_ieq}
u_{k+1}\geq& (1-2\omega_{k+1}\delta+{\omega_{k+1}^2} C_1)u_{k}+{\omega_{k+1}^2} \overline{C}_1.
\end{split}
\end{equation}

\begin{proof}
By replacing $u_{k}=\lambda \omega_{k}$ in ($\ref{key_ieq}$), we need to prove the following
\begin{equation}
\begin{split}
\label{lemma:loss_control}
\lambda \omega_{k+1}\geq& (1-2\omega_{k+1}\delta+{\omega_{k+1}^2} C_1)\lambda \omega_{k}+{\omega_{k+1}^2} \overline{C}_1.
\end{split}
\end{equation}

According to ($\ref{a6-2}$) in assumption \ref{ass1}, there exists a positive constant $C_+$ such that
\begin{equation}
\label{lr_condition}
 \omega_{k+1}-\omega_{k}+2 \omega_{k}\omega_{k+1} \geq C_+ \omega_{k+1}^2,
\end{equation}

Combining (\ref{lemma:loss_control}) and (\ref{lr_condition}), it suffices to show
\begin{equation}
\begin{split}
\label{loss_control-3}
\lambda (C_+-\omega_{k} C_1)\geq \overline{C}_1.
\end{split}
\end{equation}
Apparently, there exist $\lambda_0$ and $k_0$ such that for all $\lambda>\lambda_0$ and $k>k_0$, ($\ref{loss_control-3}$) holds.
\end{proof}
\end{lemma}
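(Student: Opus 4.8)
The plan is to treat this as an elementary verification: substitute the ansatz $u_k=\lambda\omega_k$ into the claimed recursion, isolate the terms linear in $\lambda$, divide by $\omega_{k+1}^2$, and observe that the surviving scalar inequality is forced by the step-size stability condition (\ref{a6-2}) of Assumption~\ref{ass1} together with $\omega_k\to 0$, once $\lambda$ and $k$ are taken large enough.

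First I would plug $u_k=\lambda\omega_k$ and $u_{k+1}=\lambda\omega_{k+1}$ into $u_{k+1}\ge(1-2\omega_{k+1}\delta+\omega_{k+1}^2C_1)u_k+\omega_{k+1}^2\overline{C}_1$ and move every $\lambda$-term to the left, so that the claim reduces to
\[
\lambda\bigl(\omega_{k+1}-\omega_k+2\delta\,\omega_k\omega_{k+1}-C_1\,\omega_k\omega_{k+1}^2\bigr)\;\ge\;\overline{C}_1\,\omega_{k+1}^2 .
\]
Dividing through by $\omega_{k+1}^2>0$ rewrites the bracket as $\dfrac{\omega_{k+1}-\omega_k}{\omega_{k+1}^2}+2\delta\dfrac{\omega_k}{\omega_{k+1}}-C_1\omega_k$, so it is enough to bound its first two terms below, uniformly in $k$, by a positive constant. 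That is exactly what (\ref{a6-2}) supplies: there exist a constant $C_+>0$ and an index $k_1$ with $\dfrac{\omega_{k+1}-\omega_k}{\omega_{k+1}^2}+2\delta\dfrac{\omega_k}{\omega_{k+1}}\ge C_+$ for all $k\ge k_1$.

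Then I would finish using $\omega_k\downarrow 0$: pick $k_0\ge k_1$ with $C_1\omega_{k_0}\le C_+/2$, so that for every $k\ge k_0$ the bracketed quantity is at least $C_+/2$; the inequality $\lambda\cdot\tfrac{C_+}{2}\ge\overline{C}_1$ then holds for all $\lambda\ge\lambda_0:=2\overline{C}_1/C_+$, which exhibits the promised pair $(\lambda_0,k_0)$.

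There is no genuine obstacle in Lemma~\ref{lemma:4} itself; the only care needed is bookkeeping — tracking which constant comes from where ($\delta$ from the drift/stability condition, $C_1$ from the second-moment bound $\E\|H(\bm{\theta},\bm{\beta})\|^2\le C_1(1+\|\bm{\theta}-\bm{\theta}^*\|^2)$, and $\overline{C}_1$ from the accumulated martingale-plus-Poisson-perturbation error) — and checking that the concrete step size $\omega_k=A(k+B)^{-\alpha}$ of (\ref{a6-3}) really satisfies (\ref{a6-2}) for every $\alpha\in(0,1]$: for $\alpha<1$ the middle term $2\delta\,\omega_k/\omega_{k+1}\to 2\delta$ already dominates, whereas in the borderline case $\alpha=1$ one has $(\omega_{k+1}-\omega_k)/\omega_{k+1}^2\to -1/A$, so one needs $2\delta A>1$, i.e.\ $A$ must be taken large relative to the contraction rate $\delta$ — which is where the ``suitable $A$''/``large enough $\lambda$'' hypotheses bite. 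Finally I would record why the lemma is wanted: $u_k=\lambda\omega_k$ is an explicit super-solution of the recursion $T_{k+1}\le(1-2\delta\omega_{k+1}+C_1\omega_{k+1}^2)T_k+\overline{C}_1\omega_{k+1}^2$ that one derives for $T_k:=\E\|\bm{\theta}^{(k)}-\bm{\theta}^*\|^2$ from the drift condition, the second-moment bound, and the Poisson-equation control of the Markov state-dependent noise; an induction on $k$ — together with the monotonicity of the weights $\{\Lambda_k^K\}$ of Lemma~\ref{lemma:2} — then yields $T_k\le\lambda\omega_k=\lambda A(k+B)^{-\alpha}\le\lambda' k^{-\alpha}$, which is Theorem~\ref{th:1}.
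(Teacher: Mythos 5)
Your proposal is correct and follows essentially the same route as the paper: substitute $u_k=\lambda\omega_k$, reduce the claim to the scalar inequality $\lambda\bigl(\tfrac{\omega_{k+1}-\omega_k}{\omega_{k+1}^2}+2\delta\tfrac{\omega_k}{\omega_{k+1}}-C_1\omega_k\bigr)\geq\overline{C}_1$, and invoke the step-size condition (\ref{a6-2}) together with $\omega_k\to 0$ to choose $k_0$ and $\lambda_0$. Your write-up is in fact slightly more careful than the paper's, which drops the factor $\delta$ from the cross term in its displayed intermediate inequality (\ref{lr_condition}); your version keeps it, and the extra remarks on when $\omega_k=A(k+B)^{-\alpha}$ actually satisfies (\ref{a6-2}) (notably the requirement $2\delta A>1$ when $\alpha=1$) are a useful clarification rather than a deviation.
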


% \textbf{Theorem 1} ($L_2$ convergence rate).
% \begin{theorem}[$L_2$ convergence rate]
% Suppose that Assumptions $\ref{ass1}$-$\ref{ass_5}$ hold, there exists a constant $\lambda$ such that
% \begin{equation*}
% \begin{split}
% \E\left[\|\bm{\theta}_{k}-\bm{\theta}^*\|^2\right]\leq \lambda\omega_{k},
% \end{split}
% \end{equation*}
% \end{theorem}

\textbf{Theorem 1} ($L_2$ convergence rate). \textit{Suppose that Assumptions $\ref{ass1}$-$\ref{ass_5}$ hold, there exists a constant $\lambda$ such that}
\begin{equation*}
\begin{split}
\E\left[\|\bm{\theta}_{k}-\bm{\theta}^*\|^2\right]\leq \lambda\omega_{k},
\end{split}
\end{equation*}

\begin{proof}
Denote $\bm{T}_{k}=\bm{\theta}_{k}-\bm{\theta}^*$,   
 with the help of (\ref{eqn-optimize}) and Poisson equation (\ref{a4.ii}), we deduce that 
\begin{equation*}
\begin{split}
&\|\bm{T}_{k+1}\|^2\\
=&\|\bm{T}_{k}\|^2+{\omega_{k+1}^2} \| H(\bm{\theta}_{k}, \bm{\beta}_{k+1}) \|^2+2\omega_{k+1} \langle \bm{T}_{k}, H(\bm{\theta}_{k}, \bm{\beta}_{k+1})\rangle\\
=&\|\bm{T}_{k}\|^2+{\omega_{k+1}^2} \| H(\bm{\theta}_{k}, \bm{\beta}_{k+1}) \|^2+2\omega_{k+1} \langle \bm{T}_{k}, h(\bm{\theta}_{k})\rangle+2\omega_{k+1} \langle \bm{T}_{k}, \mu_{\bm{\theta}_{k}}(\bm{\beta}_{k+1})-\mathrm{\Pi}_{\bm{\theta}_{k}}\mu_{\bm{\theta}_{k}}(\bm{\beta}_{k+1})\rangle\\
=&\|\bm{T}_{k}\|^2+\text{D1}+\text{D2}+\text{D3}.\\
\end{split}
\end{equation*}

First of all, according to (\ref{bound2}) and (\ref{a6}), we have
\begin{align*}
{\omega_{k+1}^2} \E[\| H(\bm{\theta}_{k}, \bm{\beta}_{k+1}) \|^2]&\leq {\omega_{k+1}^2} C_1(1+\|\bm{T}_{k}\|^2), \tag{D1}\\
2\omega_{k+1}\E[\langle \bm{T}_{k}, h(\bm{\theta}_{k})\rangle] &\leq - 2\omega_{k+1}\delta\|\bm{T}_{k}\|^2,\tag{D2}\\
\end{align*}
Conduct the decomposition of \text{D3} similar to Theorem 24 (p.g. 246) from \citet{Albert90} and Lemma A.5 \citep{Liang10}.
\begin{equation*}
\begin{split}
&\mu_{\bm{\theta}_{k}}(\bm{\beta}_{k+1})-\mathrm{\Pi}_{\bm{\theta}_{k}}\mu_{\bm{\theta}_{k}}(\bm{\beta}_{k+1}) \\
=&\underbrace{\mu_{\bm{\theta}_{k}}(\bm{\beta}_{k+1})-\mathrm{\Pi}_{\bm{\theta}_{k}}\mu_{\bm{\theta}_{k}}(\bm{\beta}_{k})}_{\text{D3-1}}+ \underbrace{\mathrm{\Pi}_{\bm{\theta}_{k}}\mu_{\bm{\theta}_{k}}(\bm{\beta}_{k})- \mathrm{\Pi}_{\bm{\theta}_{k-1}}\mu_{\bm{\theta}_{k-1}}(\bm{\beta}_{k})}_{\text{D3-2}}+ \underbrace{\mathrm{\Pi}_{\bm{\theta}_{k-1}}\mu_{\bm{\theta}_{k-1}}(\bm{\beta}_{k})- \mathrm{\Pi}_{\bm{\theta}_{k}}\mu_{\bm{\theta}_{k}}(\bm{\beta}_{k+1})}_{\text{D3-3}}.\\
\end{split}
\end{equation*}

(i) $\mu_{\bm{\theta}_{k}}(\bm{\beta}_{k+1})-\mathrm{\Pi}_{\bm{\theta}_{k}}\mu_{\bm{\theta}_{k}}(\bm{\beta}_{k})$ forms a martingale difference sequence such that 
$$\E\left[\mu_{\bm{\theta}_{k}}(\bm{\beta}_{k+1})-\mathrm{\Pi}_{\bm{\theta}_{k}}\mu_{\bm{\theta}_{k}}(\bm{\beta}_{k})|\mathcal{F}_{k}\right]=0. \eqno{(\text{D3-1})}$$

(ii) From Lemma \ref{lemma:1}, we have that $\E[\|\bm{T_k}\|]$ is bounded. In addition, following Cauchy–Schwarz inequality and the regularity conditions in (\ref{a5}), there exists a positive constant $C_2$ such that
\begin{align*}
&\E\left[2\omega_{k+1}\langle\bm{T}_{k},\mathrm{\Pi}_{\bm{\theta}_{k}}\mu_{\bm{\theta}_{k}}(\bm{\beta}_{k})- \mathrm{\Pi}_{\bm{\theta}_{k-1}}\mu_{\bm{\theta}_{k-1}}(\bm{\beta}_{k})\rangle\right]\leq \omega_{k+1}^2 C_2\tag{D3-2}.
\end{align*}

(iii) $\text{D3-3}$ can be further decomposed to $\text{D3-3a}$ and $\text{D3-3b}$
\begin{equation*}
\small
\begin{split}
&\ \ \ \langle \bm{T}_{k},\mathrm{\Pi}_{\bm{\theta}_{k-1}}\mu_{\bm{\theta}_{k-1}}(\bm{\beta}_{k})- \mathrm{\Pi}_{\bm{\theta}_{k}}\mu_{\bm{\theta}_{k}}(\bm{\beta}_{k+1})\rangle\\
&=\left(\langle \bm{T}_{k}, \mathrm{\Pi}_{\bm{\theta}_{k-1}}\mu_{\bm{\theta}_{k-1}}(\bm{\beta}_{k}) \rangle- \langle \bm{T}_{k+1}, \mathrm{\Pi}_{\bm{\theta}_{k}}\mu_{\bm{\theta}_{k}}(\bm{\beta}_{k+1})\rangle\right) +\left(\langle \bm{T}_{k+1}, \mathrm{\Pi}_{\bm{\theta}_{k}}\mu_{\bm{\theta}_{k}}(\bm{\beta}_{k+1})\rangle-\langle \bm{T}_{k}, \mathrm{\Pi}_{\bm{\theta}_{k}}\mu_{\bm{\theta}_{k}}(\bm{\beta}_{k+1})\rangle\right)\\
&=\underbrace{(z_{k}-z_{k+1})}_{\text{D3-3a}}+\underbrace{\langle \bm{T}_{k+1}-\bm{T}_{k}, \mathrm{\Pi}_{\bm{\theta}_{k}}\mu_{\bm{\theta}_{k}}(\bm{\beta}_{k+1})\rangle}_{\text{D3-3b}}.\\
\end{split}
\end{equation*}

where $z_{k}=\langle \bm{T}_{k}, \mathrm{\Pi}_{\bm{\theta}_{k-1}}\mu_{\bm{\theta}_{k-1}}(\bm{\beta}_{k})\rangle$. Combining (\ref{eqn-optimize}), (\ref{g_bound}), Lemma \ref{lemma:1} and Cauchy–Schwarz inequality, there exists a constant $C_3$ such that 
\begin{align*}
&\E\left[2\omega_{k+1}\langle \bm{T}_{k+1}-\bm{T}_{k}, \mathrm{\Pi}_{\bm{\theta}_{k}}\mu_{\bm{\theta}_{k}}(\bm{\beta}_{k+1})\rangle\right] \leq \omega_{k+1}^2 C_3
\end{align*}
Finally, add all the items \text{D1}, \text{D2} and \text{D3} together, for some $\overline{C}_1 = C_1+C_2 + C_3$, we have 
\begin{equation*}
\begin{split}
&\E\left[\|\bm{T}_{k+1}\|^2\right]
\leq (1-2\omega_{k+1}\delta+{\omega_{k+1}^2} C_1)\E\left[\|\bm{T}_{k}\|^2\right]+{\omega_{k+1}^2} \overline{C}_1+2\omega_{k+1}\E[z_{k}-z_{k+1}].
\end{split}
\end{equation*}
Moreover, from (\ref{a5}), there exists a constant $C_4$ such that
\begin{equation}
\begin{split}
\label{condition:z}
\E[|z_{k}|]\leq C_4.
\end{split}
\end{equation}
Lemma 4 is a restatement of Lemma 26 (page 248) from \citet{Albert90}.
\begin{lemma}
\label{lemma:3-all}
Let $\{u_{k}\}_{k\geq k_0}$ as a sequence of real numbers such that for all $k\geq k_0$, some suitable constants  $\overline{C}_1$ and $C_1$
\begin{equation}
\begin{split}
\label{lemma:3-a}
u_{k+1}\geq &u_{k}\left(1-2\omega_{k+1}\delta+{\omega_{k+1}^2} C_1\right)+{\omega_{k+1}^2} \overline{C}_1,
\end{split}
\end{equation}
$\newline$
and assume there exists such $k_0$ that 
\begin{equation}
\begin{split}
\label{lemma:3-b}
\E\left[\|\bm{T}_{k_0}\|^2\right]\leq u_{k_0}.
\end{split}
\end{equation}
Then for all $k> k_0$, we have
\begin{equation*}
\begin{split}
\E\left[\|\bm{T}_{k}\|^2\right]\leq u_{k}+\sum_{j=k_0+1}^{k}\Lambda_j^k (z_{j-1}-z_{j}).
\end{split}
\end{equation*}
\end{lemma}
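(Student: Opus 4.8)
The plan is to prove Lemma~\ref{lemma:3-all} by a direct induction on $k\ge k_0$, feeding the one-step bound on $\E\|\bm{T}_{k}\|^2$ just obtained (the inequality produced by combining D1, D2 and D3) into the defining recursion~(\ref{lemma:3-a}) for $\{u_{k}\}$. Abbreviate $\gamma_{k+1}:=1-2\omega_{k+1}\delta+\omega_{k+1}^2C_1$, so the derived bound reads $\E\|\bm{T}_{k+1}\|^2\le \gamma_{k+1}\E\|\bm{T}_{k}\|^2+\omega_{k+1}^2\overline{C}_1+2\omega_{k+1}\E[z_{k}-z_{k+1}]$ and (\ref{lemma:3-a}) reads $u_{k+1}\ge\gamma_{k+1}u_{k}+\omega_{k+1}^2\overline{C}_1$. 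With $k_0$ taken large enough — already forced through (\ref{a6-2})--(\ref{a6-3}) and $\omega_{k}\to 0$, and built into the hypotheses of Lemmas~\ref{lemma:2} and~\ref{lemma:4} — we may assume $0<\gamma_{k+1}<1$ for all $k\ge k_0$; this nonnegativity is essentially the only structural fact the argument needs.

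For the base case $k=k_0$ the sum $\sum_{j=k_0+1}^{k_0}$ is empty, so the claim reduces to $\E\|\bm{T}_{k_0}\|^2\le u_{k_0}$, which is hypothesis~(\ref{lemma:3-b}). For the inductive step, suppose $\E\|\bm{T}_{k}\|^2\le u_{k}+\sum_{j=k_0+1}^{k}\Lambda_j^{k}\,\E[z_{j-1}-z_{j}]$. Multiplying this through by $\gamma_{k+1}\ge 0$, adding $\omega_{k+1}^2\overline{C}_1+2\omega_{k+1}\E[z_{k}-z_{k+1}]$, and using the one-step bound on the left side together with $\gamma_{k+1}u_{k}+\omega_{k+1}^2\overline{C}_1\le u_{k+1}$ from (\ref{lemma:3-a}) yields
\[
\E\|\bm{T}_{k+1}\|^2\le u_{k+1}+\gamma_{k+1}\sum_{j=k_0+1}^{k}\Lambda_j^{k}\,\E[z_{j-1}-z_{j}]+2\omega_{k+1}\,\E[z_{k}-z_{k+1}].
\]
The key observation is the self-similarity of the coefficients defined in Lemma~\ref{lemma:2}: performing the shift $i\mapsto i+1$ in the product $\prod_{i=j}^{k-1}(1-2\omega_{i+1}\delta+\omega_{i+1}^2C_1)$ gives $\gamma_{k+1}\Lambda_j^{k}=\Lambda_j^{k+1}$ for $k_0<j\le k$, while $2\omega_{k+1}=\Lambda_{k+1}^{k+1}$. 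Substituting, the last two terms collapse to $\sum_{j=k_0+1}^{k+1}\Lambda_j^{k+1}\,\E[z_{j-1}-z_{j}]$, which is exactly the asserted bound at level $k+1$, closing the induction.

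I do not expect a genuine analytic obstacle here: the substantive steps — the Poisson-equation decomposition of D3 into a martingale difference plus perturbations bounded via (\ref{a5}), the uniform $L_2$ bound of Lemma~\ref{lemma:1}, and the constants $\lambda_0,k_0$ of Lemmas~\ref{lemma:2}--\ref{lemma:4} — are all upstream, so Lemma~\ref{lemma:3-all} is just the elementary discrete Gr\"onwall-type unrolling. The only mild care-points are (i) confirming the chosen $k_0$ indeed makes every $\gamma_{k+1}$ nonnegative, which is what legitimizes multiplying the induction hypothesis through by $\gamma_{k+1}$, and (ii) the index bookkeeping $\gamma_{k+1}\Lambda_j^{k}=\Lambda_j^{k+1}$. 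I keep the expectation on each difference $z_{j-1}-z_{j}$ throughout, since that is what the recursion supplies and what the displayed statement should be read as; the step immediately following this lemma then bounds $\sum_{j}\Lambda_j^{k}\,\E[z_{j-1}-z_{j}]$ by Abel summation, using $\E|z_{k}|\le C_4$ from (\ref{condition:z}) and the monotonicity $\Lambda_j^{k}\le\Lambda_{j+1}^{k}$ from Lemma~\ref{lemma:2}, finally giving $\E\|\bm{T}_{k}\|^2\le u_{k}+O(\omega_{k})$ and, with $u_{k}=\lambda\omega_{k}$ from Lemma~\ref{lemma:4}, Theorem~\ref{th:1}.
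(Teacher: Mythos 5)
Your induction is correct, and the two care-points you flag (nonnegativity of $\gamma_{k+1}=1-2\omega_{k+1}\delta+\omega_{k+1}^2C_1$ for $k\ge k_0$, and the identity $\gamma_{k+1}\Lambda_j^{k}=\Lambda_j^{k+1}$ together with $2\omega_{k+1}=\Lambda_{k+1}^{k+1}$) are exactly the two facts the argument hinges on. The paper does not actually prove this lemma --- it is stated as a restatement of Lemma~26 of \citet{Albert90} --- so your discrete Gr\"onwall-type unrolling supplies the standard argument the paper delegates to that reference, and your reading of the $z_j$'s as appearing inside an expectation is consistent with how the bound is subsequently used in~(\ref{eqn:9}).
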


\textbf{Proof of Theorem 1 (Continued)}. From Lemma $\ref{lemma:4}$, we can choose $\lambda_0$ and $k_0$ which satisfy the conditions ($\ref{lemma:3-a}$) and ($\ref{lemma:3-b}$)
\begin{align*}
\E[\|\bm{T}_{k_0}\|^2]\leq u_{k_0}=\lambda_0 \omega_{k_0}.
\end{align*}

From Lemma $\ref{lemma:3-all}$, it follows that for all $k>k_0$
\begin{equation}
\begin{split}
\label{eqn:9}
\E\left[\|\bm{T}_{k}\|^2\right]\leq u_{k}+\E\left[\sum_{j=k_0+1}^{k}\Lambda_j^k \left(z_{j-1}-z_{j}\right)\right].
\end{split}
\end{equation}
From ($\ref{condition:z}$) and the increasing property of $\Lambda_{j}^k$ in Lemma $\ref{lemma:2}$, we have
\begin{equation}
\begin{split}
\label{eqn:10}
&\E\left[\left|\sum_{j=k_0+1}^{k} \Lambda_j^k\left(z_{j-1}-z_{j}\right)\right|\right]\\
=&\E\left[\left|\sum_{j=k_0+1}^{k-1}(\Lambda_{j+1}^k-\Lambda_j^k)z_{j}-2\omega_{k}z_{k}+\Lambda_{k_0+1}^k z_{k_0}\right|\right]\\
\leq& \sum_{j=k_0+1}^{k-1}(\Lambda_{j+1}^k-\Lambda_j^k)C_4+\E[|2\omega_{k} z_{k}|]+\Lambda_k^k C_4\\
\leq& (\Lambda_k^k-\Lambda_{k_0}^k)C_4+\Lambda_k^k C_4+\Lambda_k^k C_4\\
\leq& 3\Lambda_k^k C_4=6C_4\omega_{k}.
\end{split}
\end{equation}

Therefore, given the sequence $u_{k}=\lambda_0 \omega_{k}$ that satisfies conditions ($\ref{lemma:3-a}$), ($\ref{lemma:3-b}$) and Lemma $\ref{lemma:3-all}$, for any $k>k_0$, from ($\ref{eqn:9}$) and ($\ref{eqn:10}$), we have
\begin{equation*}
\E[\|\bm{T}_{k}\|^2]\leq u_{k}+3C_4\Lambda_k^k=\left(\lambda_0+6C_4\right)\omega_{k}=\lambda \omega_{k},
\end{equation*}
where $\lambda=\lambda_0+6C_4$.
\end{proof}

\subsection{Weak Convergence of Samples}

In statistical models with latent variables, the gradient is often biased due to the use of stochastic approximation. Langevin Monte Carlo with inaccurate gradients has been studied by \citet{Chen15, Dalalyan18}, which are helpful to prove the weak convergence of samples. Following theorem 2 in \citet{Chen15}, we have
% For instance, \cite{Dalalyan18} studied the effect of bounded bias to Langevin Monte Carlo and 
% \begin{corollary}

\textbf{Corollary 1.} \textit{Under assumptions in Appendix B.1 and the assumption 1 (smoothness and boundness on the solution functional) in \citet{Chen15}, the distribution of $\bm{\beta}_{k}$ converges weakly to the target posterior as $\epsilon\rightarrow 0$ and $k\rightarrow \infty$.}
% \end{corollary}
\begin{proof}
% Given constant stepsize $\epsilon$, we ignore the subscript of $\epsilon$ and consider the case of $\tau=1$. 

% The result for $\tau \neq 1$ is equivalent to rescaling the noise $\eta$ to $\eta/\tau$ and $L(\cdot)$ to $\tau L(\cdot)$ as proved in \citet{Maxim17, Yuchen17, Xu18}.

Since $\bm{\theta}_{k}$ converges to $\bm{\theta}^*$ in SGLD-SA under assumptions in Appendix B.1 and the gradient is M-smooth (\ref{ass_2_1_eq}), we decompose the stochastic gradient $\nabla_{\bm{\beta}}\tilde L(\bm{\beta}_{k}, \bm{\theta}_{k})$ as $\nabla_{\bm{\beta}} L(\bm{\beta}_{k}, \bm{\theta}^*)+\bm{\xi}_{k}+\mathcal{O}(k^{-\alpha})$, where $\nabla_{\bm{\beta}} L(\bm{\beta}_{k}, \bm{\theta}^*)$ is the exact gradient, $\bm{\xi}_k$ is a zero-mean random vector, $\mathcal{O}(k^{-\alpha})$ is the bias term coming from the stochastic approximation and $\alpha\in (0, 1]$ is used to guarantee the consistency in theorem 1. Therefore, Eq.(\ref{eqn-sample}) can be written as
\begin{equation}
\begin{split}
\bm{\beta}_{k+1}&=\bm{\beta}_{k} + \epsilon_k \left(\nabla_{\bm{\beta}} L(\bm{\beta}_{k}, \bm{\theta}^*)+\bm{\xi}_{k}+\mathcal{O}(k^{-\alpha})\right)+\sqrt{2\epsilon_k}\bm{\eta}_{k},\ \where\ \bm{\eta}_{k}\sim \mathcal{N}(0, \bm{I}). \\
\end{split}
\end{equation}
Following a similar proof in \citet{Chen15}, it suffices to show that $\sum_{k=1}^{K} k^{-\alpha}/K\rightarrow 0$ as $K\rightarrow\infty$, which is obvious. Therefore, the distribution of $\bm{\beta}_{k}$ converges weakly to the target distribution as $\epsilon\rightarrow 0$ and $k\rightarrow \infty$.
\end{proof}

\section{Simulation of Large-p-Small-n Logistic Regression}
\label{logistic_reg}
Now we conduct the experiments on binary logistic regression. The setup is similar as before, except $n$ is set to 500, $\Sigma_{i,j}=0.3^{|i-j|}$ and $\bm{\eta}\sim \mathcal{N}(0,\bm{I}/2)$. We set the learning rate for all the three algorithms to $0.001\times k^{-\frac{1}{3}}$ and step size $\omega_{k}$ to $10\times (k+1000)^{-0.7}$. The binary response values are simulated from $\textbf{Bernoulli}(p)$ where $p=1/(1+e^{-X\bm{\beta}-\bm{\eta}})$. As shown in Fig.\ref{fig:Logistic regression}: SGLD fails in selecting the right variables and overfits the data; both SGLD-EM and SGLD-SA choose the right variables. However, SGLD-EM converges to a poor local optimum by mistakenly using $L_1$ norm to regularize all the variables, leading to a large shrinkage effect on $\bm{\beta}_{1:3}$. By contrast, SGLD-SA successfully updates the latent variables and regularize $\beta_{1:3}$ with $L_2$ norm, yielding a better parameter estimation for $\beta_{1:3}$ and a stronger regularization for $\beta_{4-1000}$. Table.\ref{Logistic_regression_UQ_test} illustrates that SGLD-SA consistently outperforms the other methods and is robust to different initializations. We observe that SGLD-EM sometimes performs as worse as SGLD when $v_0=0.001$, which indicates that the EM-based variable selection is not robust in the stochastic optimization of the latent variables.

\begin{table*}[t]  % Linear regression with UQ pars
\caption{Predictive errors in logistic regression based on a test set considering different $v_0$ and $\sigma$}
\label{Logistic_regression_UQ_test}
\begin{center}
\begin{small}
\begin{sc}
\begin{tabular}{lccccr}
\toprule
MAE / MSE & $v_0$=$0.01,\sigma$=1 & $v_0$=$0.001,\sigma$=1 & $v_0$=$0.01, \sigma$=2  &  $v_0$=$0.001, \sigma$=2 &\\
\midrule %% testing
SGLD-SA     & \textbf{0.177} / \textbf{0.108} & \textbf{0.188} / \textbf{0.114} &  \textbf{0.182} / \textbf{0.116} & \textbf{0.187} / \textbf{0.113} &\\
SGLD-EM     & 0.207 / 0.131 & 0.361 / 0.346 & 0.204 / 0.132  & 0.376 / 0.360 \\
SGLD        & 0.295 / 0.272  & 0.335 / 0.301 & 0.350 / 0.338  & 0.337 / 0.319  &\\
\bottomrule
\end{tabular}
\end{sc}
\end{small}
\end{center}
\vskip -0.1in
\end{table*}

\begin{figure}[!ht]
\centering
\includegraphics[scale=0.25]{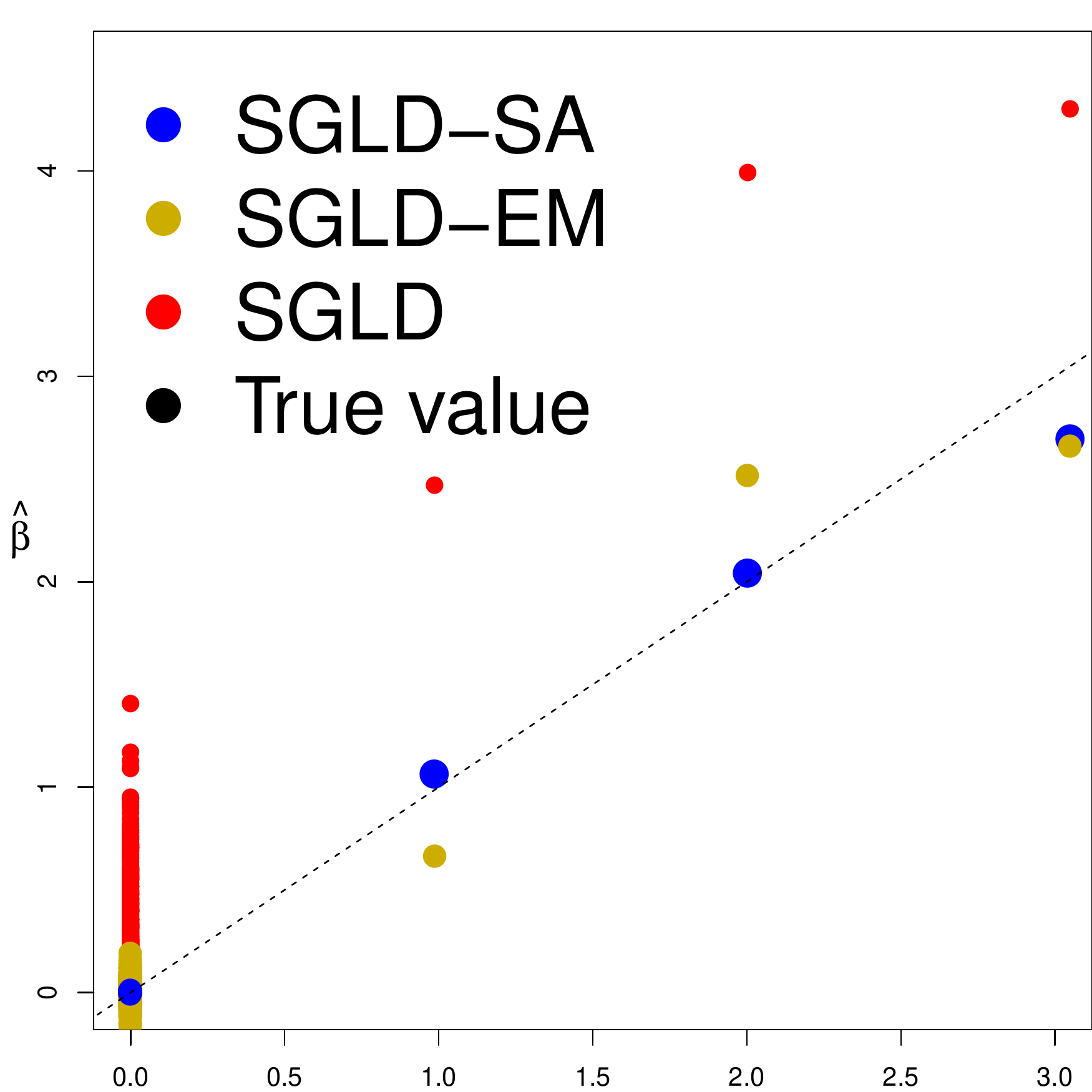}
\caption{Logistic regression simulation when $v_0=0.1$ and $\sigma=1$}
\label{fig:Logistic regression}
\end{figure}

\section{Regression on UCI datasets}
\label{reg_UCI}
We further evaluate our model on five \textbf{UCI} regression datasets and show the results in Table~\ref{UCI}. Following \cite{Jose_adam_15}, we randomly sample 90\% of each dataset for training and leave the rest for testing. We run 20 experiments for each setup with fixed random seeds and report the averaged error rate. Feature normalization is applied in the experiments.
The model is a simple MLP with one hidden layer of 50 units. We set the batch size to 50, the training epoch to 200, the learning rate to 1e-5 and the default $L_2$ to 1e-4. For SGHMC-EM and SGHMC-SA, we apply the SSGL prior on the BNN weights (excluding biases) and fix $a,\nu,\lambda=1$, $b, v_1,\sigma=10$ and $\delta=0.5$. We fine-tune the initial temperature $\tau$ and $v_0$. As shown in Table~\ref{UCI}, SGHMC-SA outperforms all the baselines. Nevertheless, without smooth adaptive update, SGHMC-EM often performs worse than SGHMC. While with simulated annealing where $\tau^{(k)}=\tau\times 1.003^k$, we observe further improved performance in most of the cases.

\begin{table}[!htb]
  \centering
  \small
%   \begin{tabular}{0.8\textwidth}{c|ccccccc}
  \begin{tabular}{c|ccccccc}
    \toprule
    % \multicolumn{4}{c}{Criteo Dataset}                   \\
    % \midrule
    Dataset &   Boston & Yacht &  Energy & Wine & Concrete \\
    Hyperparameters &  1/0.1 & 1/0.1 & 0.1/0.1 & 0.5/0.01 & 0.5/0.07\\
     \midrule
    SGHMC  & 2.783$\pm$0.109 & 0.886$\pm$0.046 & 1.983$\pm$0.092 & 0.731$\pm$0.015 & 6.319$\pm$0.179 \\
    A-SGHMC  &  2.848$\pm$0.126 & 0.808$\pm$0.048 & 1.419$\pm$0.067 & 0.671$\pm$0.019 & 5.978$\pm$0.166 \\
    \midrule
     SGHMC-EM  & 2.813$\pm$0.140 & 0.823$\pm$0.053 & $2.077\pm$0.108 & 0.729$\pm$0.018 & 6.275$\pm$0.169 \\
     A-SGHMC-EM  & 2.767$\pm$0.154 & 0.815$\pm$0.052 & 1.435$\pm$0.069 & 0.627$\pm$0.008 &  5.762$\pm$0.156 \\
     \midrule
     SGHMC-SA  & \textbf{2.779$\pm$0.133} & \textbf{0.789$\pm$0.050} & \textbf{1.948$\pm$0.081} & \textbf{0.654$\pm$0.010} & \textbf{6.029$\pm$0.131}\\
     A-SGHMC-SA  & \textbf{2.692$\pm$0.120} & \textbf{0.782$\pm$0.052} & \textbf{1.388$\pm$0.052} & \textbf{0.620$\pm$0.008} & \textbf{5.687$\pm$0.142} \\
    \bottomrule
  \end{tabular}
  \caption{Average performance and standard deviation of Root Mean Square Error, where $\tau$ denotes the initial inverse temperature and $v_0$ is a hyperparameter in the SSGL prior (Hyperparameters $\tau/v_0$). 
  }
  \label{UCI}
\end{table}

\section{Experimental Setup}
\label{exp_setup}
\subsection{Network Architecture}

The first DNN we use is a standard 2-Conv-2-FC CNN: it has two convolutional layers with a 2 $\times$ 2 max pooling after each layer and two fully-connected layers. The filter size in the convolutional layers is 5 $\times$ 5 and the feature maps are set to be 32 and 64, respectively  \citep{Jarrett09}. The fully-connected layers (FC) have 200 hidden nodes and 10 outputs. We use the rectified linear unit (ReLU) as activation function between layers and employ a cross-entropy loss. 

The second DNN is a 2-Conv-BN-3-FC CNN: it has two convolutional layers with a $2\times2$ max pooling after each layer and three fully-connected layers with batch normalization applied to the first FC layer. The filter size in the convolutional layers is $4\times4$ and the feature maps are both set to 64. We use $256\times64\times10$ fully-connected layers. 

\subsection{Data Augmentation}

The MNIST dataset is augmented by (1) randomCrop: randomly crop each image with size 28 and padding 4, (2) random rotation: randomly rotate each image by a degree in $[-15^{\circ}, +15^{\circ}]$, (3) normalization: normalize each image with empirical mean 0.1307 and standard deviation 0.3081.

The FMNIST dataset is augmented by (1) randomCrop: same as MNIST, (2) randomHorizontalFlip: randomly flip each image horizontally, (3) normalization: same as MNIST, (4) random erasing \citep{Zhong17}.

The CIFAR10 dataset is augmented by (1) randomCrop: randomly crop each image with size 32 and padding 4, (2) randomHorizontalFlip: randomly flip each image horizontally, (3) normalization: normalize each image with empirical mean (0.4914, 0.4822, 0.4465) and standard deviation (0.2023, 0.1994, 0.2010),  (4) random erasing.

\end{document}